\documentclass{article}
\usepackage{ijcai26}
\usepackage{times}
\usepackage{soul}
\usepackage{url}
\usepackage[hidelinks]{hyperref}
\usepackage[utf8]{inputenc}
\usepackage[small]{caption}
\usepackage{graphicx}
\usepackage{amsmath}
\usepackage{amsthm}
\usepackage{booktabs}
\usepackage{algorithm}
\usepackage{algorithmic}
\usepackage[switch]{lineno}
\usepackage{xspace}
\usepackage{thmtools}
\usepackage[inline]{enumitem}
\usepackage{soul}
\usepackage{todonotes}
\usepackage{ifthen}

\newboolean{shortver}
\setboolean{shortver}{false} 

\DeclareRobustCommand{\thmcite}[1]{\cite{#1}}

\newtheorem{theorem}{Theorem}
\newtheorem{lemma}[theorem]{Lemma}
\newtheorem{proposition}[theorem]{Proposition}
\newtheorem{corollary}[theorem]{Corollary}
\theoremstyle{definition}
\newtheorem{definition}[theorem]{Definition}

\newtheorem*{theoremrestated}{Theorem}
\newtheorem*{propositionrestated}{Proposition}

\newcommand{\band}{\bigwedge}
\newcommand{\all}{\forall}
\newcommand{\limp}{\supset}
\newcommand{\FOtwo}{\ensuremath{\textsc{FO}^2}\xspace}
\newcommand{\UTC}{\ensuremath{\textsc{UTC}}\xspace}
\newcommand{\ALCOU}{{\cal ALCO}(U)}
\newcommand{\ALCQOU}{{\cal ALCQO}(U)}
\newcommand{\lang}{{\cal L}\xspace}
\newcommand{\M}{{\cal M}}
\newcommand{\union}{\cup}
\newcommand{\properplus}{\textrm{proper}^+}
\newcommand{\D}{\ensuremath{\mathcal{D}}}
\newcommand{\forget}{\mathit{forget}}
\newcommand{\true}{{\mbox{\footnotesize {\sc true}}}}
\newcommand{\false}{\mbox{\footnotesize {\sc false}}}
\newcommand{\sdo}{\mathit{do}}
\newcommand{\Poss}{\mathit{Poss}}
\newcommand{\LEF}{\mathrm{LE}}
\newcommand{\NLEF}{\mathrm{NLE}}
\newcommand{\size}[1]{|#1|}
\newcommand{\consize}[2]{|#2|_{#1}}
\newcommand{\NWSC}{\mathit{NWS}}
\newcommand{\SNC}{\mathit{SNC}}

\title{On the Size Complexity and Decidability of First-Order Progression%
\ifthenelse{\boolean{shortver}}%
{\footnote{An extended version of this paper with proofs is available at arXiv:YYMM.NNNNN.}}{\footnote{Extended version of identically-titled paper at IJCAI 2026.}}
}

\author{
Jens Cla{\ss}en$^{1}$\thanks{Corresponding Author.}
\and
Daxin Liu$^{2}$\footnotemark[2]
\\
\affiliations
$^1$Department of People and Technology, Roskilde University, Denmark\\
$^2$State Key Laboratory for Novel Software Technology, Nanjing University, China\\
\emails
classen@ruc.dk,
daxin.liu@nju.edu.cn
}

\begin{document}

\maketitle

\begin{abstract}
Progression, the task of updating a knowledge base to reflect action effects, generally requires second-order logic. Identifying first-order special cases, by restricting either the knowledge base or action effects, has long been a central topic in reasoning about actions. It is known that local-effect, normal, and acyclic actions, three increasingly expressive classes, admit first-order progression. However, a systematic analysis of the size of such progressions, crucial for practical applications, has been missing. In this paper, using the framework of Situation Calculus, we show that under reasonable assumptions, first-order progression for these action classes grows only polynomially. Moreover, we show that when the KB belongs to decidable fragments such as two-variable first-order logic or universal theories with constants, the progression remains within the same fragment, ensuring decidability and practical applicability.
\end{abstract}

\section{Introduction}

Our motivation is derived from the aim of enabling intelligent agents to act in real-world scenarios, typically involving incomplete information about the state of the world and unbounded domains of objects.
For this purpose, we use the framework of Situation Calculus \cite{McCarthy/Hayes:1969,Reiter2001}, a widely studied first-order (FO) formalism for reasoning about action and change.
A central problem for various reasoning tasks such as planning, verification, and synthesis, is that of {\em projection}, i.e., to determine if a query formula $\phi$ will hold after the execution of an action $\alpha$, given a theory $\D$ consisting of a knowledge base (KB) describing the current state of affairs and axioms encoding the effects of actions.
The two most common approaches are those of {\em regressing} $\phi$ through $\alpha$ and then testing the result against the original KB, and that of {\em progressing} the KB through $\alpha$ and then testing the original $\phi$ against this updated KB.
Arguably, regression is less suited for practical applications because it needs to be performed for every query formula anew, while it is sufficient to determine a progression once for each action.

We thus focus our attention on progression in this paper.
Unfortunately, there is a strong negative result due to Lin and Reiter \shortcite{DBLP:journals/ai/LinR97}, which says that representing the progression of an FO theory requires second-order (SO) logic in general.
Identifying FO special cases, by restricting the KB or action effects, has therefore long been a central topic in reasoning about actions and change.
Three increasingly expressive\footnote{The inclusion is not exact, as we discuss in Sec.~\ref{sec:SpaceComplexity}.} such classes are the local-effect, normal \cite{DBLP:conf/ijcai/LiuL09}, and acyclic \cite{DBLP:conf/ijcai/0002C24} ones, all of which were shown to be effectively FO progressable.

However, for practical feasibility, at least two additional aspects are crucial that we address in this paper.
First, a formal analysis of the size complexity of such progressions has been missing.
After formal preliminaries in Sec.~\ref{sec:Preliminaries}, we show in Sec.~\ref{sec:SpaceComplexity} that under certain assumptions, the FO progression for the aforementioned classes grows only polynomially, or even linearly. 
Besides, we want to guarantee that queries can be effectively evaluated against the progression result, for instance by resorting to decidable fragments of first-order logic (FOL) such as the two-variable fragment \FOtwo \cite{DBLP:journals/bsl/GradelKV97} or universal theories with constants \UTC \cite{DBLP:conf/kr/Arenas18}.
We hence show these fragments to be closed under progression in case of \FOtwo (Sec.~\ref{sec:FOTwo}) and \UTC (Sec.~\ref{sec:UTC}). We discuss related work and conclude in Sec.~\ref{sec:RelatedWork}.

\section{Preliminaries}
\label{sec:Preliminaries}

We assume familiarity with FOL including equality, here denoted by $\lang$. We will use ``dot'' notation to indicate that the quantifier preceding the dot has maximum scope, e.g., $\all x. \phi(x) \supset \psi(x)$ stands for $\all x (\phi(x) \supset \psi(x))$.  We omit leading universal quantifiers and assume free variables are implicitly $\forall$-quantified,  e.g., we identify $\phi(x)$ with $\all x. \phi(x)$.  We use
$\psi \Leftrightarrow \psi$ to mean $\phi$ and $\psi$ are logically
equivalent. We denote by $\phi(\mu/\mu')$ the formula obtained by simultaneously substituting every occurrence of expression (term or formula) $\mu$  in $\phi$ with $\mu'$.
Let $\lang^2$ denote the SO extension of $\lang$.

\subsection{Situation Calculus}

The Situation Calculus \cite{Reiter2001} $\lang_{sc}$ is a three-sorted
language with some SO features suitable for describing
dynamic worlds. The sorts are used to distinguish \emph{action},
\emph{situation}, and \emph{object}. $\lang_{sc}$ uses a distinct constant $S_0$ to denote the initial situation; a binary function $\sdo(a,s)$ to generate the successor
situation of situation $s$ from doing action $a$; a binary relation
$\Poss(a,s)$ to express action $a$ being executable in situation $s$. We use $a$ and $s$ for variables of sort action and situation, respectively. 
Given a ground action $\alpha$, we use $S_\alpha$ to refer
to the situation $do(\alpha,S_0)$. \emph{Fluents} are predicates whose last argument is a situation term; we omit function fluents for simplicity.
A (possibly SO) formula $\phi$ is {\em uniform} in a situation
term $s$ if $\phi$ mentions no situation terms except
$s$, quantifies no situation variables, and mentions no  
$\mathit{Poss}$.

\begin{definition}[Basic Action Theory]
  \label{def:BAT}
A \emph{basic action theory}
(BAT) in $\lang_{sc}$ is a set of axioms
\[\D=\Sigma_{ind} \cup \D_{ap} \cup \D_{ss} \cup \D_{una} \cup \D_{S_0}, \textrm{where}\]
\begin{enumerate}
\item $\Sigma_{ind}$ is a set of domain-independent axioms \cite{DBLP:journals/etai/LevesquePR98} that ensure
  situations are well-structured; 
\item $\D_{ap}$ is a set of action precondition axioms;
\item $\D_{ss}$ is a set of successor state axioms (SSAs), one for
  each fluent predicate $F$, of the form
  \[F(\vec{x},\sdo(a,s)) \equiv \gamma^+_{F}(\vec{x},a,s) \vee \neg
    \gamma^-_{F}(\vec{x},a,s) \land F(\vec{x},s),\] where
  $\gamma^+_{F}$ and $\gamma^-_{F}$, the conditions under which action $a$
  causes $F$ to become true and false, respectively, have to be uniform in $s$ and satisfy
  $\D \models \neg (\gamma^+_F \land \gamma^-_F)$;
\item $\D_{una}$ is the set of unique names axioms for actions:
  $A(\vec{x}) \neq A'(\vec{y})$, and
  $A(\vec{x}) = A(\vec{y}) \supset \vec{x} = \vec{y}$, where $A,A'$
  are distinct action symbols;
\item $\D_{S_0}$, the initial database (or initial KB), is a finite
  set of sentences uniform in $S_0$.
\end{enumerate}
\end{definition}

\subsection{Progression}

A progression should retain all proper information, i.e., logical
entailment in terms of the future of the initial KB. Lin and Reiter \shortcite{DBLP:journals/ai/LinR97} provide a model-theoretical definition of progression and prove that progression is SO definable. Vassos and Levesque \shortcite{DBLP:journals/ai/VassosL13} prove that any correct progression is equivalent to the SO representation of \cite{DBLP:journals/ai/LinR97}. Here we use the definition from \cite{DBLP:journals/ai/VassosL13}.

Let $\D_{ss}[\alpha,S_0]$ be the instantiation of
$\D_{ss}$ wrt $\alpha$ and $S_0$, i.e. $\D_{ss}[\alpha,S_0] \Leftrightarrow
\band_{F} \forall \vec{x}. F(\vec{x},\sdo(\alpha,S_0)) \equiv \Phi_F(\vec{x},\alpha,S_0)$, where
$\Phi_F$ denotes the RHS of the SSA for fluent $F$. Let
$F_1,\ldots F_n$ be the set of all fluents. For each fluent $F_i$, we
introduce a new predicate symbol $P_i$. Let $\phi \uparrow S_0$ be the result of replacing every $F_i(\vec{t},S_0)$ in $\phi$ by
$P_i(\vec{t})$ and call $P_i$ the {\em lifting predicate} for $F_i$. For a
finite set of formulas $\Sigma$, we also use $\Sigma$ to denote the
conjunctions of its elements. 
\begin{definition}
\label{def:progression}
     Let $\D$ be a basic action theory, $\alpha$ an executable  ground action (under $\D_{ap}$), 
and
  $\D_{S_\alpha}$ a set of (FO or SO) sentences
  uniform in $S_\alpha$. We say that $\D_{S_\alpha}$ is a
  \emph{progression} of $\D_{S_0}$ wrt $\alpha, \D$ iff $\D_{una} \cup \D_{S_\alpha}  \Leftrightarrow \D_{una}~\land$
\begin{equation}
\label{eq:linreiter}
\exists \vec{R}. \{(\D_{S_0} \union \D_{ss}[\alpha,S_0])\uparrow S_0\}(\vec{P} / \vec{R})
\end{equation}
where $\vec{R}=\{R_1,\ldots,R_n\}$ are SO predicate variables.
\end{definition}

Def.~\ref{def:progression} is correct in the sense that for any $\phi$ uniform in $S_\alpha$, if $\D_{S_\alpha}$ is a progression of $\D_{S_0}$ wrt $\alpha, \D$, then $\D \models \phi \textrm{ iff } (\D-\D_{S_0})\union \D_{S_\alpha} \models \phi$ \cite[Thm.~1]{DBLP:journals/ai/VassosL13}.

\subsection{Forgetting}
\label{subsec:FirstOrderProgressableActionTheories}

There are two approaches to identifying fragments of the Situation Calculus where progression is FO definable: one that restricts the structure of the initial theory, and one that restricts action effects. We focus on the latter in this paper, in particular, we consider \emph{local-effect action theories} (BAT-LE), \emph{normal action theories} (BAT-NR), and \emph{acyclic action theories} (BAT-AC). The next sections give definitions for these classes, presents the associated progression methods, and provides size complexity results as new contribution.

The progression of a theory through an action can be interpreted in terms of {\em forgetting} all information about the original state of the world, and only keeping information relevant to the new state.
Intuitively, forgetting a ground atom (or predicate) in a theory leads to a weaker theory that entails the same set of sentences that are ``irrelevant'' to the atom (or predicate). For a sentence $\phi$ and ground atom $P(\Vec{t})$, let $\phi[P(\Vec{t})]$ be the formula obtained by replacing every occurrence of the form $P(\Vec{t}')$ in $\phi$ with $[\Vec{t}=\vec{t}' \land P(\vec{t})] \vee [\Vec{t}\neq \vec{t}' \land P(\vec{t}')]$. Clearly, $\phi \Leftrightarrow \phi[P(\Vec{t})]$. Let $\phi^{P(\Vec{t})}_+$ and $\phi^{P(\Vec{t})}_-$ be formulas obtained by replacing $P(\Vec{t})$ in  $\phi[P(\Vec{t})]$ with $\true$ and $\false$, respectively.
\begin{theorem}[\thmcite{lin1994forget}]
\label{thm:forgetting}
Let $P(\Vec{t})$ be a ground atom, $P$ a predicate symbol, and $\phi$ a sentence. Then
    \begin{itemize}
        \item $\forget(\phi, P(\Vec{t})) \Leftrightarrow \phi^{P(\Vec{t})}_+ \vee \phi^{P(\Vec{t})}_-$;
        \item $\forget(\phi, P) \Leftrightarrow \exists R.\phi(P/R)$, where $R$ is a SO variable.
    \end{itemize}
\end{theorem}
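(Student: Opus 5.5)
The plan is to work from the model-theoretic definition of forgetting due to Lin and Reiter, since the paper does not restate it. For an atom or predicate $p$, call two structures $M, M'$ \emph{$p$-similar}, written $M \sim_p M'$, if they have the same domain and agree on every symbol. For a predicate $P$ this means all symbols except $P$. For a ground atom $P(\vec t)$ it means they may differ only on whether $P$ holds at the single tuple denoted by $\vec t$. Then $\forget(\phi,p)$ is, by definition, any sentence whose models are exactly the structures $M$ for which some $M' \sim_p M$ satisfies $M' \models \phi$. The goal in both items is to show that the proposed formula has exactly this set of models.

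I will start with the second item, which is the easier one. By standard second-order semantics, $M \models \exists R.\phi(P/R)$ iff there is a relation $R^*$ of the arity of $P$ over the domain of $M$ with $M[R \mapsto R^*] \models \phi(P/R)$. Let $M'$ be $M$ with $P$ reinterpreted as $R^*$. Then $M' \models \phi$ and $M' \sim_P M$, since $R$ does not occur in $\phi$. Conversely, if $M' \sim_P M$ and $M' \models \phi$, I take $R^* = P^{M'}$ as the witness. Both directions therefore hold.

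For the first item, let $\vec d$ be the denotation of $\vec t$ in $M$. Because $\vec t$ is ground and all other symbols are shared, $\vec d$ is also the denotation of $\vec t$ in any $M' \sim_{P(\vec t)} M$. The key step is a substitution lemma: for every structure $N$,
\[
N \models \phi^{P(\vec t)}_+ \quad\text{iff}\quad N^{+} \models \phi,
\]
where $N^+$ is $N$ with $P^{N}$ replaced by $P^N \cup \{\vec d\}$; the analogous statement holds for $\phi^{P(\vec t)}_-$ with $N^- $ using $P^N\setminus\{\vec d\}$.

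I prove this lemma by induction on subformulas of $\phi$, under arbitrary variable assignments, using the rewritten form $\phi[P(\vec t)]$. The only nontrivial case is an occurrence $P(\vec t')$, which has been replaced by $[\vec t=\vec t' \land \true] \vee [\vec t\neq \vec t' \land P(\vec t')]$. This disjunction holds in $N$ under an assignment exactly when either $\vec t'$ denotes $\vec d$, or it does not and $P(\vec t')$ holds in $N$. That is precisely when $P(\vec t')$ holds in $N^+$. Other atoms, the connectives and the quantifiers go through unchanged, because $N$ and $N^+$ share their domain and all other symbols. This step is the main obstacle. The care needed is that $\vec t'$ may contain bound variables, so the equality test has to be evaluated under the current assignment, and the claim that $\phi$ is equivalent to $\phi[P(\vec t)]$ should be checked along the way.

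With the lemma in hand, the first item follows quickly. The structures $M' \sim_{P(\vec t)} M$ are exactly $M^+$ and $M^-$. Hence some such $M'$ satisfies $\phi$ iff $M^+ \models \phi$ or $M^- \models \phi$. By the lemma this holds iff $M \models \phi^{P(\vec t)}_+ \vee \phi^{P(\vec t)}_-$.
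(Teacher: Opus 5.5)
Your argument is correct. The paper gives no proof of this statement and simply imports it from Lin and Reiter; your route is the standard model-theoretic proof from that source: define similarity of structures with respect to $P(\vec t)$ and $P$, observe that the only structures $P(\vec t)$-similar to $M$ are $M^+$ and $M^-$, and prove $N \models \phi^{P(\vec t)}_+$ iff $N^+ \models \phi$ by induction over $\phi[P(\vec t)]$.

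Two small details are worth stating explicitly. First, in the lemma $\vec d$ should be the denotation of $\vec t$ in $N$ itself, not in $M$. Second, when $\vec t'$ is syntactically $\vec t$, the atom in the second disjunct is also replaced; this is harmless because $\vec t \neq \vec t$ is false.
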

\noindent Consequently, progression in Eq.~\eqref{eq:linreiter} amounts to adding the effects of the action ($\D_{ss}[\alpha,S_0]$) and forgetting the truth of fluents at situation $S_0$ ($\exists$-quantified from the outside).
Thus, progression is FO definable in cases where the SO quantifiers in \eqref{eq:linreiter} can be eliminated.

W.l.o.g., we assume $\gamma^{\pm}_F(\vec{x},a,s)$ in SSAs are disjunctions of formulas of the form (one for each action symbol $A$)
$$ \exists \vec{z}. \bigl( a = A(\vec{v}) \land \phi^\pm_A(\vec{x},\vec{z},s) \bigr)$$
where $A(\vec{v})$ is an action term, $\vec{v}$ may contain
      some variables from $\vec{x}$, and the remaining variables are
      $\vec{z}$; $\phi^{\pm}_A(\vec{x},\vec{z},s)$ is a formula uniform in $s$ with no action terms, and where all free variables symbols are among $\vec{x}$, $\vec{z}$, and $s$. Under this assumption and $\D_{una}$,       
      for every ground action $\alpha = A(\vec{t})$, 
      \begin{equation}
      \label{eq:gammasimplified}               
      \gamma^{\pm}_F(\vec{x},\alpha,S_0) \Leftrightarrow \exists \vec{z}. \bigl( \vec{t}= \vec{v} \land \phi^\pm_A(\vec{x},\vec{z},S_0) \bigr).
      \end{equation}
      Henceforth, whenever we refer to $\gamma^{\pm}_F(\vec{x},\alpha,S_0)$, we mean the RHS of Eq.~\eqref{eq:gammasimplified} and this applies to $\D_{ss}[\alpha,S_0]$ as well.

\section{Size Complexity of Progression}
\label{sec:SpaceComplexity}

Previous work on the FO progression of acyclic action theories \cite{DBLP:conf/ijcai/0002C24} identified some complexity results in terms of computation time.
However, the resulting theory's size is equally crucial, since it directly affects the computational costs of checking queries against the theory.
Here, we identify upper bounds for the size of FO progressions for BAT-LE, BAT-NR and BAT-AC theories.

Formally, the size $\size{\phi}$ of a formula $\phi$ is 
the number of atoms (including equalities), 
boolean connectives from $\{\land,\lor,\lnot\}$,
and quantifiers from $\{\exists,\forall\}$
it contains.
  As mentioned, a formula with free variables is understood as
  its universal closure. Where possible, leading universal quantifiers are omitted, and do not count towards the size.
  Also, as usual, we identify finite theories with the conjunction of formulas they contain.
  To avoid confusion, when we talk about the size of such a finite theory, we always mean the size of the formula it presents, and {\em not} the number of formulas in the set.

The results in this section will present bounds for the size of the progression, which is understood as the set of sentences $\D_{S_\alpha}$ that replaces $\D_{S_0}$ in the BAT to reflect the state of the world in the new situation after executing ground action $\alpha$.
In particular (cf.~Def.~\ref{def:progression}), the axioms encoding unique names for actions, $\D_{una}$, are not considered part of the progression, but may be used during the construction of a progression for simplification purposes.
We further assume that the vocabulary of action and fluent symbols is fixed and finite.

\subsection{BAT-LE}

The idea behind local-effect action theories \cite{DBLP:conf/ijcai/LiuL09} is to impose constraints on SSAs such that any ground action only affects finitely many objects, namely those explicitly mentioned in the action's arguments. In consequence, only finitely many fluent instances are affected.

Formally, a ground action $\alpha=A(\vec{t})$ is said to have local effects on a fluent $F$ if $\gamma^{\pm}_{F}(\vec{x},\alpha,S_0)$ in Eq.~\eqref{eq:gammasimplified} is of the form  $\exists \vec{z}.[\vec{t}=\vec{v}\land \phi_A^{\pm}(\vec{z},S_0)]$. We call a BAT a BAT-LE (wrt $\alpha$) if $\alpha$ is local-effect to all fluents. For such a BAT, $\D_{una}$ implies 
$\gamma^{\pm}_F(\vec{x},\alpha,S_0) \equiv \bigvee_i (\vec{x} = \vec{t}_i \land \phi_A^{\pm}(\vec{x},S_0))$ (as all possible occurrences of $\vec{x}$ on the RHS of Eq.~\eqref{eq:gammasimplified} are among $\vec{v}$). Hence, there are only finitely many affected fluent instances, i.e., the \emph{characteristic set} $\Omega(s)$ given by
\begin{align*}
\{F(\vec{t},s) \mid (\vec{x} = \vec{t} ~\text{appears in}~ \gamma_F^+(\vec{x},\alpha,s) ~\text{or}~ \gamma_F^-(\vec{x},\alpha,s)\}.
\end{align*}
Let $\D_{ss}[\Omega]$ then be the set of instantiations of $\D_{ss}$ wrt $\Omega$:
\begin{align*}
\D_{ss}[\Omega] = \{F(\vec{t},S_\alpha) \equiv \Phi_F(\vec{t},\alpha,S_0) \mid F(\vec{t},s) \in \Omega(s) \},
\end{align*}
where $\Phi_F(\vec{t},\alpha,S_0)$ is the right-hand side of the SSA for $F$.
Liu and Lakemeyer \shortcite{DBLP:conf/ijcai/LiuL09} observe that
for BAT-LE, forgetting predicates in the progression as in Eq.~\eqref{eq:linreiter} can be done via forgetting the ground atoms in $\Omega$:
If $\theta$ is a consistent, finite set of ground literals, let $\phi[\theta]$ denote the result of replacing every occurrence of an atom $P(\vec{t})$ in $\phi$ by the formula
\begin{align}
\label{eq:LEsubstitute}
\bigvee_{j=1}^k (\vec{t} = \vec{t}_j \land v_j) \lor (\bigwedge_{j=1}^k \vec{t} \neq \vec{t}_j) \land P(\vec{t}),
\end{align}
where for $j=1,\dots,k$, $\theta$ specifies truth value $v_j$ for $P(\vec{t}_j)$.
Theorem 3.6 of \cite{DBLP:conf/ijcai/LiuL09} states that the following is then a progression of $\D_{S_0}$ wrt $\alpha,\D$:
\begin{equation}
\label{eq:forgetlocal}
\bigwedge \D_{una} \land \bigvee_{\theta\in\M(\Omega(S_0))} (\D_{S_0} \cup \D_{ss}[\Omega])[\theta](S_0/S_\alpha)
\end{equation}
Here, $\M(\Omega(S_0))$ denotes the set of all maximal consistent sets of literals over the atoms in $\Omega(S_0)$. The big disjunction in Eq.~\eqref{eq:forgetlocal} essentially amounts to $\forget(\D_{S_0} \cup \D_{ss}[\Omega], \Omega_{S_0})$ (which means iteratively forgetting atoms in $\Omega(S_0)$). The claim in the following theorem was already stated in \cite{DBLP:conf/ijcai/LiuL09}, however, without a proof.
\begin{theorem}
\label{thm:SizeLocalEffect}
If $\D$ is BAT-LE wrt ground action $\alpha$, then its progression can be represented by a FO theory of size
$O(2^c (n+m))$, where
$c$ is the size of the characteristic set,
$n$ is the size of $\D_{S_0}$,
and $m$ is the size of $\D_{ss}$.
\end{theorem}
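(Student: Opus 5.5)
We start from the representation in Eq.~\eqref{eq:forgetlocal}, which Theorem 3.6 of \cite{DBLP:conf/ijcai/LiuL09} guarantees is a progression of $\D_{S_0}$ wrt $\alpha,\D$. Since $\D_{una}$ is not counted as part of the progression, we only need to bound the size of the disjunction
\[
\bigvee_{\theta\in\M(\Omega(S_0))} (\D_{S_0} \cup \D_{ss}[\Omega])[\theta](S_0/S_\alpha).
\]
The substitution $S_0/S_\alpha$ does not change sizes, so the plan is to bound three things separately: the number of disjuncts, the size of $\D_{S_0}\cup\D_{ss}[\Omega]$, and the blow-up caused by applying $[\theta]$.

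\textbf{Counting disjuncts.} Every $\theta\in\M(\Omega(S_0))$ is a maximal consistent set of literals over the $c$ atoms of $\Omega(S_0)$. Hence $|\M(\Omega(S_0))|=2^c$, and there are $2^c-1$ disjunction connectives.

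\textbf{Size of $\D_{ss}[\Omega]$.} Each element of $\Omega$ contributes one instance $F(\vec t,S_\alpha)\equiv\Phi_F(\vec t,\alpha,S_0)$. Here $\Phi_F$ is the right-hand side of the SSA for $F$, with $\gamma^\pm_F$ simplified via Eq.~\eqref{eq:gammasimplified}. The simplification only replaces $a=A(\vec v)$ by equalities $\vec t=\vec v$; because the vocabulary is fixed, this changes sizes by at most a constant factor. Writing $\equiv$ with $\land,\lor,\lnot$ duplicates both sides, again a constant factor. This gives $|\D_{ss}[\Omega]|=O(c\cdot m)$.

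This estimate carries an extra factor of $c$, which is the obstacle to the claimed bound $O(2^c(n+m))$. I would remove it in one of two ways.
\begin{enumerate*}[label=(\roman*)]
\item Group the instances per fluent: write $\bigwedge_{F(\vec t)\in\Omega}$ as a single universally quantified axiom $\forall\vec x.\,(\bigvee_i \vec x=\vec t_i)\supset (F(\vec x,S_\alpha)\equiv\Phi_F(\vec x,\alpha,S_0))$. The tuples $\vec t_i$ already occur in $\gamma^\pm_F$, so this has size $O(m)$ plus terms already counted.
\item Alternatively, absorb the factor $c$ into $2^c$, since $c\,2^c = O(2^{c}\cdot 2^{c})$.
\end{enumerate*}
Option (ii) is too weak, so I would use (i). The restricted axiom is equivalent to $\D_{ss}[\Omega]$, so the correctness argument of Eq.~\eqref{eq:forgetlocal} is unaffected.

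\textbf{Applying $[\theta]$.} For fixed $\theta$, the substitution \eqref{eq:LEsubstitute} replaces each fluent atom over $S_0$ whose predicate occurs in $\Omega$. The replacement is a disjunction over those $\vec t_j$ of the same predicate, so naively each atom grows by a factor $O(c)$. To avoid this multiplication, I would point out that when the $\vec t_j$ have the same predicate, the truth values $v_j$ can be grouped. The substituted formula becomes
\[
\Bigl(\bigvee_{j:v_j=\true}\vec t=\vec t_j\Bigr)\ \lor\ \Bigl(\bigwedge_j \vec t\neq\vec t_j\land P(\vec t)\Bigr),
\]
which still has size $O(c)$ per atom. Because of this, the cleaner plan is to introduce, once per fluent, a definitional abbreviation. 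A more honest alternative is to treat $c$ as bounded by the constant number of argument-tuple positions in $\gamma^\pm_F$: under BAT-LE with a fixed vocabulary, each atom's replacement depends only on the equalities already present in the SSAs, so its size is $O(1)$ relative to the SSA size. I expect this step, showing that the per-atom substitution does not introduce a multiplicative $c$, to be the main obstacle. I would first try to charge the equality disjuncts to the occurrences of $\vec x=\vec t_i$ in $\D_{ss}$, i.e.\ to $m$. Granting this, each disjunct has size $O(n+m)$, and summing over the $2^c$ disjuncts yields $O(2^c(n+m))$.
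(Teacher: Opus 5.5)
You use the same decomposition as the paper: start from Eq.~\eqref{eq:forgetlocal}, count the $2^c$ disjuncts, bound $\size{\D_{S_0}\cup\D_{ss}[\Omega]}$, and bound the blow-up caused by $[\theta]$. However, your proof has a gap at exactly the step you flag. You never establish that $\size{\phi[\theta]}=O(\size{\phi})$; you conclude only by ``granting this''. None of the fixes you list closes the gap:
\begin{itemize}
\item Grouping the truth values still leaves a replacement of size $O(c\cdot a)$ per atom occurrence.
\item Charging the equalities to $m$ fails. Eq.~\eqref{eq:LEsubstitute} places a fresh copy of $\bigvee_j(\vec t=\vec t_j\land v_j)$ and $\bigwedge_j\vec t\neq\vec t_j$ at \emph{every} occurrence of a fluent atom in $\D_{S_0}$. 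There can be $\Theta(n)$ such occurrences, so this new material can have total size $\Theta(n\cdot c)$, and none of it occurs in $\D_{ss}$.
\item A ``definitional abbreviation'' needs a fresh predicate symbol. The result is then no longer a set of sentences over the original vocabulary that is equivalent to \eqref{eq:linreiter}, so it is not a progression in the sense of Def.~\ref{def:progression}.
\end{itemize}

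The missing idea is one you mention in passing but do not pursue or justify correctly: $c$ itself is bounded by a constant. Every element of $\Omega(S_0)$ has the form $F(\vec t,S_0)$, where $F$ is one of the $l$ fluents and $\vec t$ is a tuple over the $b$ arguments of $\alpha$. Hence $c\le l\cdot b^a$, and $l,a,b$ are constants under the fixed-vocabulary assumption. The paper's own proof bounds things crudely: $\size{\D_{ss}[\Omega]}\le cm+c-1$, and each per-atom replacement has size $\le 4ca+4c+3$. This gives $2^c(n+cm+c)(4ca+4c+3)$, and the paper then absorbs the polynomial factors in $c$ and $a$ into the $O(\cdot)$. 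You are right that this absorption needs justification. The justification is precisely the remark after the proof that $c=O(1)$. Once you use that observation, your per-fluent grouping in (i) becomes unnecessary, and the bound in fact collapses to $O(n+m)$.
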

\begin{proof}
The size of $\D_{S_0}$ is $n$ by assumption, and the size of $\D_{ss}[\Omega]$ is bounded by $cm + c - 1$,
since there are $c$ many formulas in $\D_{ss}[\Omega]$, each of size less or equal to $m$, with $(c-1)$ conjunction symbols between them.
If $\phi$ denotes $\D_{S_0} \cup \D_{ss}[\Omega]$, we hence have $\size{\phi} \leq n + cm + c$.
\par
Observe that for any $\theta\in\M(\Omega(S_0))$, the number of elements in $\theta$ is the same as the number of elements in $\Omega$, i.e., $c$.
In particular, in \eqref{eq:LEsubstitute}, $k \leq c$.
If $a$ denotes the maximal arity of any predicate $P$ in $\phi$, $\vec{t} = \vec{t}_j$ abbreviates a formula of size less than $2a$, and so the size of the formula in \eqref{eq:LEsubstitute} is bounded by $4ca + 4c + 3$.
Therefore, for any $\phi$, $\size{\phi[\theta]} \leq (4ca + 4c + 3)\size{\phi}$.
\par
Finally, observe that the step $(S_0/S_\alpha)$ substitutes one constant by another, and hence does not change the size of the formula.
Also, there are $2^c$ many $\theta$'s in $\M(\Omega(S_0))$.
As argued above, the size of $\D_{una}$ can be ignored.
In summary, we can estimate the size of the progression to be no more than $2^c \cdot (n + cm + c) \cdot (4ca + 4c +3)$.
Since $a$ is constant under the assumption of a fixed vocabulary, we get that the size of the progression is in $O(2^c (n+m))$.
\end{proof}
We remark that $c$ can be bounded by a constant.
Note that the elements of $\Omega[S_0]$ are of the form $F(\vec{t},S_0)$, where $F$ is a local-effect fluent, and the $\vec{t}$ have to be chosen from the arguments of ground action $\alpha$.
Therefore, if $l$ is the number of local-effect fluents, $b$ is the arity of $\alpha$, and $a$ is again the maximal arity among predicates, $c \leq l \cdot b^a$, where all of $l$, $a$, and $b$ are constant under the assumption of a fixed vocabulary.
It can hence also be argued that the size of the progression of a BAT-LE is linear in the size of the original theory.

\subsection{BAT-NR}

To facilitate global effects, BAT-NR relaxes the restriction on SSAs but imposes additional assumptions on $\D_{S_0}$ so that one can use a variant Ackermann's Lemma \cite{ackermann1935untersuchungen} to eliminate SO quantifiers. Formally, a finite theory $T$ is \emph{semi-definitional} wrt a predicate $P$ if
the only occurrence of $P$ in $T$ is of the form $P(\vec{x}) \supset \phi(\vec{x})$ or $\psi(\vec{x}) \supset P(\vec{x})$. $\phi$ (respectively $\psi$) is called a necessary (sufficient) condition of $P$. Let $\mathrm{WSC}_P$ be the set formulas $\psi(\vec{x})$ such that $\psi(\vec{x}) \supset P(\vec{x})$ is in $T$, and $\mathrm{SNC}_P$ be the set of formulas $\phi(\vec{x})$ with $P(\vec{x}) \supset \phi(\vec{x})$ in $T$. Then the following holds:
\begin{theorem}[\thmcite{DBLP:conf/ijcai/LiuL09}]
\label{thm:semidefforget}
Let $T$ be finite and semi-definitional wrt $P$, and $T'$ the set of
sentences in $T$ not mentioning $P$. Then
$\forget(T,P) \Leftrightarrow T' \land \band_{\psi \in \mathrm{WSC}_P, \phi \in \mathrm{SNC}_P} \all
\vec{x}.\psi(\vec{x}) \supset \phi(\vec{x})$.
\end{theorem}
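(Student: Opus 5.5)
The plan is to prove the equivalence semantically, via the second bullet of Theorem~\ref{thm:forgetting}: $\forget(T,P) \Leftrightarrow \exists R.\, T(P/R)$. Write $T$ as
\[
T' \land \band_{\psi\in \mathrm{WSC}_P} \all \vec{x}.\psi(\vec{x})\supset P(\vec{x}) \land \band_{\phi\in\SNC_P}\all\vec{x}.P(\vec{x})\supset\phi(\vec{x}).
\]
By semi-definitionality, $P$ occurs only as the displayed atom. Hence the $\psi$'s, the $\phi$'s and $T'$ do not mention $P$. This is the one place where the hypothesis is used: it ensures that the conditions do not change when $P$ is replaced by $R$. The conjunctions over $\mathrm{WSC}_P$ and $\SNC_P$ collapse into single formulas $\Psi(\vec{x}) = \bigvee_{\psi\in\mathrm{WSC}_P}\psi(\vec{x})$ and $\Phi(\vec{x}) = \bigwedge_{\phi\in\SNC_P}\phi(\vec{x})$. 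This gives
\[
\exists R.\, T(P/R) \Leftrightarrow T' \land \exists R.\,\bigl[\all\vec{x}.\Psi(\vec{x})\supset R(\vec{x}) \land \all\vec{x}.R(\vec{x})\supset \Phi(\vec{x})\bigr],
\]
since $T'$ does not contain $R$ and can be pulled out of the quantifier.

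The core step is then Ackermann-style elimination. I will show that the bracketed existential is equivalent to $\all\vec{x}.\Psi(\vec{x})\supset\Phi(\vec{x})$.

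For the forward direction, suppose some interpretation of $R$ satisfies both implications. Chaining them pointwise gives $\Psi\supset\Phi$.

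For the backward direction, take a model $M$ of $\all\vec{x}.\Psi\supset\Phi$. Interpret $R$ as the extension of $\Psi$ in $M$, that is, $R^M=\{\vec{d}\mid M\models\Psi(\vec{d})\}$. Then $\Psi\supset R$ holds trivially, and $R\supset\Phi$ follows from the assumption. The witness does not disturb the truth of $\Psi$, $\Phi$ or $T'$, because none of them mentions $R$. This is again where semi-definitionality matters.

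Finally, I distribute. The formula $\all\vec{x}.\bigl(\bigvee_\psi\psi\bigr)\supset\bigl(\bigwedge_\phi\phi\bigr)$ is equivalent to $\band_{\psi,\phi}\all\vec{x}.\psi(\vec{x})\supset\phi(\vec{x})$, which yields the stated form.

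Two edge cases need checking. If $\mathrm{WSC}_P$ is empty, the right-hand conjunction is empty, i.e.\ $\true$. This is correct: take $R=\emptyset$ in the backward direction. If $\SNC_P$ is empty, take $R$ to be everything. Both are consistent with the convention that the empty disjunction is false and the empty conjunction is true.

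I do not expect a serious obstacle. The only delicate point is correctly using the assumption that $P$ does not occur inside the conditions. Without it, the witness $R:=\Psi$ could be circular.
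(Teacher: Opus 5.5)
Your proof is correct and follows the intended route. Collapsing the conditions into $\Psi \supset R$ and $R \supset \Phi$, chaining them for one direction and taking $R := \Psi$ as witness for the other, is exactly the special case of Ackermann's Lemma that the paper alludes to when it cites this result from Liu and Lakemeyer without giving a proof (and the empty $\mathrm{WSC}_P$ and $\mathrm{SNC}_P$ cases are already covered by that single witness).
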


\begin{proposition}
\label{prop:ssatrans}
In any model of $\D$, the SSA for $F$ instantiated by ground action $\alpha$, $\D^F_{ss}[\alpha,S_0]$, is equivalent to:
\begin{subequations}
  \begin{align}
    & \label{eq:ssa1} \neg \gamma^+_{F}(\vec{x},\alpha,S_0) \wedge F(\Vec{x},S_\alpha) \supset F(\Vec{x},S_0) \\
    & \label{eq:ssa2} F(\Vec{x},S_0) \supset \gamma^-_{F}(\vec{x},\alpha,S_0) \vee F(\Vec{x},S_\alpha)  \\
    & \label{eq:ssa3} \gamma^+_F(\vec{x},\alpha,S_0) \supset F(\Vec{x},S_\alpha)  \\
    & \label{eq:ssa4} \gamma^-_F(\vec{x},\alpha,S_0) \supset \neg F(\Vec{x},S_\alpha)   
  \end{align}
\end{subequations}
\end{proposition}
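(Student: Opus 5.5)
The plan is a purely propositional argument, done pointwise for a fixed tuple of objects $\vec{x}$. Fix a model $M$ of $\D$ and an assignment to $\vec{x}$. Abbreviate $p^+ := \gamma^+_F(\vec{x},\alpha,S_0)$, $p^- := \gamma^-_F(\vec{x},\alpha,S_0)$, $q := F(\vec{x},S_0)$ and $r := F(\vec{x},S_\alpha)$. Under these abbreviations, $\D^F_{ss}[\alpha,S_0]$ reads $r \equiv p^+ \lor (\lnot p^- \land q)$. By Def.~\ref{def:BAT}, $\D \models \lnot(\gamma^+_F \land \gamma^-_F)$, so in $M$ the additional constraint $\lnot(p^+ \land p^-)$ holds. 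The statement "in any model of $\D$" is exactly what lets us use this constraint. It then suffices to show that, under $\lnot(p^+\land p^-)$, the biconditional is propositionally equivalent to the conjunction of \eqref{eq:ssa1}--\eqref{eq:ssa4}, which read $(\lnot p^+ \land r) \supset q$, $q \supset p^- \lor r$, $p^+ \supset r$ and $p^- \supset \lnot r$. Since this holds for every assignment to $\vec{x}$, universally closing gives the claim.

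First I split the biconditional into its two implications. The "if" direction is $p^+ \lor (\lnot p^- \land q) \supset r$, which is equivalent to the conjunction of $p^+ \supset r$, giving \eqref{eq:ssa3}, and $(\lnot p^- \land q) \supset r$, giving \eqref{eq:ssa2} after contraposition and rearrangement. The "only if" direction is $r \supset p^+ \lor (\lnot p^- \land q)$, which distributes to $(r \supset p^+ \lor \lnot p^-) \land (r \supset p^+ \lor q)$. The second conjunct is exactly \eqref{eq:ssa1}. The first conjunct, $r \supset p^+ \lor \lnot p^-$, is equivalent to $(p^- \land \lnot p^+) \supset \lnot r$.

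The only point that needs care is matching this last conjunct with \eqref{eq:ssa4}, $p^- \supset \lnot r$, because here the consistency condition is used. Under $\lnot(p^+ \land p^-)$, the formula $p^-$ implies $\lnot p^+$, so $p^- \land \lnot p^+$ is equivalent to $p^-$, and the conjunct becomes precisely \eqref{eq:ssa4}. Hence both directions of the equivalence hold in $M$.

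Without the consistency assumption the equivalence fails: \eqref{eq:ssa3} and \eqref{eq:ssa4} would conflict whenever $p^+ \land p^-$ holds. This is why the proposition is stated relative to models of $\D$ rather than as a logical equivalence.
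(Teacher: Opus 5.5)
Your proof is correct and complete. The paper states Proposition~\ref{prop:ssatrans} without proof, and your pointwise propositional check is the routine verification it relies on. You use the consistency condition $\D\models\lnot(\gamma^+_F\land\gamma^-_F)$ exactly where it is needed: to collapse the conjunct $F(\vec{x},S_\alpha)\supset\gamma^+_F\lor\lnot\gamma^-_F$ into \eqref{eq:ssa4}. Since nothing else from $\D$ enters, your argument shows the equivalence in every structure satisfying that instantiated condition. That is stronger than the literal claim, since in models of $\D$ both sides are already true. This stronger form is the one needed when \eqref{eq:ssa1}--\eqref{eq:ssa4} are later used inside the second-order formula \eqref{eq:linreiter}.
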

We say a formula $\phi$ is semi-definitional wrt a fluent $F$ if $\phi\uparrow S_0$ is semi-definitional wrt $F$'s lifting predicate. The proposition suggests that $\D^F_{ss}[\alpha,S_0]$ for every fluent $F$ is semi-definitional wrt $F$. Hence, if $\D_{S_0}$ is also semi-definitional wrt fluents, one can use Thm.~\ref{thm:semidefforget} to forget their lifting predicates in Eq.~\eqref{eq:linreiter}, which is the idea behind normal actions and their progression. For an action $\alpha$, we distinguish local-effect fluents $\LEF(\alpha)$ on the one hand (whose SSAs are as in BAT-LE), and non-local effect fluents $\NLEF(\alpha)$ on the other hand.
Normal actions then require that for all $F \in \NLEF(\alpha)$, fluents occurring in $\gamma^{\pm}_F$ can only be in $\LEF(\alpha)$. We call a BAT $\D$ a BAT-NR (wrt action $\alpha$) if $\alpha$ is normal and $\D_{S_0}$ is semi-definitional wrt fluents in $\NLEF(\alpha)$. For such a BAT, one can forget lifting predicates of fluents in $\NLEF(\alpha)$ by Thm.~\ref{thm:semidefforget} and thereafter fluents in $\LEF(\alpha)$ as in Eq.~\eqref{eq:forgetlocal}.

\begin{theorem}
\label{thm:SizeNormal}
If $\D$ is BAT-NR wrt ground action $\alpha$, the result of forgetting the lifting predicates for all fluents in $\NLEF(\alpha)$ in $(\D_{S_0} \cup \D_{ss}[\alpha,S_0])\uparrow S_0$ can be represented by a FO theory whose size is in $O((n+m)^2)$, where
$n$ is the size of $\D_{S_0}$,
and $m$ is the size of $\D_{ss}$.
\end{theorem}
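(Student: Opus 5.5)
We apply Thm.~\ref{thm:semidefforget} once for each fluent $F \in \NLEF(\alpha)$ and then bound the size of the result by counting pairs of sufficient and necessary conditions.

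\textbf{Step 1: the theory is semi-definitional.} Let $T = (\D_{S_0} \cup \D_{ss}[\alpha,S_0])\uparrow S_0$. By Prop.~\ref{prop:ssatrans}, for $F \in \NLEF(\alpha)$ we replace $\D^F_{ss}[\alpha,S_0]$ by the four formulas \eqref{eq:ssa1}--\eqref{eq:ssa4}. Under lifting, $F(\vec{x},S_0)$ becomes $P_F(\vec{x})$.

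\begin{itemize}
\item \eqref{eq:ssa1} has the form $\psi \supset P_F$ with $\psi = \neg\gamma^+_F \land F(\vec{x},S_\alpha)$.
\item \eqref{eq:ssa2} has the form $P_F \supset \phi$ with $\phi = \gamma^-_F \lor F(\vec{x},S_\alpha)$.
\item \eqref{eq:ssa3} and \eqref{eq:ssa4} do not mention any $P_{F'}$ with $F' \in \NLEF(\alpha)$, because $\alpha$ is normal: the $\gamma^\pm_F$ of non-local-effect fluents mention only fluents in $\LEF(\alpha)$.
\end{itemize}

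For $F$ local-effect, the instantiated SSA mentions only $\LEF$ fluents at $S_0$. So $T$ is semi-definitional with respect to every $P_F$, $F\in\NLEF(\alpha)$, and the conditions of one such $P_F$ never mention another. This lets the eliminations be performed one after another, or simultaneously, without interference. Each \eqref{eq:ssa1}--\eqref{eq:ssa4} has size $O(|\D^F_{ss}|)$, so $T$ has size $O(n+m)$.

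\textbf{Step 2: the forgetting result.} Applying Thm.~\ref{thm:semidefforget} for each $F$ yields
\[
T' \land \bigwedge_{F\in\NLEF(\alpha)}\ \bigwedge_{\psi\in \mathrm{WSC}_{P_F},\,\phi\in\SNC_{P_F}} \all\vec{x}.\psi(\vec{x})\supset\phi(\vec{x}).
\]
Here $T'$ consists of the sentences not mentioning these predicates, so $|T'| \le |T| = O(n+m)$. We must check that the semi-definitional property is preserved after forgetting one predicate. The new implications $\psi\supset\phi$ contain no other $P_{F'}$, since by semi-definitionality $P_{F'}$ occurs only in its own defining formulas, and those are distinct sentences. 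Hence the order of elimination does not matter.

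\textbf{Step 3: counting.} For each $F$, let $w_F$ and $s_F$ be the total sizes of the sufficient and necessary conditions of $P_F$, and $k_F, l_F$ their numbers. Then
\[
\sum_{\psi,\phi}(|\psi|+|\phi|+1) \le l_F\, w_F + k_F\, s_F + k_F l_F \le (w_F+s_F+k_F+l_F)^2 .
\]
Distinct formulas of $T$ contribute to distinct $F$, and each condition comes from a distinct occurrence in $T$. Therefore $\sum_F (w_F+s_F+k_F+l_F) \le 2|T| = O(n+m)$, and since $\sum_F x_F^2 \le (\sum_F x_F)^2$, the total is $O((n+m)^2)$. The leading universal quantifiers $\all\vec{x}$ are not counted, by convention.

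\textbf{Main obstacle.} The hard part is the bookkeeping in Step 1. We must verify that splitting each SSA into \eqref{eq:ssa1}--\eqref{eq:ssa4} blows up its size by only a constant factor: $\gamma^\pm_F$ are each copied a constant number of times, so this holds. We must also make sure that the occurrences of $P_F$ in $\D_{S_0}$ are already in the syntactic form $P\supset\phi$ or $\psi\supset P$, possibly after a linear-size rewriting.
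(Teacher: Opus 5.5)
\textbf{Gap: interaction between different NLE fluents.} Your argument fails at the Step~1 claim that the conditions of one lifted NLE predicate never mention another, and at the Step~2 inference that the defining formulas of different $P_F$ ``are distinct sentences.'' Normality only restricts the $\gamma^\pm_F$ of NLE fluents. BAT-NR only asks that $\D_{S_0}$ be semi-definitional with respect to each NLE fluent separately, and this does not stop one sentence from being a defining formula for two of them. The paper explicitly allows ``mixed'' axioms $F(\vec x,S_0)\limp G(\vec x,S_0)$ with $F,G\in\NLEF(\alpha)$; this is why BAT-NR is not contained in BAT-AC. Take $T=\{A(\vec x)\limp P_F(\vec x),\ P_F(\vec x)\limp P_G(\vec x),\ P_G(\vec x)\limp B(\vec x)\}$. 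Your Step~2 formula gives $\{A\limp P_G,\ P_F\limp B\}$, which still mentions both predicates. The correct result is $A\limp B$, and it arises only by chaining. So the Step~2 representation is wrong in general. The Step~3 count also breaks down: it pairs only each predicate's original conditions and charges every condition to one predicate, so it misses the implications created through chains.

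\textbf{What the paper does instead.} The paper uses a graph-bypass argument. Treat each semi-definitional implication $\psi\limp\phi$ as an edge, and eliminate the NLE predicates one at a time (in any order). Forgetting $P_F$ replaces every pair of in-edge $\psi\limp P_F$ and out-edge $P_F\limp\phi$ by $\psi\limp\phi$, which may become a new sufficient or necessary condition of another NLE predicate. No new nodes are created. At the end, every endpoint is one of the original conditions that mention no NLE predicate, and these have total size at most $n+m$. Keeping each implication once, the worst case is a complete graph on these conditions, giving total size $O((n+m)^2)$, plus $O(n+m)$ for the untouched formulas. Your per-fluent sum-of-squares count is correct, and slightly sharper, only in the special case with no mixed axioms.

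\textbf{Minor point.} Normality also does not imply your claim that the instantiated SSAs of LE fluents mention only LE fluents at $S_0$. The paper's proof makes the same tacit assumption, though, so this is not where you fall short of it.
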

\begin{proof}
First, note that we can ignore the lifting operator $\uparrow S_0$ here, since it only replaces atoms by other atoms, and thus does not affect the size of a formula.
Now, assume that for fluents $F\in\NLEF(\alpha)$, axioms in $\D_{ss}[\alpha,S_0]$ are represented according to Prop.~\ref{prop:ssatrans}.
Note that \eqref{eq:ssa1} and \eqref{eq:ssa2} are semi-definitional wrt $F(\vec{x},S_0)$, and \eqref{eq:ssa3} and \eqref{eq:ssa4} do not mention $F(\vec{x},S_0)$.
Theorem \ref{thm:semidefforget} is hence applicable to $\D_{S_0} \cup \D_{ss}[\alpha,S_0]$.
Intuitively, the set of semi-definitional formulas in $\D_{S_0} \cup \D_{ss}[\alpha,S_0]$ can be represented as a directed graph, where any formula $\psi(\vec{x}) \limp \phi(\vec{x})$ corresponds to an edge from $\psi(\vec{x}) $ to $\phi(\vec{x})$.
Forgetting the lifting predicate for a fluent $F\in\NLEF(\alpha)$ then amounts to eliminating the node for $F(\vec{x},S_0)$, ``bypassing'' all involved edges, i.e., every pair of ingoing edge $\psi(\vec{x}) \limp F(\vec{x},S_0)$ and outgoing edge $F(\vec{x},S_0) \limp \phi(\vec{x})$ is replaced by a new edge $\psi(\vec{x}) \limp \phi(\vec{x})$.
This elimination operation is performed iteratively for all $F\in\NLEF(\alpha)$ (in any order), leaving only nodes that correspond to conditions that do {\em not} mention any of the non-local-effect fluents.
In the worst case, the resulting graph is total, i.e., every condition is connected to every other condition through an edge.
Since the size of all conditions together is bounded by $n+m$, the size of the theory representing all combinations of implications between them is in $O((n+m)^2)$.
Moreover, the size of the original formulas in $\D_{S_0} \cup \D_{ss}[\alpha,S_0]$ that do not mention any of the NLE fluents (including \eqref{eq:ssa3} and \eqref{eq:ssa4}) is bounded by $n+m$ as well.
All in all, the size of the resulting theory is thus in $O((n+m)^2)$.
\end{proof}
\noindent
According to \cite{DBLP:conf/ijcai/LiuL09}, the progression of a BAT-NR through an action $\alpha$ is obtained by first forgetting the (lifting predicates) of the fluents in $\NLEF(\alpha)$, and afterwards forgetting the remaining atoms over the fluents in $\LEF(\alpha)$. We can estimate the size of the resulting theory by combining the results from Theorems \ref{thm:SizeLocalEffect} and \ref{thm:SizeNormal}, giving:
\begin{corollary}
If $\D$ is BAT-NR wrt ground action $\alpha$, then its progression can be represented by a FO theory of size $O(2^c (n+m)^2)$, where
$c$ is the size of the characteristic set,
$n$ is the size of $\D_{S_0}$,
and $m$ is the size of $\D_{ss}$.
\end{corollary}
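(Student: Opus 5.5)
The plan is to compose the two earlier size bounds along the two-stage procedure of Liu and Lakemeyer. First, we forget the lifting predicates of all fluents in $\NLEF(\alpha)$. By Thm.~\ref{thm:SizeNormal}, the intermediate theory $T_1$ has size $n_1 \in O((n+m)^2)$. Then we forget the remaining ground atoms of $\Omega(S_0)$ over the fluents in $\LEF(\alpha)$, following Eq.~\eqref{eq:forgetlocal}.

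For the second stage we rerun the argument of Thm.~\ref{thm:SizeLocalEffect}. We do not cite it as a black box, because the input is now $T_1$ rather than $\D_{S_0}\cup\D_{ss}[\Omega]$.

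\begin{itemize}
\item The instantiated successor state axioms for local-effect fluents are already present in $T_1$. They come from the part of $\D_{ss}[\alpha,S_0]$ that does not mention NLE fluents at $S_0$, so they are carried over unchanged by Thm.~\ref{thm:semidefforget}. They therefore contribute at most $O(cm)$, which is already absorbed in $n_1$.
\item The formula to which the substitutions $[\theta]$ are applied has size $O((n+m)^2)$.
\item Each substitution \eqref{eq:LEsubstitute} multiplies the size by at most $4ca+4c+3$.
\item There are $2^c$ choices of $\theta\in\M(\Omega(S_0))$.
\item The renaming $(S_0/S_\alpha)$ leaves the size unchanged, and $\D_{una}$ is not counted.
\end{itemize}

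This gives a bound of $2^c\cdot O((n+m)^2)\cdot(4ca+4c+3)$. Since $a$ is constant under a fixed vocabulary, and $c$ is bounded by $l\cdot b^a$ as remarked after Thm.~\ref{thm:SizeLocalEffect}, the factor $(4ca+4c+3)$ is a constant. The total is therefore $O(2^c(n+m)^2)$.

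The main obstacle is correctness of the composition: the second stage must operate only on local-effect atoms and must not reintroduce size blow-up. Normality ensures this. Every $\gamma^\pm_F$ with $F\in\NLEF(\alpha)$ mentions only fluents in $\LEF(\alpha)$. Hence the new implications $\psi\supset\phi$ produced by Thm.~\ref{thm:semidefforget} mention at $S_0$ only LE fluents, whose affected instances all lie in $\Omega(S_0)$.

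One point needs care. The LE-forgetting step of Liu and Lakemeyer keeps $P(\vec t)$ in the last disjunct of \eqref{eq:LEsubstitute} for instances outside $\Omega$, so unaffected instances are not forgotten. Those instances are identified with their $S_\alpha$ versions via the SSA, and they are counted among the atoms already present in $T_1$. No extra size arises beyond the multiplicative factor already accounted for.
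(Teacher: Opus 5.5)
Your proposal is correct and takes the same route as the paper. The paper obtains the corollary by composing Theorem~\ref{thm:SizeNormal}, where the NLE-forgetting stage yields a theory of size $O((n+m)^2)$, with the $2^c\cdot(4ca+4c+3)$ blow-up of the local-effect stage from Theorem~\ref{thm:SizeLocalEffect}, treating $a$ (and, via $c \leq l\cdot b^a$, also $c$) as constants. Your rerun of the local-effect argument on the intermediate theory $T_1$ is exactly what ``combining'' the two theorems means; your extra remarks on the local-effect successor state axioms and on correctness of the composition are not needed for the size bound, since Theorem~\ref{thm:SizeNormal} already bounds all of $T_1$.
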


\subsection{BAT-AC}

While normal action theories allow for non-local effects of a certain form, they include the requirement that for BAT-NR, the effect conditions $\gamma^{\pm}_F$ for a fluent $F$ may only mention local-effect fluents.
This assumption ensures that forgetting the lifting predicate for $F$ will not generate formulas that mention the lifting predicate of another NLE fluent $F'$, which might not have the right (semi-definitional) form to proceed with forgetting $F'$ in the same manner.
However, this excludes domains where actions' effects for a NLE fluent might depend on the state of another NLE fluent. 
BAT-AC, the class of acyclic action theories \cite{DBLP:conf/ijcai/0002C24} overcomes this restriction by allowing NLE fluents to depend on one another, as long the dependencies do not form any cycles.
Here, additional restrictions on the form of $\gamma^\pm_F$ guarantee that the lifting predicates for NLE fluents can be forgotten in a recursive fashion when following the order of dependencies among fluents.
Formally, a formula $\phi(\vec{x})$ is said to be in \emph{good form} wrt a predicate $P$ if it is of the form
  \begin{equation}
      \label{eq:goodform}
      [\psi(\Vec{x}) \vee P(\vec{t}) ] \land [\psi'(\Vec{x}) \vee \neg P(\vec{t})]\land \psi''(\vec{x})
  \end{equation}
  where $\psi,\psi',\psi''$ contain no $P$, and terms in $\vec{t}$
  are either ground terms or free variables among $\Vec{x}$.   
  \begin{proposition}[\thmcite{DBLP:conf/ijcai/0002C24}]
   \label{prop:goodformproperties}   
   If $\phi(\vec{x})$ is in good form wrt $P$, then 
   \begin{enumerate}
       \item $\phi(\vec{x})$ can be rewritten to be semi-definitional wrt $P$;
       \item $\neg \phi(\vec{x})$ can be rewritten to be in good form wrt $P$.
   \end{enumerate}
  \end{proposition}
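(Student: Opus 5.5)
Both items reduce to propositional reasoning, since $\psi,\psi',\psi''$ do not mention $P$ and the atom $P(\vec{t})$ can be treated as a single propositional letter $p$. Writing $\phi(\vec{x})$ as in \eqref{eq:goodform}, it has the shape $(\psi \vee p) \land (\psi' \vee \neg p) \land \psi''$. For each item I will exhibit an explicit rewriting and check equivalence by a two-case analysis on the truth value of $p$.

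For item 1, I will rewrite the two $P$-clauses as implications: $(\psi \vee P(\vec{t}))$ becomes $\neg\psi(\vec{x}) \supset P(\vec{t})$, and $(\psi' \vee \neg P(\vec{t}))$ becomes $P(\vec{t}) \supset \psi'(\vec{x})$. The conjunct $\psi''$ is left alone, since it contains no $P$. Semi-definitional form requires the $P$-atom to have distinct variables as arguments, whereas $\vec{t}$ may contain ground terms or repeated variables. To fix this, I introduce fresh variables $\vec{y}$ and write
\[\forall \vec{y}.\, \bigl(\exists \vec{x}.\, \vec{y}=\vec{t} \land \neg\psi(\vec{x})\bigr) \supset P(\vec{y})\]
and
\[\forall \vec{y}.\, P(\vec{y}) \supset \bigl(\forall \vec{x}.\, \vec{y}=\vec{t} \supset \psi'(\vec{x})\bigr).\]
Here $\vec{x}$ is the universally closed variable tuple, which is consistent with the paper's convention that free variables are implicitly universally quantified. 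Equivalence with the original universal closure follows by instantiating $\vec{y}:=\vec{t}$ in one direction and by the equality reasoning $\vec{y}=\vec{t}$ in the other.

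For item 2, I negate $\phi$ and use the propositional identity
\[\neg\bigl[(\psi\vee p)\land(\psi'\vee\neg p)\land\psi''\bigr] \;\Leftrightarrow\; \bigl[(\neg\psi\vee\neg\psi''\vee\neg p)\bigr]\land\bigl[(\neg\psi'\vee\neg\psi''\vee p)\bigr]\land\bigl[\neg\psi\vee\neg\psi'\vee\neg\psi''\bigr],\]
which I will verify by cases on $p$. If $p$ is true, the left-hand side is $\neg(\psi'\land\psi'')$. On the right-hand side, the second conjunct holds and the first is $\neg\psi\vee\neg\psi''$. It then remains to check that $(\neg\psi\vee\neg\psi'')\land(\neg\psi\vee\neg\psi'\vee\neg\psi'')$ equals $\neg(\psi'\land\psi'')$. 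This fails as stated, so I will use the more careful case split
\[\neg\phi \Leftrightarrow (p \supset \neg(\psi'\land\psi''))\land(\neg p\supset \neg(\psi\land\psi'')).\]
This gives the good form
\[\bigl[\neg(\psi\land\psi'') \vee P(\vec{t})\bigr]\land\bigl[\neg(\psi'\land\psi'')\vee\neg P(\vec{t})\bigr]\land \true.\]
The new $\psi$ and $\psi'$ contain no $P$, $\psi''$ is taken to be $\true$, and the terms $\vec{t}$ are unchanged.

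The main obstacle is keeping track of the quantifier structure under negation. If $\phi(\vec{x})$ is read with $\vec{x}$ free, then the negation is also taken pointwise in $\vec{x}$ and the case analysis on $P(\vec{t})$ goes through unchanged. I will therefore state and prove both items for the open formula $\phi(\vec{x})$, so that the good-form condition on $\vec{t}$ (ground terms or variables among $\vec{x}$) is preserved.
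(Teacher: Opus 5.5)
Your proposal is correct and is essentially the paper's own argument, which is spelled out in the appendix proof of Proposition~\ref{prop:SizeNotGF}: item~1 is the same fresh-variable rewrite, since the paper's $(\forall \vec{x}.\, \vec{y} \neq \vec{t} \lor \psi(\vec{x})) \lor P(\vec{y})$ is your sufficient-condition clause after De~Morgan, and your corrected item~2 is the paper's $(\lnot\psi \lor \lnot\psi'' \lor P(\vec{t})) \land (\lnot\psi' \lor \lnot\psi'' \lor \lnot P(\vec{t})) \land (\lnot\psi \lor \lnot\psi' \lor \lnot\psi'')$ with the redundant resolvent clause replaced by $\true$. Just delete your first displayed identity for item~2, which, as you noticed yourself, is false because it has the polarities of $P(\vec{t})$ swapped.
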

Now, given SSAs $\D_{ss}$ and a ground action $\alpha$, the \emph{dependency graph} $G$ for $\D_{ss}$ wrt $\alpha$ is a directed graph whose vertices are the set of fluents and there is an edge $F\rightarrow F'$ for two fluents $F,F'$ iff $F \in \mathrm{NLE}(\alpha)$ and  $F'$ appears in $\gamma^{\pm}_F$.
A ground action $\alpha$ is {\em acyclic} (or has acyclic effects) if 1) its dependency graph $G$ is acyclic, i.e., a DAG; and 2) for each fluent $F$, there are \emph{at most two} fluents $F',F''$ in $\mathrm{NLE}(\alpha)$ such that $F \rightarrow F'$ and $F \rightarrow F''$, where $\gamma^+_F$ and $\gamma^-_F$ are in good form wrt $F'$ and $F''$, respectively.

We call a BAT a BAT-AC (wrt $\alpha$), if $\alpha$ is acyclic and $\D_{S_0}$ is \emph{separably semi-definitional} wrt $\NLEF(\alpha)$, that is, each sentence in $\D_{S_0}$ mentions at most one fluent $F$ in $\NLEF(\alpha)$ and is semi-definitional wrt $F$. \cite{DBLP:conf/ijcai/0002C24} showed that 
if a BAT $\D$ is a BAT-AC, the progression of $\D_{S_0}$ wrt $\alpha, \D$ is FO definable.
This is because for any $F,F' \in \NLEF(\alpha)$, even if forgetting the lifting predicate of $F$ will possibly generate new formulas that mention $F'$, the assumption of $\gamma^{\pm}_F$ being in good form wrt $F'$ ensures that the new formulas can be rewritten to be semi-definitional wrt $F'$ again. Hence, one can recursively forget lifting predicates for $F \in \NLEF(\alpha)$ (following the order of the DAG) and later forget lifting predicates of $F \in \LEF(\alpha)$ as in Eq.~\eqref{eq:forgetlocal}.
For analyzing the size of the resulting theory, we need the following:
\begin{definition}
If $\phi(\vec{x})$ is in good form wrt $P$, i.e., 
$\phi(\vec{x}):= (\psi(\vec{x}) \lor P(\vec{t})) \land (\psi'(\vec{x}) \lor \lnot P(\vec{t})) \land \psi''(\vec{x})$,  we define 
$\consize{P}{\phi} = \max \{ \size{\psi}, \size{\psi'}, \size{\psi''} \}$ as  the {\em condition size of $\phi$ wrt $P$}.
\end{definition}
As we will see, the size of conditions within good-form formulas will be the determining factor for the size complexity of a progression for a BAT-AC.
Furthermore, we assume that for each $F \in \NLEF(\alpha)$, $\D_{S_0}$ contains
{\em exactly one} axiom 
\begin{align*}
\Delta_F = \left( \NWSC_F(\vec{x}) \lor F(\vec{x},S_0) \right) \land
\left( \SNC_F(\vec{x}) \lor \lnot F(\vec{x},S_0) \right)
\end{align*}
This is without loss of generality: If $\D_{S_0}$ contains multiple semi-definitional formulas for $F$, let $\NWSC_F$ be the negation of the weakest sufficient condition, and $\SNC_F$ the strongest necessary condition.
Observe that with the implicit negation in $\NWSC$, $\Delta_F$ is indeed in semi-definitional form, and such formulas can be assumed to be of good form \eqref{eq:goodform}.
Also, this assumption has no impact on size or condition size:
\begin{proposition}
\label{prop:CondSizeAnd}
  If $\phi_1(\vec{x}),\phi_2(\vec{x}),\dots,\phi_n(\vec{x})$ are all
  semi-definitional wrt predicate $P$, then
  $\phi_1(\vec{x}) \land \phi_2(\vec{x}) \land \dots \land
  \phi_n(\vec{x})$ can be rewritten to a single equivalent formula
  $\varphi(\vec{x})$ such that
  $\size{\varphi(\vec{x})} \leq \size{\bigwedge_i\phi_i(\vec{x})}$ and
  $\consize{P}{\varphi(\vec{x})} \leq \consize{P}{\bigwedge_i\phi_i(\vec{x})}$.
\end{proposition}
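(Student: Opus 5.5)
Each $\phi_i$ is semi-definitional wrt $P$, so after splitting conjunctions it consists of implications $\psi(\vec{x}) \limp P(\vec{x})$ (sufficient conditions), implications $P(\vec{x}) \limp \phi(\vec{x})$ (necessary conditions), and parts not mentioning $P$. The plan is to regroup all of these into one good-form formula $\varphi(\vec{x}) = (\psi \lor P(\vec{x})) \land (\psi' \lor \lnot P(\vec{x})) \land \psi''$ by distributivity. The size comparison will use the same convention as the proofs of Theorems~\ref{thm:SizeLocalEffect} and~\ref{thm:SizeNormal}: we count atoms, connectives and quantifiers, so reassociating conjunctions and disjunctions and replacing $\limp$ by $\lnot\cdot\lor\cdot$ changes size by at most a constant per implication. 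I would argue that a direct regrouping never increases the count.

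\textbf{Step 1 (normalisation).} Write each $\phi_i$ in the form $(\psi_i \lor P(\vec{x})) \land (\psi'_i \lor \lnot P(\vec{x})) \land \psi''_i$. Here $\psi_i$ is the negation of the disjunction of the sufficient conditions in $\phi_i$, $\psi'_i$ is the conjunction of the necessary conditions, and $\psi''_i$ collects the $P$-free part. If a component is missing, it is taken to be $\true$ or omitted. Both the size and the condition size are then read off from this form.

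\textbf{Step 2 (merging).} Set
\[
\psi = \textstyle\bigwedge_i \psi_i, \qquad \psi' = \textstyle\bigwedge_i \psi'_i, \qquad \psi'' = \textstyle\bigwedge_i \psi''_i .
\]
Equivalence follows from the laws $(A\lor P)\land(B\lor P) \Leftrightarrow (A\land B)\lor P$ and the analogous law for $\lnot P$, together with associativity and commutativity of $\land$.

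\textbf{Step 3 (size).} In the conjunction $\bigwedge_i \phi_i$, each of the $n$ conjuncts contains the atom $P(\vec{x})$ twice and uses several $\lor$ and $\land$ symbols. In $\varphi$, only two occurrences of $P(\vec{x})$ remain, together with a bounded number of connectives. The $\land$ symbols added inside $\psi$, $\psi'$ and $\psi''$ number at most $3(n-1)$, and these are paid for by the removed copies of $P$ and the removed $\lor$ symbols. This gives $\size{\varphi} \leq \size{\bigwedge_i\phi_i}$.

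\textbf{Step 4 (condition size).} I expect this to be the main obstacle. The condition size of $\varphi$ is $\max\{\size{\psi},\size{\psi'},\size{\psi''}\}$, and each of these is a conjunction of pieces drawn from all the $\phi_i$. The right-hand side $\consize{P}{\bigwedge_i\phi_i}$ must therefore be read as the condition size of the conjunction viewed as one good-form formula. In that reading, its conditions are exactly the collected sufficient and necessary conditions and the $P$-free part, so the inequality holds with equality, up to the bookkeeping of the connecting $\land$ symbols. I would make this interpretation explicit first. I would then check that no negation introduced in Step~1 pushes sizes up. If it does, I would keep the condition $\psi$ un-negated in the form $\lnot\bigvee_i(\cdot)$, which costs exactly one extra $\lnot$; that symbol is absorbed in the same way as in Step~3.
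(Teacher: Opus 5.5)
Your merging in Steps 1--3 is what the paper intends. The paper gives no separate proof of this proposition. It only remarks beforehand that one takes $\NWSC_F$ to be the negated weakest sufficient condition (your $\bigwedge_i\psi_i$) and $\SNC_F$ the strongest necessary condition (your $\bigwedge_i\psi'_i$).

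The size bound goes through, with two small corrections:
\begin{enumerate}
\item A $\phi_i$ that is a single implication contains $P(\vec{x})$ once, not twice, so the bookkeeping should be done per implication rather than per good-form conjunct. Each $\lnot\chi\lor P(\vec{x})$ or $\lnot P(\vec{x})\lor\chi$ carries a $\lnot$, a $\lor$, a copy of $P(\vec{x})$ and a connecting $\land$. The merged formula spends one connective per condition plus a constant, so the count still closes.
\item Empty components must be omitted rather than filled with $\true$, since filling them increases the size.
\end{enumerate}

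The gap is Step 4. If you read $\consize{P}{\bigwedge_i\phi_i}$ as ``the condition size of the conjunction viewed as one good-form formula'', the right-hand side simply \emph{is} your $\varphi$. The inequality then holds by definition and says nothing.

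The non-circular readings behave differently. Under the reading $\max_i\consize{P}{\phi_i}$ the claim is false. Take $\phi_1 = P(x)\limp Q_1(x)$ and $\phi_2 = P(x)\limp Q_2(x)$, each of condition size $1$. In any equivalent good-form formula, the conditions $\psi$ and $\psi''$ must be valid (let $P$ be empty). Then the necessary condition must be equivalent to $Q_1(x)\land Q_2(x)$ (let $P$ be a singleton), and no formula of size at most $2$ achieves that.

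The reading the paper actually relies on is additive. In the proof of Theorem~\ref{thm:acyclicSizeComplexity}, $w_{F'}$ is bounded by adding the condition size of the old $\Delta_{F'}$ to those of the incoming formulas. Your Step 2 yields exactly
\[
\consize{P}{\varphi}\le\sum_i\consize{P}{\phi_i}+(n-1),
\]
because each merged condition is a conjunction of at most $n$ pieces, the $i$-th bounded by $\consize{P}{\phi_i}$. The example above shows the $+(n-1)$ cannot be dropped, so this is the statement you should prove and use. In Theorem~\ref{thm:acyclicSizeComplexity} the slack amounts to $2$ per incoming edge of the dependency graph. It is absorbed into the constant of Eq.~\eqref{eq:SizeRecurrence} and leaves that theorem's bound unaffected.
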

Next, we consider formulas in good form and how rewriting them as in Prop.~\ref{prop:goodformproperties} affects the size of the result:
\begin{proposition}
\label{prop:SizeNotGF}
  If $\phi(\vec{x})$ is in good form wrt $P$, then
\begin{enumerate}
  \item $\phi(\vec{x})$ can be
  rewritten to $\varphi(\vec{y})$ that is semi-definitional
  wrt $P(\vec{y})$, moreover, if $a=\max\{\size{\vec{x}},\size{\vec{y}}\}$, then  $\size{\varphi(\vec{y})} \leq \size{\phi(\vec{x})} + 7a + 2$ and
  $\consize{P}{\varphi(\vec{y})} \leq \consize{P}{\phi(\vec{x})} + 3a + 1$;
    
    \item $\lnot\phi(\vec{x})$ can be rewritten to 
  $\varphi(\vec{x})$ that is in good form wrt $P$ with
  $\size{\varphi(\vec{x})} \leq 3 \size{\phi(\vec{x})}$ and
  $\consize{P}{\varphi(\vec{x})} \leq 3 \consize{P}{\phi(\vec{x})} + 2$;

  \item if $\psi(\vec{x})$ mentions no $P$, then $\phi(\vec{x}) \lor \psi(\vec{x})$
  can be rewritten to a formula $\varphi(\vec{x})$ in good form wrt
  $P$, where
  $\size{\varphi(\vec{x})} = \size{\phi(\vec{x})} + 3
  \size{\psi(\vec{x})} + 3$ and
  $\consize{P}{\varphi(\vec{x})} = \consize{P}{\phi(\vec{x})} +
    \size{\psi(\vec{x})} + 1$.
\end{enumerate}
\end{proposition}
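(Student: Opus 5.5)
We prove the three items separately. Each time we write down an explicit rewriting and count atoms, connectives and quantifiers under the size measure $\size{\cdot}$. Throughout we fix $\phi(\vec{x}) = (\psi(\vec{x}) \lor P(\vec{t})) \land (\psi'(\vec{x}) \lor \lnot P(\vec{t})) \land \psi''(\vec{x})$, where $\vec{t}$ consists of ground terms or variables among $\vec{x}$. We set $\size{\phi} = \size{\psi}+\size{\psi'}+\size{\psi''}+7$, counting two $P$-atoms, two $\lor$, one $\lnot$ and two $\land$.

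\emph{Item 1.} The obstacle is that $P(\vec{t})$ need not have the form $P(\vec{y})$ with distinct variables, since $\vec{t}$ may repeat variables or contain constants. We introduce fresh variables $\vec{y}$ of the arity of $P$ and write $\phi(\vec{x})$ equivalently as
\[
  \bigl(\exists \vec{x}.(\vec{y}=\vec{t} \land \lnot\psi(\vec{x})) \supset P(\vec{y})\bigr) \land \bigl(P(\vec{y}) \supset \lnot\exists\vec{x}.(\vec{y}=\vec{t}\land \lnot\psi'(\vec{x}))\bigr) \land \psi''(\vec{x}).
\]
When the variables are not fully quantified we keep them free, so that the universal closure is preserved. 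Here $\supset$ is read as $\lnot\cdot\lor\cdot$. The equality $\vec{y}=\vec{t}$ contributes at most $2a-1$ symbols ($a$ atoms and $a-1$ conjunctions). To each new condition we add one $\land$, at most one $\exists$ block and one or two negations. The $\exists$ block costs at most $a$ quantifiers, which we charge to $a$. A condition therefore grows by at most $3a+1$. Summing the growth over the two conditions plus the constant overhead of the implications gives the bound $7a+2$. The main work is checking that the constants match exactly; if they do not, I would absorb the quantifier block differently.

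\emph{Item 2.} Pushing the negation through $\phi$ gives $\lnot\phi \Leftrightarrow (\lnot\psi\land\lnot P)\lor(\lnot\psi'\land P)\lor\lnot\psi''$. Case splitting on $P(\vec{t})$ turns this into
\[
  (\lnot\psi' \lor \lnot\psi'' \lor \lnot P(\vec{t})) \land (\lnot\psi\lor\lnot\psi''\lor P(\vec{t})) \land (\lnot\psi \lor \lnot\psi' \lor \lnot\psi'').
\]
This is already in good form. The new conditions are $\lnot\psi\lor\lnot\psi''$, $\lnot\psi'\lor\lnot\psi''$ and $\lnot\psi\lor\lnot\psi'\lor\lnot\psi''$. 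Each has size at most $3\consize{P}{\phi}+2$ once the negations are pushed to atoms or counted. The whole formula then stays within $3\size{\phi}$, because each of $\psi,\psi',\psi''$ is copied at most twice or three times, and the overhead is bounded by the $7$ symbols of $\phi$.

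\emph{Item 3.} Distributing the disjunction gives $\phi\lor\psi \Leftrightarrow (\psi\lor\psi_0\lor P)\land(\psi'\lor\psi_0\lor\lnot P)\land(\psi''\lor\psi_0)$, where $\psi_0$ denotes the given $\psi(\vec{x})$. This copies $\psi_0$ three times and adds three disjunctions, which gives size exactly $\size{\phi}+3\size{\psi_0}+3$. Each condition grows by exactly $\size{\psi_0}+1$, as stated. Item 3 is routine and item 2 is mechanical, so I expect item 1 to be the only delicate bookkeeping step.
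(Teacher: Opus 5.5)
Your items 2 and 3 are the paper's own rewritings: the case split on $P(\vec{t})$ with the redundant resolvent clause $\lnot\psi\lor\lnot\psi'\lor\lnot\psi''$, and distributing $\psi$ over the three conjuncts. Your count for item 3 is exact. Item 1 uses the same idea as the paper (fresh $\vec{y}$, the equation $\vec{y}=\vec{t}$, quantifying out $\vec{x}$), but the existential shape you chose does not meet the stated constants.

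The two new conditions are $\lnot\exists\vec{x}.(\vec{y}=\vec{t}\land\lnot\psi(\vec{x}))$ and $\lnot\exists\vec{x}.(\vec{y}=\vec{t}\land\lnot\psi'(\vec{x}))$. Each contains \emph{two} new negations, not ``one or two'': the outer one from reading $\supset$ as $\lnot\cdot\lor\cdot$, and the one on $\psi$ resp.\ $\psi'$. By your own itemization each condition therefore grows by $a+(2a-1)+1+2=3a+2$, not $3a+1$. For instance, $\psi(x)\lor P(x)$ becomes $\lnot\exists x.(y=x\land\lnot\psi(x))\lor P(y)$. Its condition has size $\size{\psi}+5$, whereas the statement allows only $\consize{P}{\phi}+4$, so the bound fails whenever $\size{\psi}=\consize{P}{\phi}$.

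The total growth is $6a+4$, which already exceeds $7a+2$ for $a=1$. If you also close $\psi''$ under $\forall\vec{x}$, so that $\varphi$ really is a formula in $\vec{y}$ alone, the growth becomes $7a+4$. So the culprit is the double negation, not the quantifier block, and absorbing the quantifier differently will not fix it.

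The repair is the dual, universal form that the paper uses:
\begin{align*}
&((\forall\vec{x}.\,\vec{y}\neq\vec{t}\lor\psi(\vec{x}))\lor P(\vec{y}))\land{}\\
&((\forall\vec{x}.\,\vec{y}\neq\vec{t}\lor\psi'(\vec{x}))\lor\lnot P(\vec{y}))\land\forall\vec{x}.\,\psi''(\vec{x}),
\end{align*}
with $\vec{y}\neq\vec{t}$ read as $\lnot(\vec{y}=\vec{t})$, of size at most $2a$. Each condition now has a single new negation, folded into the inequality, and no negation on $\psi$. So each of the first two conditions grows by exactly $a+2a+1=3a+1$. The whole formula grows by three $\forall\vec{x}$ blocks, two inequalities and two disjunctions, i.e.\ $3a+4a+2=7a+2$. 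The block around $\psi''$ is where the seventh $a$ comes from.

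A side remark on item 2, a slip you share with the paper's own proof. If its three negations are counted, the condition $\lnot\psi\lor\lnot\psi'\lor\lnot\psi''$ has size $\size{\psi}+\size{\psi'}+\size{\psi''}+5$. The bound $3\consize{P}{\phi}+2$ therefore holds only if those negations are left uncounted. Pushing negations inward does not help in general.
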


Then we have:

\begin{theorem}
\label{thm:acyclicSizeComplexity}
If $\D$ is BAT-AC wrt ground action $\alpha$, the result of forgetting the lifting predicates for all fluents in $\NLEF(\alpha)$ in $(\D_{S_0} \cup \D_{ss}[\alpha,S_0])\uparrow S_0$ can be represented by a FO theory whose size is in $O(2^d(n+m))$, where
$d$ is the depth of the dependency graph,
$n$ the size of $\D_{S_0}$,
and $m$ the size of $\D_{ss}$.
\end{theorem}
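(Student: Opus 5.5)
We plan to process the NLE fluents in a topological order of the dependency graph $G$. The leaves are fluents whose $\gamma^\pm_F$ mention only LE fluents; we forget from the roots (fluents nothing depends on) downward. The induction runs on the \emph{level} of a fluent, meaning the length of the longest path from a root to it. The quantity we track is the total size and the condition size of the unique semi-definitional axiom about each NLE fluent that remains. For each $F\in\NLEF(\alpha)$, we first merge $\Delta_F$ with \eqref{eq:ssa1} and \eqref{eq:ssa2} from Prop.~\ref{prop:ssatrans}, using Prop.~\ref{prop:CondSizeAnd}. This gives a single semi-definitional formula whose size, and whose condition size wrt $F(\vec{x},S_0)$, is bounded by the size of $\Delta_F$ plus the size of $F$'s SSA. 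Applying Thm.~\ref{thm:semidefforget} then yields at most one new formula $\psi_F(\vec{x})\supset\phi_F(\vec{x})$. This is because $\Delta_F$ is the only axiom for $F$ (the separable assumption), so there is one WSC and one SNC and hence no quadratic blow-up as in Thm.~\ref{thm:SizeNormal}. Its size is linear in the condition size.

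The key step is to control the new formula when it mentions a successor NLE fluent $F'$. The formula $\psi_F\supset\phi_F$ is built from $\gamma^\pm_F$, possibly negated, together with $S_\alpha$-atoms and $\NWSC_F,\SNC_F$. By condition 2) of acyclicity, $\gamma^+_F$ is in good form wrt $F'$ and $\gamma^-_F$ wrt $F''$. Prop.~\ref{prop:SizeNotGF} then lets us push the disjunctions through (item 3) and the negations through (item 2), producing a formula in good form wrt $F'$. Item 1 then makes it semi-definitional, and Prop.~\ref{prop:CondSizeAnd} conjoins it into $F'$'s axiom. Each such transfer at most multiplies the condition size by a constant (the factor $3$ from item 2) and adds a term linear in the sizes of the SSA and $\Delta_F$ (from items 1 and 3, with $a$ constant by the fixed vocabulary).

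We then set up a recurrence. Let $s_k$ bound the total size of the axioms for level-$k$ fluents after all higher-level fluents have been forgotten. The recurrence has the form $s_{k+1}\le 2 s_k + O(\text{local sizes})$. The factor $2$ arises because each forgetting step produces a formula in which the conditions $\psi_F$ and $\phi_F$ each carry the accumulated material, and that material is passed on into $F'$ or $F''$. Summing over levels gives $O(2^d(n+m))$, since the local contributions over all fluents sum to $O(n+m)$. The final theory consists of these transferred implications for the last-forgotten fluents, together with the untouched formulas (including \eqref{eq:ssa3}, \eqref{eq:ssa4}, and the LE parts of $\D_{S_0}$), which contribute $O(n+m)$.

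The main obstacle is the multiplicative constant in the recurrence. Item 2 of Prop.~\ref{prop:SizeNotGF} has factor $3$, and a naive count would give $3^d$, or worse if a fluent has several predecessors in the DAG. We will try two things. First, note that only the condition size, not the full size, propagates, and that a negation is applied only to the $\gamma^-$ branch. Second, charge each NLE fluent's accumulated information to at most two children, so that the growth per level is a doubling. If the constant cannot be brought down to $2$, the fallback is to absorb it by bounding the condition sizes with a doubling argument on the two components, $\psi$ and $\psi'$, of the good form separately.
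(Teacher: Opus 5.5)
\textbf{Gap: the $2^d$ bound is asserted, not derived.} Your setting is right: forget NLE fluents from the sources of the dependency graph downward, merge $\Delta_F$ with \eqref{eq:ssa1}--\eqref{eq:ssa2} so that Thm.~\ref{thm:semidefforget} yields a single implication, and re-normalise with Prop.~\ref{prop:SizeNotGF}. But the step that gives $2^d$ is missing. You say each transfer may multiply the accumulated condition size by $3$ (item~2). With that per-step bound the unrolled recurrence gives $3^d$ or worse, so $s_{k+1}\le 2s_k+O(\cdot)$ does not follow from anything you prove. Your last paragraph only lists things to try, and ``negation is applied only to the $\gamma^-$ branch'' is not the relevant distinction.

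A related error hides where negation actually occurs. You keep \eqref{eq:ssa3} and \eqref{eq:ssa4} as untouched formulas. In a BAT-AC, however, $\gamma^\pm_F$ may mention NLE fluents $F',F''$ at $S_0$. So $\lnot\gamma^+_F\lor F(\vec{x},S_\alpha)$ and $\lnot\gamma^-_F\lor\lnot F(\vec{x},S_\alpha)$ contain lifting predicates still to be forgotten, and must be moved into $\Delta_{F'}$ and $\Delta_{F''}$. These are exactly the formulas that need item~2, and both $\gamma^+_F$ and $\gamma^-_F$ are negated there. In addition, your single implication $\psi_F\supset\phi_F$ mentions both $F'$ and $F''$, so as it stands it cannot be absorbed into one separable axiom.

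\textbf{The missing idea.} Expand $\psi_F\supset\phi_F$ into clauses:
$\NWSC_F\lor\SNC_F$, which mentions no NLE fluent and stays in the result;
$\gamma^+_F\lor\lnot F(\vec{x},S_\alpha)\lor\SNC_F$, sent to $F'$;
$\NWSC_F\lor\gamma^-_F\lor F(\vec{x},S_\alpha)$, sent to $F''$;
and a tautology. Now the accumulated conditions $\NWSC_F,\SNC_F$ occur only as unnegated disjuncts, so only item~3 applies to them. The factor $3$ of item~2 is paid only on $\gamma^\pm_F$, whose size is bounded by the maximal SSA size $k$ and never accumulates.

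\textbf{Track condition size, not total size.} Let $w_F$ be the condition size, which is a maximum. In total size, item~3 copies the disjunct three times, so a total-size recurrence like your $s_k$ needs extra bookkeeping you have not supplied. Write $w$ for the largest condition size of an original $\Delta_F$ and $a$ for the maximal arity. You then get a recurrence with coefficient $1$:
\[ w_{F'} \le w + \sum_{F\to F'} \bigl(w_F + O(k+a)\bigr). \]

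\textbf{Unroll along paths, not levels.} A level-by-level recurrence fails because in-degrees are unbounded and edges can skip levels. Instead, unroll the recurrence along paths and charge each contribution to the start node of its path. Out-degrees are at most $2$, so each node starts at most $2^{d+1}$ paths, giving $\sum_F w_F \le l\cdot 2^{d+1}\bigl(w+O(k+a)\bigr)$, where $l$ is the number of NLE fluents. The final theory consists of SSA-derived material of size $O(m)$ plus a constant number of conditions of size at most $w_F$ per NLE fluent $F$. Its size is therefore $O(m+\sum_F w_F)=O(2^d(n+m))$, since $l$ and $a$ are constants.
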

\begin{proof}
The proof for Theorem 3 in \cite{DBLP:conf/ijcai/0002C24} makes an inductive argument that one can forget fluents $F\in\NLEF(\alpha)$ iteratively while maintaining the required form, as long as one forgets all predecessors of $F$ in the dependency graph before forgetting $F$ itself.
By Prop.~\ref{prop:ssatrans} and Thm.~\ref{thm:semidefforget}, forgetting any such $F$ amounts to replacing the corresponding $\Delta_F$ by the following (parameters of $\gamma^{\pm}_{F}$ are omitted):
\begin{subequations}
  \begin{align}
    &\label{eq:ProgAxC} \NWSC_F(\vec{x}) \lor \SNC_F(\vec{x}) && \\
    &\label{eq:ProgAxA} \lnot \gamma_F^+ \lor F(\vec{x},S_\alpha) && (3k + 3a + 5)\\
    &\label{eq:ProgAxB} \lnot \gamma_F^- \lor \lnot F(\vec{x},S_\alpha) && (3k + 3a + 6)\\
    &\label{eq:ProgAxD} \gamma_F^+ \lor \lnot F(\vec{x},S_\alpha) \lor \SNC_F(\vec{x}) && (w_F + k + 3a + 5)\\
    &\label{eq:ProgAxE} \NWSC_F(\vec{x}) \lor \gamma_F^- \lor F(\vec{x},S_\alpha) && (w_F + k + 3a + 4)
  \end{align}
\end{subequations}
By definition, there are at most two other NLE fluents $F_1,F_2$ with $F \rightarrow F_1$ and $F \rightarrow F_2$, where $F_1$ only occurs in $\gamma_F^+$, and $F_2$ only in $\gamma_F^-$.
\eqref{eq:ProgAxA} and \eqref{eq:ProgAxD} will subsequently be incorporated into the new axiom $\Delta_{F_1}$, and \eqref{eq:ProgAxB} and \eqref{eq:ProgAxE} into the new axiom $\Delta_{F_2}$.
If $w_F$ is the condition size of the updated $\Delta_F$ and
$k$ the maximal size of an SSA for any NLE fluent, then by Prop.~\ref{prop:SizeNotGF}, upper bounds for the condition sizes of formulas \eqref{eq:ProgAxA}--\eqref{eq:ProgAxE} after rewriting them to be in good form wrt.~$F_1,F_2$ are as stated above in parentheses.
If $w$ is an upper bound for the condition size of any {\em original} $\Delta_F$ axiom, the condition size for the {\em updated} $\Delta_{F'}$  is bounded according to 
\begin{align}
\label{eq:SizeRecurrence}
w_{F'} &\leq w + \sum_{F \rightarrow F'} (w_F + 4k + 6a + 10).
\end{align}
The sum of the sizes of \eqref{eq:ProgAxC}--\eqref{eq:ProgAxE} over all NLE fluents $F$ gives an upper bound for the size of the progression.
Observe that the total size of $\gamma_F^\pm$, $F(\vec{x},S_\alpha)$ atoms, and boolean connectives is less than $\size{\D_{ss}^F[\alpha,S_0]}$ as in Prop.~\ref{prop:ssatrans},
i.e., less than $m$ summed over all NLE fluents.
This leaves two occurrences of $\NWSC_F(\vec{x})$ and two of $\SNC_F(\vec{x})$, each of size $\leq w_F$, summing up to $S = 4 \sum_{F\in\NLEF(\alpha)} w_F$.
To determine the sum of all $w_F$, we use the intuition that the linear 
recurrence in \eqref{eq:SizeRecurrence} can be unrolled along all source-to-node
paths.
Intuitively, each node $F$ starts with a $w_F$ ``weight'' of $w$ and adds $w_F + 4k + 6a + 10$ to each of its children.
This contribution is added multiple times, namely for each path starting in $F$.
As the out-degree of nodes is $\leq 2$, the number of distinct paths starting in $F$ is $\leq 2^{d+1}$.
If $l$ is the number of NLE fluents, we get $S \leq 4 \cdot 2^{d+1} \cdot l \cdot (w + 4k + 6a + 10)$.
Since $a$ is a constant, $l \cdot w \in O(n)$, and $l \cdot k \in O(m)$, the claim follows.
\end{proof}
\noindent
Combining the above theorem with Theorem \ref{thm:SizeLocalEffect} again yields:
\begin{corollary}
If $\D$ is BAT-AC wrt ground action $\alpha$, then its progression can be represented by a FO theory of size $O(2^{c+d} (n+m))$, where
$c$ is the size of the characteristic set,
$d$ is the depth of the dependency graph,
$n$ is the size of $\D_{S_0}$,
and $m$ is the size of $\D_{ss}$.
\end{corollary}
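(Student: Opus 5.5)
The corollary should follow by composing the two stages of the progression procedure for BAT-AC, exactly as was done for BAT-NR. As \cite{DBLP:conf/ijcai/0002C24} describes, the progression of $\D_{S_0}$ wrt $\alpha,\D$ is obtained in two steps. First, the lifting predicates of all fluents in $\NLEF(\alpha)$ are forgotten in $(\D_{S_0} \cup \D_{ss}[\alpha,S_0])\uparrow S_0$, following the order of the dependency graph. Second, the remaining atoms over the local-effect fluents in $\LEF(\alpha)$ are forgotten as in Eq.~\eqref{eq:forgetlocal}. The plan is to bound the output of the first step with Theorem~\ref{thm:acyclicSizeComplexity}, and then feed that intermediate theory into the construction of Theorem~\ref{thm:SizeLocalEffect}.

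\textbf{First stage.} By Theorem~\ref{thm:acyclicSizeComplexity}, the intermediate theory $T$ has size $\size{T} \le C\cdot 2^d(n+m)$ for some constant $C$. This $T$ does not mention lifting predicates of NLE fluents in $S_0$. It still contains the instantiated SSA parts for LE fluents and the original LE-fluent atoms at $S_0$.

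\textbf{Second stage.} Apply the LE forgetting of Eq.~\eqref{eq:forgetlocal} to $T$, treating $T$ in the role of $\D_{S_0}\cup\D_{ss}[\Omega]$. The proof of Theorem~\ref{thm:SizeLocalEffect} in fact shows a pointwise bound: for any formula $\phi$, $\size{\phi[\theta]} \le (4ca+4c+3)\size{\phi}$, and there are $2^c$ choices of $\theta$. So the result has size at most $2^c(4ca+4c+3)\size{T}$. Since $a$ is constant under the fixed vocabulary, this is $O(2^c\cdot c\cdot\size{T})$.

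\textbf{Absorbing the factor $c$.} It remains to absorb the factor $c$. By the remark after Theorem~\ref{thm:SizeLocalEffect}, $c \le l\cdot b^a$ is itself a constant under a fixed vocabulary, so $c$ can be absorbed into the constant; alternatively $c\le 2^c$, at the cost of a harmless change of constants. Combining the two stages gives $O(2^c\cdot 2^d(n+m)) = O(2^{c+d}(n+m))$.

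\textbf{Main obstacle.} The only delicate point is ensuring the instantiated LE successor state axioms $\D_{ss}[\Omega]$ are not double-counted or blown up between the stages. I would argue that the LE SSAs are untouched by the NLE forgetting, except that they may appear inside $\gamma^\pm_F$ conditions copied by Theorem~\ref{thm:acyclicSizeComplexity}. Those copies are already accounted for in its $O(2^d(n+m))$ bound. The additional $\D_{ss}[\Omega]$ contributes only $cm+c$, which is $O(m)$ for constant $c$. Hence the input to the second stage is still $O(2^d(n+m))$, and the bound goes through.
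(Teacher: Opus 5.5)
Your proposal is correct and follows the paper's argument: compose Theorem~\ref{thm:acyclicSizeComplexity} (forgetting the NLE lifting predicates, size $O(2^d(n+m))$) with the local-effect forgetting of Theorem~\ref{thm:SizeLocalEffect} (a further factor $2^c$), absorbing the polynomial factors in $c$ via the fixed-vocabulary bound $c \le l\cdot b^a$, as the paper itself implicitly does. One small correction: drop your fallback ``$c \le 2^c$'', since $c\cdot 2^c$ is only bounded by $4^c$ and is not $O(2^c)$ for unbounded $c$, so the constancy of $c$ is the justification you actually need.
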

Observe that the progression for BAT-LE and BAT-AC is linear in the size of the theory, while that for BAT-NR is quadratic.
As mentioned previously, the three classes roughly form a hierarchy of increasing expressiveness, but not exactly.
This is because BAT-NR allow ``mixed'' semi-definitional axioms of the form $F(\vec{x},S_0) \limp G(\vec{x},S_0)$, where both $F$ and $G$ are NLE fluents, whereas BAT-AC requires a strict separation so that a semi-definitional axiom for $F$ does not mention any NLE fluent $G$, and vice versa.
If we impose this additional restriction on BAT-NR, the resulting class will indeed be a proper subset of BAT-AC, consisting of action theories whose dependency graph has a depth of at most one, and where every edge $F \rightarrow G$ is between an NLE fluent $F$ and a LE fluent $G$.
The size of the progression then is in $O(2^c (n+m))$.

\section{The Two-Variable Fragment}
\label{sec:FOTwo}
For practical applications like FO planning, it is not enough to have an FO progression, but the progressed KB has to admit effective query evaluation. This necessitates a decidable progressed theory. A class of theory $\mathcal{T}$ is called \emph{decidable} if for any $T, \phi \in \mathcal{T}$, it is decidable to check if $T\models \phi$. One popular decidable class of FO theories is the {\em two-variable fragment} with equality, called $\FOtwo$ \cite{DBLP:journals/bsl/GradelKV97}. It consists of all formulas with at most 2 variable symbols, which we denote as $x$ and $y$ by convention. By $\vec{x}$ we then mean a vector of variables that either consists only of $x$ or of $x$ and $y$.

\begin{theorem}[\thmcite{DBLP:journals/bsl/GradelKV97}]
  $\FOtwo$ has the exponential model property: there is a constant $c$
 s.t. every satisfiable $\FOtwo$-sentence $\phi$ has a model of
  cardinality at most $2^{c |\phi|}$, where $|\phi|$ is the size of $\phi$.
\end{theorem}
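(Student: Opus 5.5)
The statement is the classical result of Grädel, Kolaitis and Vardi, so the plan is to reconstruct their small-model argument. It has three stages: normalise the sentence, count the possible 1-types, and then build a small model by copying 2-types from a given model.

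\emph{Normalisation.} Given a satisfiable $\FOtwo$-sentence $\phi$, first transform it into Scott normal form
\[
\phi' \;=\; \forall x\forall y.\,\chi(x,y) \;\land\; \bigwedge_{i=1}^{m} \forall x \exists y.\,\chi_i(x,y),
\]
with $\chi,\chi_i$ quantifier-free. This is done by repeatedly replacing an innermost subformula $Qy.\,\psi(x,y)$ with a fresh unary atom $R(x)$ and adding the axiom $\forall x.\, R(x)\equiv Qy.\,\psi(x,y)$. The result is satisfiable iff $\phi$ is, and every model of $\phi'$ is (after reduct) a model of $\phi$. Moreover, $|\phi'|$ and $m$ are linear in $|\phi|$, and the number $p$ of predicate symbols in $\phi'$ is also linear in $|\phi|$.

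\emph{Counting 1-types.} Fix a model $\mathfrak A\models\phi'$. A \emph{1-type} is the set of atoms over one variable $x$ that hold of an element. Since all predicates have bounded arity, the number of atoms in one variable is $O(p)$. Hence the number $K$ of realised 1-types is at most $2^{O(|\phi|)}$.

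\emph{Constructing the small model.} Call an element a \emph{king} if its 1-type is realised exactly once in $\mathfrak A$; there are at most $K$ kings. The \emph{court} $C$ consists of the kings together with, for each king $e$ and each $i\le m$, one chosen witness $b$ with $\mathfrak A\models\chi_i(e,b)$. Thus $|C|\le K(m+1)$. For every non-king 1-type $\tau$ realised in $\mathfrak A$, we add $3m$ fresh copies of a realisation of $\tau$, arranged in three groups $G^0,G^1,G^2$, with copies $(\tau,i,j)$ for $i\le m$ and $j\in\{0,1,2\}$. The universe of $\mathfrak B$ is $C$ together with all these copies, so
\[
|\mathfrak B|\le K(m+1)+3mK\le 2^{c|\phi|}.
\]
The structure of $\mathfrak B$ is defined entirely by transplanting 2-types (atomic diagrams of pairs) from $\mathfrak A$:
\begin{enumerate*}[label=(\roman*)]
\item pairs inside $C$ keep their type from $\mathfrak A$;
\item each court element that needs an $i$-witness outside $C$ receives the copy $(\tau,i,0)$ of the witness's 1-type, with the 2-type of the original pair;
\item each copy in group $j$ needing an $i$-witness of 1-type $\tau$ receives the element $(\tau,i,j{+}1 \bmod 3)$, again with the 2-type of the witnessing pair in $\mathfrak A$;
\item every remaining pair gets the 2-type of \emph{some} pair of distinct elements of $\mathfrak A$ realising the corresponding 1-types.
\end{enumerate*}
In step (iv) such a pair always exists, because two distinct elements of $\mathfrak B$ with the same 1-type must have a non-king type. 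Witnesses that are kings are taken directly from $C$.

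\emph{Verification.} Every 2-type in $\mathfrak B$ is copied from a pair of $\mathfrak A$, so $\forall x\forall y.\,\chi$ holds. Every element has been assigned an $i$-witness for each $i$, so each $\forall x\exists y.\,\chi_i$ holds.

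The main obstacle is step (iii): assigning witness 2-types consistently. A conflict arises if a pair $\{u,v\}$ is asked to carry two different 2-types, for instance when $u$ needs $v$ as a witness and $v$ also needs $u$. The cyclic three-group shift is what rules this out. An element of $G^j$ only demands pairs with $G^{j+1}$, while the demand on such a pair from the other side would come from $G^{j+1}$ and point to $G^{j+2}\neq G^j$. Hence each pair is constrained at most once. I would check this bookkeeping carefully, including the case where a court element's witness also needs a witness back inside $C$. The remaining steps are routine counting.
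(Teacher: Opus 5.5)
Your proposal is correct and is essentially the original Gr\"adel--Kolaitis--Vardi small-model argument: Scott normal form, kings and court, and $3m$ copies of each non-king 1-type wired cyclically through three groups. The paper itself gives no proof of this theorem and only cites that work. The one detail to add is the self-witness case in step (iii). If the chosen realisation of $\tau$ is its own $i$-witness, let each copy witness itself, which is valid since $\chi_i(x,x)$ is determined by the 1-type. Do not transplant a ``2-type'' taken from a non-distinct pair onto two distinct elements.
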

\noindent Consequently, satisfiability of $\FOtwo$ is NEXPTIME-complete and entailment checking is CoNEXPTIME-complete.

Typically, $\FOtwo$ does not include general function symbols. Yet, in some literature, constant symbols are allowed as they can be simulated through unary predicates whose extensions consist of exactly one element. That is to say, for any constant $A$, one introduces a predicate $P_A(x)$ and includes an axiom
\[ \exists x P_A(x) \land \forall x \forall y.\, P_A(x) \land P_A(y)
  \limp (x = y) \] 
saying that there is exactly one domain element in the extension of $P_A$.
It is easy to see that this idea can be extended to also include ground situation terms $\sdo(\alpha,S_0)$.

Moreover, in the following, we assume that every rigid relation symbol has arity at most two. Again, relations of higher arity can be simulated by introducing new predicate symbols, with formulas growing at most linearly in size. For details, we refer the reader to \cite{DBLP:journals/bsl/GradelKV97}. For fluent predicates, we allow for up to two {\em object} arguments in addition to the {\em situation} argument. The idea here is that the situation argument is irrelevant, as it will be instantiated by some ground situation term when evaluating queries.

\begin{definition}
Given a ground action $\alpha$, a BAT $\D$ is an $\FOtwo$-BAT wrt $\alpha$ if
$\D_{S_0} \cup \D_{ss}[\alpha,S_0]$ is a finite $\FOtwo$
theory.
\end{definition}
In what follows, we show that the progressions of $\FOtwo$-BATs wrt local-effect, normal, and acyclic actions are all within the $\FOtwo$ fragment, hence decidable. 

\paragraph{$\FOtwo$-BAT-LE} To begin with, we consider $\FOtwo$-BAT where the action $\alpha$ has local effects, i.e., $\FOtwo$-BAT-LE.

\begin{theorem}
\label{thm:fo2leclosed}
Given a BAT $\D$ that is BAT-LE wrt ground action $\alpha$, if $\D$ is also an $\FOtwo$-BAT wrt $\alpha$, then the progression of $\D_0$ wrt $\D,\alpha$ can be expressed in $\FOtwo$.
\end{theorem}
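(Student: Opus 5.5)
We will show that the explicit progression formula \eqref{eq:forgetlocal} of Liu and Lakemeyer already lies in $\FOtwo$, once it is read modulo the standard treatment of constants and ground situation terms as unary singleton predicates. By Theorem~3.6 of \cite{DBLP:conf/ijcai/LiuL09}, the disjunction
\[\bigvee_{\theta\in\M(\Omega(S_0))} (\D_{S_0} \cup \D_{ss}[\Omega])[\theta](S_0/S_\alpha)\]
is a progression of $\D_{S_0}$ wrt $\alpha,\D$. The conjunct $\D_{una}$ is not part of $\D_{S_\alpha}$ (cf.\ Def.~\ref{def:progression}), so it suffices to check that each disjunct is an $\FOtwo$ sentence. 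A finite disjunction of $\FOtwo$ sentences is again in $\FOtwo$, because the variable names $x,y$ may be reused across disjuncts.

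First, we treat $\phi = \D_{S_0} \cup \D_{ss}[\Omega]$. By assumption, $\D_{S_0}\cup\D_{ss}[\alpha,S_0]$ is an $\FOtwo$ theory. Each element of $\D_{ss}[\Omega]$ is obtained from a corresponding axiom $\forall\vec{x}.\,F(\vec{x},S_\alpha)\equiv\Phi_F(\vec{x},\alpha,S_0)$ by substituting ground terms $\vec{t}$ (arguments of $\alpha$) for the free variables $\vec{x}$ and dropping the outer quantifiers. Substituting ground terms for free variables never introduces new variable symbols. Hence each instance is an $\FOtwo$ sentence, and so is $\phi$. For $\D_{una}$-based simplification, note that the equalities between ground terms that arise here can be evaluated directly.

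Second, we consider the substitution $[\theta]$. Every atom $P(\vec{t})$ occurring in $\phi$ has argument terms that are among $x$, $y$, and constants, with at most two object arguments. Formula \eqref{eq:LEsubstitute} replaces this atom by a Boolean combination of the following pieces: equalities $\vec{t}=\vec{t}_j$ between those same terms and ground terms $\vec{t}_j$; truth constants $v_j$; and the atom $P(\vec{t})$ itself. No quantifiers are added, and every term of the replacement already occurs in $\vec{t}$ or is ground. Therefore the free variables of the replacement are exactly those of $P(\vec{t})$, and no third variable appears. The subsequent substitution $(S_0/S_\alpha)$ swaps one ground situation term for another, and $S_\alpha$ is again a constant simulated by a unary singleton predicate.

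The main point to be careful about is the treatment of constants and ground situation terms. The equalities $\vec{t}=\vec{t}_j$ compare a variable with a constant, for example $x=c$. When constants are eliminated via predicates $P_c$, an equality such as $x=c$ becomes $P_c(x)$, which stays two-variable. A fluent atom $F(x,c,S_\alpha)$ must be rendered as $\exists y.\,P_c(y)\land F(x,y,S_\alpha)$. This is the same encoding already presupposed for $\D_{S_0}\cup\D_{ss}[\alpha,S_0]$ being $\FOtwo$, so the question is only whether it stays within two variables in context. If both variables are bound at that point, the encoding is instead applied to the ground argument before the atom is placed in context, which remains possible because equality atoms between a variable and a constant are unary in that variable. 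With this, each disjunct is in $\FOtwo$, and the theorem follows. The size bound of Theorem~\ref{thm:SizeLocalEffect} additionally shows that the resulting $\FOtwo$ sentence is only linearly larger for fixed vocabulary.
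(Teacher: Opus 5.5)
Your proposal is correct and follows the paper's argument. Each disjunct $(\D_{S_0} \cup \D_{ss}[\Omega])[\theta](S_0/S_\alpha)$ of \eqref{eq:forgetlocal} stays within two variables, because ground instantiation, the replacement \eqref{eq:LEsubstitute}, and the constant swap introduce no new variable symbols, and a finite disjunction of \FOtwo sentences is again in \FOtwo. You are spelling out what the paper calls ``not hard to check.''

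Two asides are slightly off, though neither affects membership in \FOtwo. First, the ground object equalities arising in $\D_{ss}[\Omega]$ cannot in general be evaluated, since $\D_{una}$ covers only action terms. Second, $S_\alpha$ is better read as an index on the fluent, per the paper's convention on situation arguments, than simulated by a unary predicate; for binary fluents the latter would cost a third variable.
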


It is not hard to check that $(\D_{S_0} \cup \D_{ss}[\Omega])[\theta]$ in Eq.~\eqref{eq:forgetlocal} is in $\FOtwo$ by assumption. Besides, a disjunction of $\FOtwo$ theories is still in $\FOtwo$, so the progression as in Eq.~\eqref{eq:forgetlocal} is in $\FOtwo$.

\paragraph{$\FOtwo$-BAT-NR} For $\FOtwo$-BAT with normal actions, we have   

\begin{theorem}
\label{thm:fo2nrclosed}
Given a BAT $\D$ that is BAT-NR wrt ground action $\alpha$, if $\D$ is an \FOtwo-BAT wrt $\alpha$, then the progression of $\D_0$ wrt $\D,\alpha$ can be expressed in $\FOtwo$.
\end{theorem}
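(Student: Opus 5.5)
The plan is to follow the two-stage progression procedure for BAT-NR described after Theorem~\ref{thm:SizeNormal}: first forget the lifting predicates of all fluents in $\NLEF(\alpha)$ via Theorem~\ref{thm:semidefforget}, then forget the remaining atoms over $\LEF(\alpha)$ via Eq.~\eqref{eq:forgetlocal}. We show that each stage maps $\FOtwo$ theories to $\FOtwo$ theories. Throughout, constants (including the ground situation terms $S_0$ and $S_\alpha$) are allowed in $\FOtwo$ as discussed above. The lifting operator $\uparrow S_0$ and the substitution $(S_0/S_\alpha)$ only rename atoms or constants, so they preserve membership in $\FOtwo$.

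\emph{Stage 1.} We rewrite each instantiated SSA $\D^F_{ss}[\alpha,S_0]$ for $F\in\NLEF(\alpha)$ into the four formulas \eqref{eq:ssa1}--\eqref{eq:ssa4} of Prop.~\ref{prop:ssatrans}. These are built only from $\gamma^\pm_F(\vec{x},\alpha,S_0)$ and atoms $F(\vec{x},\cdot)$ by Boolean connectives. Since $\D^F_{ss}[\alpha,S_0]$ is in $\FOtwo$ by assumption, so is $\gamma^\pm_F$, and hence so are \eqref{eq:ssa1}--\eqref{eq:ssa4}. Each semi-definitional sentence has the form $\all\vec{x}.\psi(\vec{x})\limp F(\vec{x},S_0)$ or $\all\vec{x}.F(\vec{x},S_0)\limp\phi(\vec{x})$ with $\vec{x}\subseteq\{x,y\}$. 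We then use variable renaming to put all such sentences for a given $F$ over the same tuple $\vec{x}$, e.g.\ $(x)$ or $(x,y)$ matching the arity of $F$. This is possible within two variables: the arity is at most two, so the conditions have at most $\vec{x}$ free and only $x,y$ inside.

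Theorem~\ref{thm:semidefforget} then yields sentences $\all\vec{x}.\psi(\vec{x})\limp\phi(\vec{x})$, where $\psi,\phi$ are subformulas of $\FOtwo$ formulas sharing the free tuple $\vec{x}\subseteq\{x,y\}$. These again use only the variables $x,y$, and the untouched sentences $T'$ are already in $\FOtwo$. Normality ensures that the $\gamma^\pm_F$ mention only LE fluents, so after forgetting one NLE fluent, no new occurrences of other NLE fluents appear in non-semi-definitional positions. The iteration over all $F\in\NLEF(\alpha)$ therefore stays in $\FOtwo$, exactly as in the edge-bypassing argument of Theorem~\ref{thm:SizeNormal}.

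\emph{Stage 2.} The resulting theory is an $\FOtwo$ theory in which only LE-fluent atoms over the characteristic set remain to be forgotten. We apply Eq.~\eqref{eq:forgetlocal}, i.e.\ substitution \eqref{eq:LEsubstitute}. For an atom $P(\vec{t})$ whose terms are among $x,y$ and constants, the substituted formula uses only equalities between those same terms and ground tuples $\vec{t}_j$, together with truth values and $P(\vec{t})$ itself. No new variables are introduced, so $[\theta]$ preserves $\FOtwo$, and a finite disjunction of $\FOtwo$ sentences is in $\FOtwo$; this is the argument of Theorem~\ref{thm:fo2leclosed}. Two points need care. The instantiations $\D_{ss}[\Omega]$ are ground instances of $\FOtwo$ formulas and hence in $\FOtwo$. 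The $\D_{una}$ conjunct is excluded from the progression by convention.

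The main obstacle I expect is the variable bookkeeping in Stage 1. A semi-definitional axiom may be stated with variable order $(y,x)$ or with a repeated variable, e.g.\ $F(x,x,S_0)$. Combining $\psi$ and $\phi$ from different axioms requires aligning their free variables without introducing a third variable. For repeated or permuted argument patterns I would first normalize each axiom to the canonical form $\all x\all y.\,\psi'(x,y)\limp F(x,y,S_0)$, taking $\psi'(x,y):=x=y\land\psi(x)$ or swapping $x$ and $y$ throughout. Swapping is a bijective renaming, so it stays two-variable. Ground arguments would be handled similarly with equalities to constants.
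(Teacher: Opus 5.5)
Your proposal is correct and takes the same route as the paper. The paper also forgets the NLE fluents via Theorem~\ref{thm:semidefforget}, noting that $T'$ and each combined implication $\all\vec{x}.\psi(\vec{x})\limp\phi(\vec{x})$ mention only $x,y$, which is its auxiliary lemma for semi-definitional \FOtwo theories; it then forgets the LE fluents via Eq.~\eqref{eq:forgetlocal} and Theorem~\ref{thm:fo2leclosed}. Your extra bookkeeping (aligning permuted, repeated, or ground argument tuples, and checking that the substitution $[\theta]$ adds no variables) fills in details the paper leaves implicit without changing the argument.
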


For BAT-NR, forgetting NLE fluents as in Thm.~\ref{thm:semidefforget}
involves combining sufficient conditions and necessary conditions of semi-definitional formulas, which clearly does not introduce new variables. Hence, the result is still in $\FOtwo$. Besides, for LE fluents, as mentioned before, forgetting them only involves disjoining two $\FOtwo$ theories, which is still in $\FOtwo$. Hence, the resulting progression is in $\FOtwo$.

\paragraph{$\FOtwo$-BAT-AC} The situation for $\FOtwo$-BAT with acyclic actions is 
slightly more complicated, as we need to ensure that the generated formulas when forgetting an NLE fluent are semi-definitional wrt another NLE fluent and the rewrite has to be in $\FOtwo$. The following lemma justifies this.  

\begin{lemma}
\label{prop:fo2goodformproperties}
  If $\phi(\vec{x}) \in \FOtwo$ is in good form wrt $P$, then
  \begin{enumerate}
      \item $\phi(\vec{x})$ can be rewritten to a formula in $\FOtwo$ that is semi-definitional wrt $P$; 
      
      \item $\neg \phi(\vec{x})$ can be rewritten to a formula in $\FOtwo$ that is in good form wrt $P$. 
  \end{enumerate}
\end{lemma}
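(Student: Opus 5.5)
The plan is to re-run the rewritings behind Prop.~\ref{prop:goodformproperties} (and the size bounds of Prop.~\ref{prop:SizeNotGF}) and check that each step can be done with only the variables $x$ and $y$. Throughout, write $\phi(\vec{x})$ in good form as $[\psi(\vec{x}) \lor P(\vec{t})] \land [\psi'(\vec{x}) \lor \lnot P(\vec{t})] \land \psi''(\vec{x})$. Here $\psi,\psi',\psi'' \in \FOtwo$ are subformulas of $\phi$, and every term in $\vec{t}$ is either a constant or one of the free variables among $x,y$. As discussed above, constants are admitted in \FOtwo (or simulated by unary predicates), so they cost no extra variables.

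For item 1, the target is the semi-definitional form $[\lnot\psi(\vec{x}) \limp P(\vec{t})] \land [P(\vec{t}) \limp \psi'(\vec{x})] \land \psi''(\vec{x})$. The only problem is that semi-definitional form needs $P$ applied to a vector of distinct variables $\vec{y}$, not to arbitrary terms $\vec{t}$. So I will normalise the atom by introducing equalities. Since $P$ has at most two object arguments, $\vec{y}$ can be taken to be $x$ or $(x,y)$, reusing the two available variable names. Then
\[
\lnot\psi(\vec{x}) \limp P(\vec{t}) \quad\text{becomes}\quad \forall \vec{y}.\, \bigl(\exists \vec{x}.\, \vec{y}=\vec{t} \land \lnot\psi(\vec{x})\bigr) \limp P(\vec{y}),
\]
and symmetrically $P(\vec{y}) \limp \forall \vec{x}.(\vec{y}=\vec{t} \limp \psi'(\vec{x}))$.

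The main obstacle is exactly this step. A naive renaming needs fresh variables for $\vec{y}$ while $\vec{x}$ is still in scope, which could push us to three or four variables. The way around it is a case analysis on the shape of $\vec{t}$.

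If $\vec{t}$ consists of distinct variables, then $\vec{t}$ is already a permutation of the variables in $\vec{x}$. In that case no rewriting is needed; we simply rename, swapping $x$ and $y$ if necessary.

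If $\vec{t}$ contains constants or repeated variables, then the variables of $\vec{x}$ that do not occur in $\vec{t}$ are existentially (respectively universally) quantified inside the condition. The positions of $\vec{y}$ are then tied to $\vec{t}$ by the equalities $y_i = c$ or $y_i = y_j$. Because every free variable of the condition is either equated with some $y_i$ or quantified away, we can substitute each variable of $\vec{x}$ by the $y_i$ it equals and quantify the rest. This needs at most the two names $x,y$, since $\psi,\psi'$ themselves use only $x,y$. Concretely, the resulting conditions are $\psi(\vec{t}'/\ldots)$ with variables renamed. Where the variable sets of $\vec{x}$ and $\vec{y}$ would clash, I plan to use the standard \FOtwo trick of reusing a variable after it has been quantified. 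The count of $7a+2$ extra symbols in Prop.~\ref{prop:SizeNotGF} suggests precisely this equality-based construction. The part $\psi''$ does not mention $P$ and stays as a separate conjunct (its free variables are closed universally), so it remains in \FOtwo.

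Item 2 is easier, because it introduces no new variables at all. I follow the rewriting from Prop.~\ref{prop:goodformproperties}. Since $\phi \Leftrightarrow (\psi'' \land (P(\vec{t}) \limp \psi') \land (\lnot P(\vec{t}) \limp \psi))$, we get
\[
\lnot\phi \Leftrightarrow [\lnot\psi'' \lor \lnot\psi' \lor P(\vec{t})] \land [\lnot\psi'' \lor \lnot\psi \lor \lnot P(\vec{t})] \land [\lnot\psi'' \lor \lnot\psi \lor \lnot\psi'].
\]
One checks this by cases on $P(\vec{t})$, and the result is in good form. It uses only Boolean combinations of $\psi,\psi',\psi''$ and the same atom $P(\vec{t})$ over the same free variables $\vec{x}$, so it stays in \FOtwo.
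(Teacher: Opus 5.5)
Your overall route matches the paper's. For item 2 you use a purely Boolean CNF rewriting that introduces no variables. For item 1 you do a case analysis on the shape of $\vec{t}$: constants and repeated arguments become equalities placed outside the inner quantifier, the unused variable of $\vec{x}$ is quantified away inside the condition, and the names $x,y$ are reused. This is the paper's $\forall x\forall y\,((y=B)\land\exists y\,\lnot\psi(x,y))\limp P(x,y)$ pattern.

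However, your explicit formula for item 2 is wrong: $\psi$ and $\psi'$ are paired with the wrong polarity of $P(\vec{t})$. Run the case check you mention. If $P(\vec{t})$ is true, $\lnot\phi$ reduces to $\lnot\psi''\lor\lnot\psi'$, but your formula reduces to $\lnot\psi''\lor\lnot\psi$; if $P(\vec{t})$ is false, the roles flip. Concretely, take $\psi=\top$, $\psi'=\bot$, $\psi''=\top$. Then $\phi\Leftrightarrow\lnot P(\vec{t})$, so $\lnot\phi\Leftrightarrow P(\vec{t})$, while your formula collapses to $\lnot P(\vec{t})$. Distributing $\lnot\psi''\lor(P(\vec{t})\land\lnot\psi')\lor(\lnot P(\vec{t})\land\lnot\psi)$ correctly gives
\[
[\lnot\psi''\lor\lnot\psi\lor P(\vec{t})]\land[\lnot\psi''\lor\lnot\psi'\lor\lnot P(\vec{t})]\land[\lnot\psi''\lor\lnot\psi\lor\lnot\psi'],
\]
which is the paper's rewriting. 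It is still a Boolean combination of $\psi,\psi',\psi''$ and the same atom over the same free variables, so your \FOtwo argument goes through unchanged once the formula is corrected.

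There is also a smaller inaccuracy in item 1. Your claim that ``if $\vec{t}$ consists of distinct variables then it is a permutation of $\vec{x}$'' fails when $P$ is unary and $\vec{x}=(x,y)$, e.g.\ $\forall x\forall y\,(\psi(x,y)\lor P(x))$. There the other variable must still be quantified inside the condition, giving $\forall x\,((\exists y\,\lnot\psi(x,y))\limp P(x))$, exactly as in your second case. With that folded into the second case, your uniform procedure covers configurations the paper's six-case list leaves implicit, such as $P(y,x)$, $P(x,x)$, or a unary $P$ under two free variables. The cost is that it is less explicit about each concrete rewriting.
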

The proof of Item 2 is simple, as the rewrite only involves pushing negation inward and distributing conjunctions over disjunctions, operations that preserve membership in $\FOtwo$. For Item 1, let $\phi(\vec{x}) =
 \{ \all \Vec{x} (\psi(\Vec{x}) \vee P(\vec{t})),
\all \Vec{x}(\psi'(\Vec{x}) \vee \neg P(\vec{t})),
\all \Vec{x} \psi''(\Vec{x}) \}$.  By assumption, $\vec{x}$ denotes either only $x$ or $x$ and $y$. Since $P$ is unary or binary, $\vec{t}$ has length one or two, and
since functions are excluded, $\vec{t}$ can only consist of a
combination of $x$, $y$, and some constants $A$ and $B$. Hence, for
$\forall \vec{x} (\psi(\vec{x}) \vee P(\vec{t}))$ (likewise for the $\neg P$ case), we have cases:
\begin{itemize}
\item $\forall x (\psi(x) \lor P(A)) \Leftrightarrow \forall x ((x=A) \land \exists x \lnot\psi(x)) \limp P(x)$.
\item $\forall x (\psi(x) \lor P(x))$ has the right form already.
\item $\forall x \forall y (\psi(x,y) \lor P(A,B))$ can be rewritten
  as\\
  $\forall x \forall y ((x=A \land y=B) \land \exists x \exists y
  \lnot \psi(x,y)) \limp P(x,y)$.
\item $\forall x \forall y (\psi(x,y) \lor P(x,B))$ can be rewritten
  as\\
  $\forall x \forall y ((y=B) \land \exists y \lnot\psi(x,y)) \limp
  P(x,y)$.
\item $\forall x \forall y (\psi(x,y) \lor P(A,y))$ can be rewritten
  as\\
  $\forall x \forall y ((x=A) \land \exists x \lnot \psi(x,y)) \limp
  P(x,y)$.
\item $\forall x \forall y (\psi(x,y) \lor P(x,y))$ has the right form
  already.
\end{itemize}
Hence $\forall \vec{x} (\psi(\vec{x}) \vee P(\vec{t}))$, thus $\phi(\vec{x})$, can be expressed in $\FOtwo$.

\begin{theorem}
\label{thm:fo2acclosed}
Given a BAT $\D$ that is BAT-AC wrt ground action $\alpha$, if $\D$ is a $\FOtwo$-BAT wrt $\alpha$, then the progression of $\D_0$ wrt $\D,\alpha$ can be expressed in $\FOtwo$.
\end{theorem}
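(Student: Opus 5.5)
The proof follows the iterative forgetting procedure for BAT-AC from \cite{DBLP:conf/ijcai/0002C24}, as recalled in the proof of Theorem~\ref{thm:acyclicSizeComplexity}. The invariant to check is that every intermediate theory stays in $\FOtwo$. The NLE fluents are processed in an order compatible with the dependency graph $G$, so that all predecessors of $F$ are forgotten before $F$. The invariant I maintain is: the current theory is a finite $\FOtwo$ theory, and for each not-yet-forgotten $F\in\NLEF(\alpha)$ all sentences mentioning $F(\cdot,S_0)$ are in $\FOtwo$ and semi-definitional wrt $F$, and can be merged into a single axiom $\Delta_F$ by Prop.~\ref{prop:CondSizeAnd}. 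Initially this holds because $\D$ is an $\FOtwo$-BAT and $\D_{S_0}$ is separably semi-definitional. The instantiated SSAs, rewritten as in Prop.~\ref{prop:ssatrans}, use only the variables of the SSA instance, so they remain in $\FOtwo$. Merging semi-definitional formulas only takes disjunctions and conjunctions of conditions over the same free variables $\vec{x}$, so no variables are introduced.

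For the inductive step, forgetting the lifting predicate of $F$ uses Thm.~\ref{thm:semidefforget}. This replaces $\Delta_F$ and \eqref{eq:ssa1}--\eqref{eq:ssa2} by the formulas \eqref{eq:ProgAxC}--\eqref{eq:ProgAxE}. Each of these is a boolean combination of $\FOtwo$ formulas with free variables among $\vec{x}$, and $\vec{x}$ contains at most $x,y$. So each lies in $\FOtwo$, after renaming bound variables to $x,y$ if needed, which is always possible for subformulas already in $\FOtwo$. By acyclicity, the new formulas mention at most the NLE fluents $F_1$ (through $\gamma^+_F$) and $F_2$ (through $\gamma^-_F$), and $\gamma^+_F$ and $\gamma^-_F$ are in good form wrt $F_1$ and $F_2$ respectively. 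The next step is to bring \eqref{eq:ProgAxA}--\eqref{eq:ProgAxE} into semi-definitional form wrt $F_1$ or $F_2$ inside $\FOtwo$. For \eqref{eq:ProgAxA} and \eqref{eq:ProgAxB}, negating $\gamma^\pm_F$ stays in good form within $\FOtwo$ by Lemma~\ref{prop:fo2goodformproperties}(2). Disjoining an $F_i$-free formula preserves good form by distributing the disjunction over the three conjuncts, as in Prop.~\ref{prop:SizeNotGF}(3). This distribution introduces no variables, so the result is still in $\FOtwo$. Lemma~\ref{prop:fo2goodformproperties}(1) then turns the good-form formula into an $\FOtwo$ formula that is semi-definitional wrt $F_i$. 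Formulas \eqref{eq:ProgAxD} and \eqref{eq:ProgAxE} are handled the same way, directly with $\gamma^\pm_F$. The resulting formulas are merged into $\Delta_{F_1}$ and $\Delta_{F_2}$, which restores the invariant.

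After all NLE fluents are forgotten, the remaining theory is an $\FOtwo$ theory. The LE fluents are then forgotten as in Eq.~\eqref{eq:forgetlocal}. As argued for Theorem~\ref{thm:fo2leclosed}, this only substitutes quantifier-free formulas \eqref{eq:LEsubstitute} with equalities to constants and takes a finite disjunction, so the result stays in $\FOtwo$. Replacing $S_0$ by $S_\alpha$ keeps the fragment, since ground situation terms are treated as constants.

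The main obstacle is the variable bookkeeping in the inductive step. The atoms $F_i(\vec{t})$ occurring in $\gamma^\pm_F$ may have argument tuples $\vec{t}$ that differ from $\vec{x}$: they may contain constants, or variables in a different order such as $F_i(y,x)$. Lemma~\ref{prop:fo2goodformproperties}(1) covers only the listed cases. I would first close the gap for permuted variables by swapping the roles of $x$ and $y$ throughout the formula, which preserves $\FOtwo$. I would also check that the case where $F_i$ is unary but $\vec{x}=(x,y)$ is handled by existentially quantifying the extra variable in the condition, which is possible within two variables.
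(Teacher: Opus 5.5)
Your proposal is correct and follows the paper's own argument. You run the iterative BAT-AC forgetting from the proof of Theorem~\ref{thm:acyclicSizeComplexity} in dependency order, use Lemma~\ref{prop:fo2goodformproperties} (with the distribution step of Proposition~\ref{prop:SizeNotGF}(3)) to turn each generated formula into an $\FOtwo$ formula that is semi-definitional wrt the next NLE fluent, and finish the LE fluents as in Theorem~\ref{thm:fo2leclosed}. Your extra bookkeeping for permuted or unary argument tuples even fills cases the paper's case list skips; repeated tuples such as $F_i(x,x)$ are handled the same way, by adding an $x=y$ conjunct to the condition.

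The one thing to fix is the invariant. As stated, it fails initially whenever there is an edge $G \rightarrow F$ between NLE fluents. In that case $F$ occurs inside $\gamma^\pm_G$ in the conditions of \eqref{eq:ssa1}--\eqref{eq:ssa2} for $G$, and these axioms are semi-definitional wrt $G$, not $F$. You should therefore claim the invariant only for fluents all of whose predecessors have already been forgotten, which is exactly what your processing order and inductive step provide.
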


We remark that the results in this section are not obvious in the following sense.
$\FOtwo$ is a fragment that fails to have the Craig interpolation and Beth definability properties \cite{DBLP:conf/lics/JungW21}.
Among other things, this means that while forgetting a predicate $P$ from a $\FOtwo$ theory might be possible (i.e., yields an equivalent FO theory), it can happen that the resulting theory is not expressible in $\FOtwo$.
Hence, not every $\FOtwo$-BAT has a progression that is in $\FOtwo$, but for the fragments we consider here, this can be guaranteed.

\section{Universal Theories with Constants}
\label{sec:UTC}

Another decidable fragment of FOL is given by universal theories with constants (UTC). In fact, it is shown that Bernays-Sch\"{o}nfinkel \shortcite{Bernays1928ZumED}  (BS) theories are decidable, i.e., theories with sentences of the form $
\exists x_1\ldots\exists x_k \forall y_1 \cdots \forall y_m \phi$,
where $\phi$ is a quantifier-free formula (potentially with equalities) mentioning no functions except constants. Satisfiability checking of BS theories is NEXPTIME-complete. UTC theories are a special case of BS theories where all sentences are universally quantified, i.e., of the form $\forall y_1 \cdots \forall y_m \phi$. For $\UTC$ theory $T$ and query $\phi$, deciding if $T \models \phi$ can be converted to check if the BS theory $T \union \{\neg \phi\}$ is unsatisfiable, hence the complexity is CoNEXPTIME-complete.

\begin{definition}
Given a ground action $\alpha$, a BAT $\D$ is a $\UTC$-BAT wrt $\alpha$ if $\D_{S_0} \cup \D_{ss}[\alpha,S_0]$ is a finite $\UTC$
theory.
\end{definition}
In what follows, we show that the progressions of $\UTC$-BATs wrt local-effect, normal, and acyclic actions are all within the $\UTC$ fragment, hence decidable. 

\paragraph{$\UTC$-BAT-LE} First, for local effect actions, we have that

\begin{theorem}
\label{thm:utcleclosed}
Given a BAT $\D$ that is BAT-LE wrt ground action $\alpha$, if $\D$ is an $\UTC$-BAT wrt $\alpha$, then the progression of $\D_0$ wrt $\D,\alpha$ can be expressed in $\UTC$.
\end{theorem}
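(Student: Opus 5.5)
We will work directly with the Liu--Lakemeyer progression in Eq.~\eqref{eq:forgetlocal}, discard $\D_{una}$ as the theorem statement allows, and show that the big disjunction can be written as a finite set of universal sentences with constants only.

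The first step is to check that each disjunct $(\D_{S_0} \cup \D_{ss}[\Omega])[\theta](S_0/S_\alpha)$ is a finite $\UTC$ theory. By assumption, $\D_{S_0} \cup \D_{ss}[\alpha,S_0]$ is a finite $\UTC$ theory. The instances in $\D_{ss}[\Omega]$ are obtained by substituting ground terms $\vec{t}$ (built from constants, i.e., the arguments of $\alpha$) for the variables $\vec{x}$. This keeps each sentence universal and function-free, since it only drops leading universal quantifiers. Alternatively, $\D_{ss}[\Omega]$ is entailed by, and can be replaced by, the instantiated SSAs, which are already $\UTC$. Next, $[\theta]$ replaces each atom $P(\vec{t})$ by the quantifier-free formula \eqref{eq:LEsubstitute}. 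This formula contains only equalities between the terms of $\vec{t}$ and the ground $\vec{t}_j$, the truth values $v_j$, and $P(\vec{t})$ itself. Atoms occur only in the quantifier-free matrix under the universal prefix, so the result is again a universal sentence without functions other than constants. Finally, $(S_0/S_\alpha)$ renames one constant to another; under the convention of Sec.~\ref{sec:FOTwo}, the ground situation term is treated as a constant.

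The second step, and the main obstacle, is that $\UTC$ is closed under conjunction but not obviously under disjunction: a disjunction of universal sentences is not syntactically universal. To handle this, we write each disjunct $T_\theta$ as a single sentence $\forall \vec{y}_\theta\, \chi_\theta$, where $\chi_\theta$ is quantifier-free, by conjoining its members. We then rename variables apart so that the prefixes $\vec{y}_\theta$ are pairwise disjoint. Since the universal quantifiers over different disjuncts range over disjoint variable sets, prenexing gives
\[
\bigvee_{\theta} \forall \vec{y}_\theta\, \chi_\theta \Leftrightarrow \forall \vec{y}_{\theta_1} \cdots \forall \vec{y}_{\theta_{2^c}} \bigvee_{\theta} \chi_\theta .
\]
This holds because $\forall y\,\alpha \lor \beta \Leftrightarrow \forall y(\alpha \lor \beta)$ when $y$ is not free in $\beta$. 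The right-hand side is a single universal sentence with only constants, hence a (singleton) $\UTC$ theory. Finiteness follows because $\M(\Omega(S_0))$ has $2^c$ elements.

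By Theorem 3.6 of \cite{DBLP:conf/ijcai/LiuL09}, Eq.~\eqref{eq:forgetlocal} is a progression. Combining this with the two steps above, the progression of $\D_0$ wrt $\D,\alpha$ is expressible in $\UTC$. If a size bound is wanted, the prenexing step does not change the size, so the bound of Theorem~\ref{thm:SizeLocalEffect} still applies.
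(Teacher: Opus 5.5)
Your proposal is correct and follows the same route as the paper: each disjunct $(\D_{S_0}\cup\D_{ss}[\Omega])[\theta](S_0/S_\alpha)$ of Eq.~\eqref{eq:forgetlocal} stays universal, since $[\theta]$ only inserts quantifier-free formulas and ground instantiation only drops universal quantifiers, and the finite disjunction becomes a universal sentence again by renaming variables apart and pulling the universal quantifiers to the front, which is exactly the paper's rule $\forall \vec{x} \phi(\vec{x}) \vee \forall \vec{x} \psi(\vec{x}) \Leftrightarrow \forall \vec{x} \forall \vec{y} (\phi(\vec{x}) \vee \psi(\vec{y}))$. Your explicit check that each disjunct is in $\UTC$, which the paper leaves implicit, is a welcome addition; the ``alternatively'' remark about replacing $\D_{ss}[\Omega]$ by the full instantiated SSAs is not needed and would require its own justification, so you can drop it.
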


Again, the key here is to show that the disjunction of UTC theories can be expressed in $\UTC$. This is indeed the case by applying the following rule to properly rename variables $
  \forall \vec{x} \phi(\vec{x}) \vee \forall \vec{x} \psi(\vec{x}) \Leftrightarrow  
  \forall \vec{x} \forall \vec{y} (\phi(\vec{x}) \vee \psi(\vec{y}))
  $.

\paragraph{$\UTC$-BAT-NR} Likewise, for normal actions, we have

\begin{theorem}
\label{thm:utcnrclosed}
Given a BAT $\D$ that is BAT-NR wrt ground action $\alpha$, if $\D$ is an $\UTC$-BAT wrt $\alpha$, then the progression of $\D_0$ wrt $\D,\alpha$ can be expressed in $\UTC$.
\end{theorem}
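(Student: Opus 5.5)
The plan follows the two-stage structure of BAT-NR progression from \cite{DBLP:conf/ijcai/LiuL09}. First we forget the lifting predicates of the fluents in $\NLEF(\alpha)$ using Thm.~\ref{thm:semidefforget}. Then we forget the remaining ground atoms over $\LEF(\alpha)$ as in Eq.~\eqref{eq:forgetlocal}. We check that each stage maps $\UTC$ theories to $\UTC$ theories. Throughout, we may assume the sentences are in prenex universal form $\forall \vec{y}\,\phi$ with $\phi$ quantifier-free and mentioning only constants. The instantiated axioms $\D_{ss}[\alpha,S_0]$ are in $\UTC$ by the $\UTC$-BAT assumption. The lifting operator $\uparrow S_0$ only renames atoms, so it preserves this form.

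For the first stage, we represent $\D^F_{ss}[\alpha,S_0]$ for $F\in\NLEF(\alpha)$ by Prop.~\ref{prop:ssatrans}. Since $\D_{S_0}\cup\D_{ss}[\alpha,S_0]$ is universal, each $\gamma^\pm_F(\vec{x},\alpha,S_0)$, as it occurs there, is equivalent to a quantifier-free formula. We justify this as follows. By Eq.~\eqref{eq:gammasimplified}, $\gamma^\pm_F$ is $\exists\vec{z}.(\vec{t}=\vec{v}\land\phi^\pm_A)$. Each variable in $\vec{z}$ occurs in $\vec{v}$ and is thus equated to a ground term of $\vec{t}$, so the $\exists\vec z$ can be eliminated by substitution. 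If $\phi^\pm_A$ still contained quantifiers that are not universal in the required polarity, the SSA would not be a $\UTC$ sentence. This is the step I would state carefully: we read the $\UTC$-BAT assumption as saying that the conditions in \eqref{eq:ssa1}--\eqref{eq:ssa4}, and the conditions of the semi-definitional sentences of $\D_{S_0}$, are quantifier-free once in prenex form.

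Thm.~\ref{thm:semidefforget} then yields sentences $\forall\vec{x}.\psi(\vec{x})\limp\phi(\vec{x})$ with $\psi$ a sufficient and $\phi$ a necessary condition. The sufficient condition $\psi$ occurs in a universal sentence $\forall\vec x\forall\vec u.\psi_0(\vec x,\vec u)\limp P(\vec x)$, which is equivalent to $\forall\vec x.(\exists\vec u\,\psi_0)\limp P(\vec x)$. Combining it with $\forall\vec x\forall\vec w.P(\vec x)\limp\phi_0(\vec x,\vec w)$ gives $\forall\vec x.(\exists\vec u\,\psi_0)\limp\forall\vec w\,\phi_0$. This is equivalent to $\forall\vec x\forall\vec u\forall\vec w.\psi_0\limp\phi_0$, which is universal and adds no new function symbols. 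The untouched sentences $T'$ remain universal. Repeating this for every $F\in\NLEF(\alpha)$ stays inside $\UTC$, since each new sentence is again of the same shape. The main subtlety is the prenexing argument just given: we must make sure that the sufficient conditions contribute only existentials on the left of the implication, which become universals, and that the necessary conditions contribute only universals on the right. We obtain this by always keeping the semi-definitional sentences in the displayed prenex form before applying Thm.~\ref{thm:semidefforget}.

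For the second stage, the theory obtained is universal and local-effect with respect to $\LEF(\alpha)$. Applying the substitution \eqref{eq:LEsubstitute} introduces only equalities between terms and truth values, so it is quantifier-free and keeps each disjunct $(\cdot)[\theta](S_0/S_\alpha)$ in $\UTC$. The remaining task is to express the finite disjunction over $\theta\in\M(\Omega(S_0))$ in $\UTC$. As for Thm.~\ref{thm:utcleclosed}, we first take each disjunct as the conjunction of its universal sentences, merged into a single $\forall\vec x\,\phi$. We then apply the renaming rule $\forall\vec x\phi(\vec x)\vee\forall\vec x\psi(\vec x)\Leftrightarrow\forall\vec x\forall\vec y(\phi(\vec x)\vee\psi(\vec y))$ repeatedly. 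Finally, $\D_{una}$ is not part of the progression, and $S_\alpha$ is a ground term that can be treated as a constant. So the resulting progression is a finite $\UTC$ theory.
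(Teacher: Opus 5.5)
Your proposal is correct and follows essentially the same route as the paper. You forget the NLE lifting predicates via Thm.~\ref{thm:semidefforget}, noting that pairing sufficient with necessary conditions keeps every sentence universal, and then forget the LE atoms via Eq.~\eqref{eq:forgetlocal} with the renaming rule for disjunctions of universal sentences, exactly as in Thm.~\ref{thm:utcleclosed}. Your prenexing argument (existentials hidden in sufficient conditions and universals in necessary conditions merge into one universal prefix) and your elimination of $\exists\vec{z}$ in $\gamma^\pm_F$ are simply a more careful justification of the paper's claim that the combined conditions are quantifier-free by assumption.
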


The proof is like its counterpart in $\FOtwo$, it suffices to show that forgetting NLE fluents preserves membership in $\UTC$. This is indeed the case as the sufficient and necessary conditions to be combined when forgetting NLE fluents, as in Thm.~\ref{thm:semidefforget}, are all quantifier-free by assumption; hence, the result is still quantifier-free, leading to a $\UTC$ FO progression.

\paragraph{$\UTC$-BAT-AC} Lastly, for acyclic actions, we have that
\begin{lemma}
\label{prop:utcgoodformproperties}
  If $\phi(\vec{x}) \in \UTC$ is in good form wrt $P$, then
  \begin{enumerate}
      \item $\phi(\vec{x})$ can be rewritten to a formula in $\UTC$ that is semi-definitional wrt $P$; 
      
      \item $\neg \phi(\vec{x})$ can be rewritten to a formula in $\UTC$ that is in good form wrt $P$. 
  \end{enumerate}
\end{lemma}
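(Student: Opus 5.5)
The plan mirrors the $\FOtwo$ counterpart, Lemma~\ref{prop:fo2goodformproperties}, but the invariant we track is different: all quantification must be universal and outermost, with no existential quantifiers anywhere. Write $\phi(\vec{x})$ as $\forall \vec{x}.\,[\psi(\vec{x}) \lor P(\vec{t})] \land [\psi'(\vec{x}) \lor \lnot P(\vec{t})] \land \psi''(\vec{x})$. Membership in $\UTC$ means we may take $\psi,\psi',\psi''$ quantifier-free after moving their universal quantifiers to the front; we rename apart where needed, which is sound because each conjunct is universally closed. The only function symbols are constants, so every term in $\vec{t}$ is a variable from $\vec{x}$ or a constant.

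For Item 2, we push the negation inward. As in Prop.~\ref{prop:goodformproperties}, $\lnot\phi$ amounts to negating the matrix of the universally closed conjuncts, which turns them into existential sentences. The key observation concerns how $\lnot\phi(\vec{x})$ is used in the progression argument. It arises as a subformula $\lnot\gamma^\pm_F$ occurring under the universal prefix of a clause such as \eqref{eq:ProgAxA}, with free variables $\vec{x}$ and a quantifier-free body. So we treat $\lnot$ as acting on the quantifier-free body $[\psi \lor P(\vec{t})] \land [\psi' \lor \lnot P(\vec{t})] \land \psi''$ with $\vec{x}$ free. De Morgan and distribution then give $(\lnot\psi \land \lnot P(\vec{t})) \lor (\lnot\psi' \land P(\vec{t})) \lor \lnot\psi''$. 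This is equivalent to $[\lnot\psi \lor \lnot\psi'' \lor P(\vec{t})] \land [\lnot\psi' \lor \lnot\psi'' \lor \lnot P(\vec{t})] \land (\lnot\psi \lor \lnot\psi')$. To check this, case on $P(\vec{t})$: if it is true, the conjunction reduces to $\lnot\psi'\lor\lnot\psi''$, matching the original. The result is quantifier-free and in good form, so its universal closure is in $\UTC$.

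For Item 1, we rewrite each conjunct $\forall\vec{x}(\psi(\vec{x}) \lor P(\vec{t}))$ into the shape $\forall \vec{y}.\,\chi(\vec{y}) \limp P(\vec{y})$ with fresh distinct variables $\vec{y}$. The conjunct with $\lnot P$ is symmetric. We introduce equalities $y_i = t_i$ and obtain $\forall \vec{y}\,\forall\vec{x}.\,(\bigwedge_i y_i = t_i \land \lnot\psi(\vec{x})) \limp P(\vec{y})$. This is equivalent to the original: instantiating $\vec{y}:=\vec{t}$ recovers it, and conversely the equalities let us substitute. However, the extra universally quantified $\vec{x}$ in the antecedent does not occur in $P(\vec{y})$. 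Semi-definitionality asks for the form $\chi(\vec{y})\limp P(\vec{y})$, and read literally this would force $\chi(\vec{y}) = \exists\vec{x}(\ldots)$, an existential. This is exactly where the $\FOtwo$ proof used $\exists$.

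This is the main obstacle, and we plan to resolve it by never forming $\exists\vec{x}$ explicitly. We keep the condition as a quantifier-free $\chi(\vec{x},\vec{y})$ with extra universally quantified parameters, treating $\forall\vec{x}\vec{y}.\,\chi(\vec{x},\vec{y})\limp P(\vec{y})$ as semi-definitional. We then check that Thm.~\ref{thm:semidefforget} still applies in this generalized form. Combining a sufficient condition $\chi(\vec{x},\vec{y})$ with a necessary condition $\chi'(\vec{x}',\vec{y})$ yields $\forall\vec{x}\vec{x}'\vec{y}.\,\chi\limp\chi'$. This is logically equivalent to $\forall\vec{y}.\,\exists\vec{x}\chi \limp \forall\vec{x}'\chi'$, and it is universal. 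Hence forgetting, and so the whole progression pipeline, stays inside $\UTC$. The $\psi''$ conjunct contains no $P$ and is kept unchanged.
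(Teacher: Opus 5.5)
Your route is the paper's own. Your Item~1 rewrite $\forall\vec{y}\,\forall\vec{x}.\,(\vec{y}=\vec{t}\land\lnot\psi(\vec{x}))\limp P(\vec{y})$ is exactly the paper's $\forall\vec{x}\,\forall\vec{y}.\,\vec{y}=\vec{t}\limp(\psi(\vec{x})\lor P(\vec{y}))$, and Item~2 is done, as in the paper, by pushing the negation inward and distributing. You also note that the sufficient condition carries extra universally quantified parameters $\vec{x}$, so read literally it is $\exists\vec{x}(\ldots)$. Your check that Thm.~\ref{thm:semidefforget} then still yields the universal sentence $\forall\vec{x}\,\vec{x}'\,\vec{y}.\,\chi\limp\chi'$ is correct, and it makes explicit a point the paper only asserts. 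Your reading of $\lnot\phi(\vec{x})$ as negating the quantifier-free matrix under the outer universal prefix is also the sensible one.

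However, there is a concrete error in Item~2: the formula you give is not equivalent to $\lnot\phi$. Take an assignment in which $P(\vec{t})$, $\psi$ and $\psi'$ are true and $\psi''$ is false. Then $(\lnot\psi\land\lnot P(\vec{t}))\lor(\lnot\psi'\land P(\vec{t}))\lor\lnot\psi''$ is true, and so are your first two conjuncts. Your third conjunct $\lnot\psi\lor\lnot\psi'$, however, is false.

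Your case check hides this. With $P(\vec{t})$ true, your conjunction reduces to $(\lnot\psi'\lor\lnot\psi'')\land(\lnot\psi\lor\lnot\psi')$, not to $\lnot\psi'\lor\lnot\psi''$. The slip is in the distribution step. The CNF of $(\lnot\psi\land\lnot P(\vec{t}))\lor(\lnot\psi'\land P(\vec{t}))$ is $(\lnot\psi\lor P(\vec{t}))\land(\lnot\psi'\lor\lnot P(\vec{t}))\land(\lnot\psi\lor\lnot\psi')$. Disjoining $\lnot\psi''$ must add $\lnot\psi''$ to every clause, including the last one.

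The correct good-form result is $[\lnot\psi\lor\lnot\psi''\lor P(\vec{t})]\land[\lnot\psi'\lor\lnot\psi''\lor\lnot P(\vec{t})]\land(\lnot\psi\lor\lnot\psi'\lor\lnot\psi'')$. This is the formula used in the appendix proof of Prop.~\ref{prop:SizeNotGF}(2). Since its third conjunct is the resolvent of the first two, you may also drop it. With this repair the rest of your argument goes through unchanged.
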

Again, Item 2 is trivial as the rewrite only involves pushing negation inward and distributing conjunctions over disjunctions, operations that preserve membership in $\UTC$. We prove the first item. For Item 1, let
$\phi(\vec{x}) = \{ \all \Vec{x} (\psi(\Vec{x}) \vee P(\vec{t})),
\all \Vec{x}(\psi'(\Vec{x}) \vee \neg P(\vec{t})),
\all \Vec{x} \psi''(\Vec{x}) \}$. $\psi(\Vec{x}),\psi'(\Vec{x}),\psi''(\Vec{x})$ are all quantifier-free and $\vec{t}$ can only contain variables in $\vec{x}$ by assumption. $\all \Vec{x} (\psi(\Vec{x}) \vee P(\vec{t}))$ (likewise for the $\neg P$ case) can be rewritten as $\all \Vec{x} \all \Vec{y}.\vec{y} = \Vec{t} \supset (\psi(\Vec{x}) \vee P(\vec{y}))$, which is in UTC and semi-definitional wrt $P$, and hence so can $\phi(x)$.

\begin{theorem}
\label{thm:utcacclosed}
Given a BAT $\D$ that is BAT-AC wrt ground action $\alpha$, if $\D$ is an $\UTC$-BAT wrt $\alpha$, then the progression of $\D_0$ wrt $\D,\alpha$ can be expressed in $\UTC$.
\end{theorem}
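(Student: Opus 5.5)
The plan is to follow the progression procedure for BAT-AC from \cite{DBLP:conf/ijcai/0002C24}, as recalled in the proof of Theorem~\ref{thm:acyclicSizeComplexity}, and check that every step keeps us inside $\UTC$. We first forget the lifting predicates of the fluents in $\NLEF(\alpha)$ one at a time, following a topological order of the dependency graph. Then we forget the remaining atoms over $\LEF(\alpha)$ as in Eq.~\eqref{eq:forgetlocal}. The invariant, maintained by induction along the DAG order, is: \emph{the current theory is a finite set of universally quantified sentences with quantifier-free matrices and only constants as function symbols, and, for every fluent not yet forgotten, it is separably semi-definitional.}

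For the base case, $\D_{S_0}\cup\D_{ss}[\alpha,S_0]$ is $\UTC$ by assumption. The rewriting of the instantiated SSAs via Prop.~\ref{prop:ssatrans} only splits an equivalence into clauses over the same quantifier-free $\gamma^\pm_F$, so it stays universal. One point needs care here. After simplification by Eq.~\eqref{eq:gammasimplified}, $\gamma^\pm_F$ has the shape $\exists\vec z.(\vec t=\vec v\land\phi^\pm_A)$. I read the $\UTC$-BAT assumption as saying these $\exists\vec z$ occur only negatively, or are already eliminated. When $\gamma^\pm_F$ sits in the antecedent of an implication, the $\exists\vec z$ becomes a leading $\forall\vec z$, which keeps the sentence universal.

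For the inductive step at an NLE fluent $F$, Thm.~\ref{thm:semidefforget} replaces $\Delta_F$ and the SSA clauses by the formulas \eqref{eq:ProgAxC}--\eqref{eq:ProgAxE}. These are disjunctions of already quantifier-free conditions under universal quantifiers, so they lie in $\UTC$, as in the argument for Theorem~\ref{thm:utcnrclosed}. These new formulas mention the successor fluents $F_1,F_2$ only through $\gamma^+_F$ and $\gamma^-_F$, which are in good form with respect to $F_1$ and $F_2$. Using Prop.~\ref{prop:SizeNotGF}(2,3) (negation and disjunction with $P$-free parts), the formulas can be brought into good form. Lemma~\ref{prop:utcgoodformproperties} then guarantees that both the negation step and the conversion to semi-definitional form (via the fresh universally quantified $\vec y=\vec t$ trick) stay in $\UTC$. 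Merging the result into a single $\Delta_{F_1}$ (resp.\ $\Delta_{F_2}$) by Prop.~\ref{prop:CondSizeAnd} only takes conjunctions and disjunctions of conditions, so membership in $\UTC$ is preserved and the invariant holds.

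Once all NLE fluents are forgotten, the LE part is handled exactly as in Theorem~\ref{thm:utcleclosed}. Each substitution $[\theta]$ from \eqref{eq:LEsubstitute} replaces atoms by quantifier-free formulas, which preserves $\UTC$. The finite disjunction over $\theta$ is pulled back into universal form by renaming the variables apart. Substituting $S_\alpha$ for $S_0$ is a constant-for-constant replacement.

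I expect the main obstacle to be verifying that the good-form manipulations never introduce an existential quantifier. The $\neg\gamma^\pm_F$ in \eqref{eq:ProgAxA}--\eqref{eq:ProgAxB} is harmless if $\gamma^\pm_F$ is quantifier-free after \eqref{eq:gammasimplified}, but not otherwise. The plan is to use the $\UTC$-BAT hypothesis on $\D_{ss}[\alpha,S_0]$ to assume that $\gamma^\pm_F$ is quantifier-free, with $\vec z$ bound to ground terms through $\vec t=\vec v$ and eliminated by substitution. With that in place, the remaining steps reduce to Lemma~\ref{prop:utcgoodformproperties}.
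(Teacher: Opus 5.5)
Your proposal is correct and takes essentially the same route as the paper: mirroring its $\FOtwo$-BAT-AC argument, the paper iterates the inductive forgetting procedure of Liu and Cla{\ss}en~\cite{DBLP:conf/ijcai/0002C24} along the dependency DAG, uses Lemma~\ref{prop:utcgoodformproperties} to put each newly generated formula back into $\UTC$ semi-definitional form for the next NLE fluent, and then handles the LE fluents as in Theorem~\ref{thm:utcleclosed}. Like you, the paper tacitly treats quantifier-freeness of $\gamma^\pm_F(\vec{x},\alpha,S_0)$ as part of the $\UTC$-BAT hypothesis, and your final choice is the right one: the ``only negatively'' reading would not suffice, because $\gamma^\pm_F$ occurs both positively and negatively in \eqref{eq:ProgAxA}--\eqref{eq:ProgAxE}.
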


We remark that the fact that the progression of a UTC-BAT can be expressed by another UTC theory here is not so surprising, as \cite{DBLP:conf/kr/Arenas18} showed that the progression of a UTC-BAT wrt \emph{any actions} can be expressed by another UTC theory. In fact, a resolution-like algorithm is proposed there to compute such a progression. Nevertheless, the progression might not be finite, and deciding if a finite progression exists is undecidable; namely, the proposed algorithm might not terminate. In this regard, our contribution here is that we show that if the actions are furthermore local-effect, normal, or acyclic, finite progressions of the respective UTC-BATs actually exist (and can be computed accordingly). In addition, due to our results in Section \ref{sec:SpaceComplexity}, the size of such finite progressions grows only polynomially, even linearly.

\section{Related Work and Discussion}
\label{sec:RelatedWork}

In this paper, we studied the size complexity and decidability of FO progressions in terms of local-effect, normal, and acyclic actions.
Previous contributions along similar lines include those due to Liu and Lakemeyer \shortcite{DBLP:conf/ijcai/LiuL09} and due to Liu and Cla{\ss}en \shortcite{DBLP:conf/ijcai/0002C24}, who showed that a progression can be efficiently computed for normal and acyclic theories, respectively, under the restrictions that the instantiated SSAs are quantifier-free and that the initial theory is in so-called {\em $\properplus$} form.
Intuitively, $\properplus$ KBs \cite{DBLP:conf/kr/Lakemeyer02} correspond to consistent, potentially infinite sets of ground clauses.
What makes them particularly interesting is that certain queries can be computed in polynomial time \cite{DBLP:conf/aaai/LiuL05}.
De Giacomo et al.~\shortcite{de2016progression} showed that progression is FO definable for Situation Calculus theories where the number of fluent instances affected by an action is {\em bounded}, in which case verification (and hence projection) is decidable.
Arenas et al.~\shortcite{DBLP:conf/kr/Arenas18} investigate the feasibility and complexity of progression for UTC-based action theories (without the additional restriction of SSAs to local-effect, normal or acyclic form as done here).
Eiter and Sold{\`a} \shortcite{DBLP:conf/ijcai/EiterS24} studied computational aspects of progression in the context of temporal equilibrium logic.

More broadly, related work includes that by Gu and Soutchanski \shortcite{DBLP:journals/amai/GuS10} who studied the complexity of {\em regression} for action theories over the two-variable fragment of FOL with counting quantifiers as well as description logics $\ALCOU$ and $\ALCQOU$.
Lakemeyer and Levesque \shortcite{DBLP:conf/kr/LakemeyerL14} presented a variant of the modal epistemic Situation Calculus based on a model of limited belief that admits a decidable, yet incomplete form of reasoning.
Calvanese et al.~\shortcite{DBLP:conf/ijcai/CalvaneseGS15} proved that Situation Calculus theories extended with description logic TBoxes acting as state constraints lead to undecidability of the satisfiability problem, even for the simplest kinds of description logics, and the simplest kind of action theories.
When TBoxes are merely viewed as part of the initial theory rather than state constraints, Zarrie{\ss} \shortcite{benjamin-thesis} showed that the verification problem (which subsumes the projection problem) for action formalisms based on various description logics becomes decidable, providing exact complexities.
Note that while many description logics can be viewed as fragments of \FOtwo, related results on projection are not directly applicable to a \FOtwo-based Situation Calculus since translating \FOtwo into description logic leads to a (provably unavoidable) exponential blow-up \cite{DBLP:conf/dlog/LutzSW01}.

Possible directions for future work include exploring whether more expressive classes, or less expressive but more tractable classes, admit similar closure properties as presented here.
Our results also suggest the possibility to use progression instead of regression for the purpose of decidable verification of Golog as in \cite{DBLP:conf/aaai/ZarriessC16}.
Our main motivation however is in practical applications, i.e., equipping real-world agents with the capacity to plan and act in the presence of incomplete information and unbounded domains.
There is a recent trend in AI planning to work directly on FO representations \cite{DBLP:conf/ijcai/CorreaG24}, but merely as a way to avoid the bottleneck of grounding over the given set of objects, which is still assumed to be finite.
When it comes to the full expressiveness of Situation Calculus (within decidability bounds), there is currently a lack of concrete implementations and corresponding benchmarks.
We believe that our work here lays the foundations towards such planning systems that work in truly open worlds.

\ifthenelse{\boolean{shortver}}{}{%
\section*{Appendix}
\subsection*{Proof of Theorem~\ref{thm:acyclicSizeComplexity}}
First, we provide a detailed proof for Proposition~\ref{prop:SizeNotGF}.

\begin{propositionrestated}[Restatement of Proposition~\ref{prop:SizeNotGF}]

  If $\phi(\vec{x})$ is in good form wrt $P$, then
\begin{enumerate}
  \item $\phi(\vec{x})$ can be
  rewritten to $\varphi(\vec{y})$ that is semi-definitional
  wrt $P(\vec{y})$, moreover, if $a=\max\{\size{\vec{x}},\size{\vec{y}}\}$, then  $\size{\varphi(\vec{y})} \leq \size{\phi(\vec{x})} + 7a + 2$ and
  $\consize{P}{\varphi(\vec{y})} \leq \consize{P}{\phi(\vec{x})} + 3a + 1$;
    
    \item $\lnot\phi(\vec{x})$ can be rewritten to 
  $\varphi(\vec{x})$ that is in good form wrt $P$ with
  $\size{\varphi(\vec{x})} \leq 3 \size{\phi(\vec{x})}$ and
  $\consize{P}{\varphi(\vec{x})} \leq 3 \consize{P}{\phi(\vec{x})} + 2$;

  \item if $\psi(\vec{x})$ mentions no $P$, then $\phi(\vec{x}) \lor \psi(\vec{x})$
  can be rewritten to a formula $\varphi(\vec{x})$ in good form wrt
  $P$, where
  $\size{\varphi(\vec{x})} = \size{\phi(\vec{x})} + 3
  \size{\psi(\vec{x})} + 3$ and
  $\consize{P}{\varphi(\vec{x})} = \consize{P}{\phi(\vec{x})} +
    \size{\psi(\vec{x})} + 1$.
\end{enumerate}
\end{propositionrestated}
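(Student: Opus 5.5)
We prove the three items separately, each by an explicit syntactic rewriting whose size is then counted directly from the definition of $\size{\cdot}$. Throughout, write $\phi(\vec{x}) = (\psi \lor P(\vec{t})) \land (\psi' \lor \lnot P(\vec{t})) \land \psi''$, with $\psi,\psi',\psi''$ free of $P$. Note that $\size{\phi} = \size{\psi}+\size{\psi'}+\size{\psi''} + 7$: two $P$-atoms, one negation, two inner disjunctions and two conjunctions. The condition size $\consize{P}{\phi}$ is the maximum of the three condition sizes.

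\textbf{Item 1.} Following the idea behind Prop.~\ref{prop:goodformproperties}, introduce fresh variables $\vec{y}$ of the arity of $P$ and use the equivalence
\[
\psi(\vec{x}) \lor P(\vec{t}) \;\Leftrightarrow\; \forall \vec{y}.\, \bigl(\vec{y} \neq \vec{t} \lor \psi(\vec{x})\bigr) \lor P(\vec{y}),
\]
and the analogous one for the negative conjunct. Here $\vec{y}\neq\vec{t}$ abbreviates a disjunction of at most $a$ negated equalities, costing at most $3a-1$ symbols ($a$ atoms, $a$ negations, $a-1$ disjunctions). Adding one further disjunction with $\psi$ gives a condition size of at most $\size{\psi}+3a$. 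This is within the claimed $+3a+1$, which leaves room for one extra connective if the conditions are to be written with $\vec{x}$ existentially bound, as in the \FOtwo\ case analysis.

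For the total size, both conjuncts share the rewriting of $\vec{t}$. I would count the overhead as follows. The equality block appears once per conjunct, giving about $2\cdot(2a)$ atoms and negations. Add roughly $a$ connectives for chaining, plus the two new connectives per conjunct. The main subtle point is bookkeeping so that the total comes to at most $7a+2$: leading universal quantifiers are not counted, and the negation of the equalities can be absorbed by writing $\vec{y}=\vec{t}\land\lnot\psi \limp P(\vec{y})$ in semi-definitional form. I expect this exact constant-chasing to be the fiddliest step, and I would settle it by fixing one canonical encoding and counting symbols.

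\textbf{Item 2.} Compute
\[
\lnot\phi \Leftrightarrow (\lnot\psi \land \lnot P(\vec{t})) \lor (\lnot\psi' \land P(\vec{t})) \lor \lnot\psi''.
\]
Since $P(\vec{t})$ and $\lnot P(\vec{t})$ are complementary, this is equivalent to
\[
(\lnot\psi \lor \lnot\psi'' \lor P(\vec{t})) \land (\lnot\psi' \lor \lnot\psi'' \lor \lnot P(\vec{t})) \land (\lnot\psi \lor \lnot\psi' \lor \lnot\psi''),
\]
which is in good form (resolution of the two $P$-clauses gives the third). Each new condition is a disjunction of at most three negated old conditions, so its size is at most $3\,\consize{P}{\phi}+2$ after accounting for negations and connectives; pushing negations inward does not increase size beyond this. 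Summing the three conditions, together with the $P$-atoms and connectives, yields a total of at most $3\size{\phi}$.

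\textbf{Item 3.} Distribute the disjunction:
\[
\phi\lor\psi \Leftrightarrow ((\psi_1\lor\psi)\lor P(\vec{t}))\land((\psi'\lor\psi)\lor\lnot P(\vec{t}))\land(\psi''\lor\psi),
\]
where $\psi_1$ is the original positive condition of $\phi$. Each condition grows by exactly $\size{\psi}+1$, which gives the stated condition size. The total size grows by exactly $3\size{\psi}+3$, coming from three copies of $\psi$ and three new disjunctions.
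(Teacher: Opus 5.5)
Items 2 and 3 use the paper's rewritings: the same three-clause CNF of $\lnot\phi$, and distributing $\psi$ over the three conjuncts. Item 1 uses the same equality-guard idea, but as you set it up it has a genuine gap.

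Your equivalence $\psi(\vec{x})\lor P(\vec{t})\Leftrightarrow\forall\vec{y}.\,(\vec{y}\neq\vec{t}\lor\psi(\vec{x}))\lor P(\vec{y})$ is valid. However, it leaves $\vec{x}$ free in the condition $\vec{y}\neq\vec{t}\lor\psi(\vec{x})$, so the result is not semi-definitional wrt $P(\vec{y})$: the conditions must be formulas in $\vec{y}$ alone. This is not cosmetic. Theorem~\ref{thm:semidefforget} pairs a sufficient condition with a necessary condition under a single quantifier prefix, and if both still share $\vec{x}$ the outcome is too weak. For example, take $\phi(x)=(\psi(x)\lor P(A))\land(\psi'(x)\lor\lnot P(A))$. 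Forgetting $P$ must yield $(\forall x\,\psi(x))\lor(\forall x\,\psi'(x))$, whereas combining your two conditions yields only $\forall x.\,\psi(x)\lor\psi'(x)$.

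The missing step, which the paper's construction performs, is to push the implicit outer $\forall\vec{x}$ into each condition. It must also go in front of $\psi''$, because the free variables of the result are now $\vec{y}$:
\[
\bigl((\forall\vec{x}.\,\vec{y}\neq\vec{t}\lor\psi(\vec{x}))\lor P(\vec{y})\bigr)\land\bigl((\forall\vec{x}.\,\vec{y}\neq\vec{t}\lor\psi'(\vec{x}))\lor\lnot P(\vec{y})\bigr)\land\forall\vec{x}.\,\psi''(\vec{x}).
\]
These three blocks of up to $a$ quantifiers account for $3a$ of the $7a$; they are not ``one extra connective''. With your encoding of $\vec{y}\neq\vec{t}$ as $\bigvee_i\lnot(y_i=t_i)$ (size $3a-1$), each condition would then grow by $4a$ in the worst case. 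That exceeds the claimed $3a+1$ once $a\geq 2$. Encode it instead as $\lnot\bigwedge_i(y_i=t_i)$, of size $2a$. Then each condition grows by at most $a+2a+1=3a+1$, and the whole formula by at most $3a+2\cdot 2a+2=7a+2$. This is the direct count your ``about''/``roughly'' paragraph was missing.

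A smaller point in Item 2: the third condition $\lnot\psi\lor\lnot\psi'\lor\lnot\psi''$ has size $\size{\psi}+\size{\psi'}+\size{\psi''}+5$, which can exceed $3\consize{P}{\phi}+2$. The paper's condition-size count omits these negations as well. You already observed that this clause is the resolvent of the other two, so simply drop it. The remaining conditions $\lnot\psi\lor\lnot\psi''$ and $\lnot\psi'\lor\lnot\psi''$ have size at most $2\consize{P}{\phi}+3\leq 3\consize{P}{\phi}+2$, and the total-size bound only improves.
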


\begin{proof}
Let $\phi(\vec{x}) =
                    (\psi(\vec{x}) \lor P(\vec{t})) \land (\psi'(\vec{x}) \lor \lnot P(\vec{t})) \land \psi''(\vec{x})$,
    \begin{enumerate}
        \item 
$\phi(\vec{x})$ is equivalent to $\varphi(\vec{y})$ given by 
\begin{equation}
    \label{eq:goodformtranssemiuniversally}
\begin{aligned}
     &((\forall \vec{x}.\, \vec{y} \neq \vec{t} \lor \psi(\vec{x})) \lor P(\vec{y})) ~\land\\
     &((\forall \vec{x}.\, \vec{y} \neq \vec{t} \lor \psi'(\vec{x})) \lor \lnot P(\vec{y}))
     ~\land~ (\forall \vec{x}.\, \psi''(\vec{x}))
\end{aligned}
\end{equation}
  Hence, the size increases by 3 occurrences of $\forall \vec{x}$
  (each size $l$), 2 occurrences of $(\vec{y} \neq \vec{t})$ (each
  size 2$l$), and two additional disjunction symbols;    
        \item $\lnot\phi(\vec{x})$ is equivalent to a formula $\varphi(\vec{x})$ given by

     $(\lnot\psi(\vec{x}) \lor \lnot\psi''(\vec{x}) \lor P(\vec{t})) ~\land
     (\lnot\psi'(\vec{x}) \lor \lnot\psi''(\vec{x}) \lor \lnot P(\vec{t}))~\land
     (\lnot\psi(\vec{x}) \lor \lnot\psi'(\vec{x}) \lor \lnot\psi''(\vec{x}))$
 
  Let $l_1 = \size{\psi(\vec{x})}$, $l_2 = \size{\psi'(\vec{x})}$, and
  $l_3 = \size{\psi''(\vec{x})}$. Then
  $\size{\phi(\vec{x})} = l_1 + l_2 + l_3 + 7$, and
  $\size{\varphi(\vec{x})} = 2 l_1 + 2 l_2 + 3 l_3 + 18$.  Also,
  $\consize{P}{\phi(\vec{x})} = \max \{l_1,l_2,l_3\}$, and every 
  condition in $\varphi(\vec{x})$ has size
  $\leq l_1+l_2+l_3+2 \leq \consize{P}{\varphi(\vec{x})}+2$;
  \item $\phi(\vec{x}) \lor \psi(\vec{x})$ is equivalent to  
  $\varphi(\vec{x})$ given by

     $(\psi(\vec{x}) \lor \psi'(\vec{x}) \lor P(\vec{t})) ~\land
     (\psi(\vec{x}) \lor \psi''(\vec{x}) \lor \lnot P(\vec{t}))~\land
    (\psi(\vec{x}) \lor \psi'''(\vec{x}))$.  
  Observe that the new formula adds 3 occurrences of $\psi(\vec{x})$
  and 3 disjunction symbols to $\phi(\vec{x})$.
    \end{enumerate}
\end{proof}

\begin{theoremrestated}[Restatement of Theorem~\ref{thm:acyclicSizeComplexity}]
If $\D$ is BAT-AC wrt ground action $\alpha$, the result of forgetting the lifting predicates for all fluents in $\NLEF(\alpha)$ in $(\D_{S_0} \cup \D_{ss}[\alpha,S_0])\uparrow S_0$ can be represented by a FO theory whose size is in $O(2^d(n+m))$, where
$d$ is the depth of the dependency graph,
$n$ the size of $\D_{S_0}$,
and $m$ the size of $\D_{ss}$.
\end{theoremrestated}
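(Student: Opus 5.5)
The plan is to follow the inductive elimination scheme from the proof of Theorem 3 in \cite{DBLP:conf/ijcai/0002C24}. We process the NLE fluents in a topological order of the dependency graph $G$, so that every predecessor of $F$ is forgotten before $F$. At each step we bound two quantities separately: the condition size $w_F$ of the current axiom $\Delta_F$, and the size of the formulas produced by forgetting $F$.

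\textbf{Step 1 (forgetting one fluent).} By the w.l.o.g.\ assumption and Prop.~\ref{prop:CondSizeAnd}, $\D_{S_0}$ contains exactly one axiom $\Delta_F$ for $F$, with condition size $w_F$. Prop.~\ref{prop:ssatrans} supplies the semi-definitional axioms \eqref{eq:ssa1},\eqref{eq:ssa2}. Applying Thm.~\ref{thm:semidefforget} then replaces $\Delta_F$ together with $\D^F_{ss}[\alpha,S_0]$ by the pairwise combinations of sufficient and necessary conditions, namely \eqref{eq:ProgAxC}--\eqref{eq:ProgAxE}.

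\textbf{Step 2 (restoring good form).} Acyclicity gives at most two children $F_1,F_2$ of $F$, with $\gamma^+_F$ in good form wrt $F_1$ and $\gamma^-_F$ in good form wrt $F_2$. The new formulas mentioning $F_1$ or $F_2$ must be turned back into good form and then into semi-definitional form, so that they can be merged into $\Delta_{F_1}$ and $\Delta_{F_2}$. For this we use Prop.~\ref{prop:SizeNotGF}:
\begin{itemize}
\item item 2 handles $\lnot\gamma^\pm_F$;
\item item 3 handles the disjunction with $P$-free parts;
\item item 1 converts the result into semi-definitional form over $F_i(\vec{y})$.
\end{itemize}
Tracking these bounds gives condition sizes of the form $w_F + O(k+a)$, or $O(k+a)$ alone, for the pieces sent to $F_i$. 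Here $k$ bounds the size of each NLE SSA. After merging via Prop.~\ref{prop:CondSizeAnd}, this yields the recurrence
\begin{align*}
w_{F'} \le w + \sum_{F\to F'} (w_F + 4k + 6a + 10).
\end{align*}

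\textbf{Step 3 (total size).} The progression consists of the union over all NLE $F$ of \eqref{eq:ProgAxC}--\eqref{eq:ProgAxE}, together with the untouched parts of the theory. The $\gamma^\pm_F$, the $F(\vec{x},S_\alpha)$ atoms and the connectives contribute $O(m)$ in total. The $\NWSC_F$ and $\SNC_F$ occurrences contribute at most $4\sum_F w_F$.

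\textbf{Step 4 (solving the recurrence) — the main obstacle.} Unrolling the recurrence attributes to each $w_{F'}$ one term $w + 4k + 6a + 10$ per path in $G$ ending at $F'$. Exchanging the order of summation, $\sum_{F'} w_{F'}$ is therefore at most $(w + 4k + 6a + 10)$ times the number of pairs (node, path starting at that node). Because out-degree is at most $2$ and path length is at most $d$, each node starts at most $2^{d+1}$ paths.

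With $l$ the number of NLE fluents, this gives $\sum_F w_F \le 2^{d+1} l (w + 4k + 6a + 10)$. Since $l\cdot w\in O(n)$, $l\cdot k\in O(m)$, and $a$ is constant, we obtain $O(2^d(n+m))$.

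The delicate points are the following. First, the path-counting bound must be per source, not per target; in-degree is unbounded, so counting per target would not be controlled. Second, the sizes of the rewriting steps must add only constant or linear overhead per edge, so that nothing compounds multiplicatively along a path. Item 2 of Prop.~\ref{prop:SizeNotGF} triples sizes, so I must check that it is applied only to $\gamma^\pm_F$ from the original SSA and never to an accumulated $w_F$. That holds because $\NWSC_F,\SNC_F$ enter \eqref{eq:ProgAxD},\eqref{eq:ProgAxE} only positively, via item 3.
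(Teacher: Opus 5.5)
Your proposal is correct and follows essentially the same route as the paper's proof. The shared steps are:
\begin{itemize}
\item forgetting the NLE fluents in dependency order;
\item bounding the condition sizes of the five resulting formulas via Prop.~\ref{prop:SizeNotGF}, which yields $w_{F'} \le w + \sum_{F\to F'}(w_F+4k+6a+10)$;
\item unrolling this recurrence by counting at most $2^{d+1}$ paths per source node, which gives the bound $4\cdot 2^{d+1}\, l\,(w+4k+6a+10)$.
\end{itemize}
Your explicit check that item~2 of Prop.~\ref{prop:SizeNotGF} (the tripling step) is applied only to the original $\gamma^\pm_F$, and never to accumulated conditions, is left implicit in the paper. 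It is exactly what keeps the growth additive along paths.
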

\begin{proof}
The proof for Theorem 3 in \cite{DBLP:conf/ijcai/0002C24} makes an inductive argument that it is possible to iteratively forget fluents in $\NLEF(\alpha)$ of decreasing fluent depth while maintaining the required form. Looking at an arbitrary $F$, this means that we forget all its predecessors $F^*$ before forgetting $F$ itself. By induction assumption, none of the $F^*$ have any predecessors to consider, and there is a single semi-definitional axiom of the form
\begin{align*}
\Delta_F = \left( \NWSC_F(\vec{x}) \lor F(\vec{x},S_0) \right) \land
\left( \SNC_F(\vec{x}) \lor \lnot F(\vec{x},S_0) \right)
\end{align*}
in the initial KB for each of them (let $\Delta_F$ denote the axiom for $F$ henceforth). Hence, to forget $F$, one only needs to consider $\Delta_F \land \D^F_{ss}[\alpha,S_0]$, which by Prop. 6 amounts to replacing the original $\Delta_F$ by the following (parameters of $\gamma^{\pm}_{F}$ are omitted, expressions in parentheses are explained further below):
\begin{subequations}
  \begin{align}
    &\tag{\ref{eq:ProgAxC}} \NWSC_F(\vec{x}) \lor \SNC_F(\vec{x}) && \\
    &\tag{\ref{eq:ProgAxA}}   \lnot \gamma_F^+ \lor F(\vec{x},S_\alpha) && (3k + 4)\\
    &\tag{\ref{eq:ProgAxB}} \lnot \gamma_F^- \lor \lnot F(\vec{x},S_\alpha) && (3k + 5)\\
    &\tag{\ref{eq:ProgAxD}} \gamma_F^+ \lor \lnot F(\vec{x},S_\alpha) \lor \SNC_F(\vec{x}) && (w_F + k + 4)\\
    &\tag{\ref{eq:ProgAxE}} \NWSC_F(\vec{x}) \lor \gamma_F^- \lor F(\vec{x},S_\alpha) && (w_F + k + 3)
  \end{align}
\end{subequations}
Formula \eqref{eq:ProgAxC} will remain unchanged when forgetting subsequent fluents (recall that $\NWSC_F$ and $\SNC_F$ do not mention any NLE fluents). By definition, there are at most two other NLE fluents $F'_+,F'_-$  in the dependency graph such that $F \rightarrow F'_+$ and $F \rightarrow F'_-$, where $F'_+$ only occurs in $\gamma_F^+$, and $F'_-$ only in $\gamma_F^-$.
Formulas \eqref{eq:ProgAxA} and \eqref{eq:ProgAxD} will subsequently be incorporated into the new KB axiom $\Delta_{F'_+}$, and formulas \eqref{eq:ProgAxC} and \eqref{eq:ProgAxE} into the new axiom $\Delta_{F'_-}$.

We now estimate the condition size of $\Delta_{F'}$ in terms of the conditions size of $\Delta_F$ for $F \rightarrow F'$.
For this purpose, let $w_F$ denote the condition size of the {\em updated} $\Delta_F$ for any $F$, after its predecessors have been forgotten, but before forgetting $F$ itself.
Furthermore, let $k$ be the maximal size of an SSA for any NLE fluent, and $w$ be the maximal condition size of any $\Delta_F$ in the original KB.
Using Prop.~\ref{prop:SizeNotGF}, this gives us upper bounds for the condition sizes of formulas \eqref{eq:ProgAxA}--\eqref{eq:ProgAxE} as stated above in parentheses.
Observe that the sum of condition sizes of \eqref{eq:ProgAxA} and \eqref{eq:ProgAxD} for $F'_+$ is the same as the sum of condition sizes for \eqref{eq:ProgAxB} and \eqref{eq:ProgAxE} for $F'_-$, namely $w_F + 4k + 8$.
Also by Prop.~\ref{prop:SizeNotGF}, converting each of them to semi-definitional form adds at most $(3a+1)$ to their individual condition size, where $a$ is the maximal arity among all fluents.
The original $\Delta_{F'}$ already have the right form, and their condition size is bounded by $w$.
Putting these together, by Prop.~\ref{prop:CondSizeAnd} we have that the condition size for the updated $\Delta_{F'}$ is bounded according to
\begin{align}
w_{F'} &\leq w + \sum_{F \rightarrow F'} (w_F + 4k + 6a + 10). \tag{\ref{eq:SizeRecurrence}}
\end{align}
Note that this bound also holds for fluents without predecessors in the dependency graph; in this case, the sum is empty and only $w$ remains.

For determining the size of the resulting KB, note that formulas of the form \eqref{eq:ProgAxC} are added for all NLE fluents.
Formulas of the form \eqref{eq:ProgAxB}--\eqref{eq:ProgAxE} are deleted in case that the $\gamma_F^\pm$ mention other NLE fluents $F'$, but kept if they only mention LE fluents.
An upper bound is hence given by the sum of sizes of \eqref{eq:ProgAxC}--\eqref{eq:ProgAxE} over {\em all} NLE fluents $F$.
Observe that the total size of occurrences of $\gamma_F^\pm$, $F(\vec{x},S_\alpha)$ atoms, and boolean connectives is less than the size of $\D_{ss}^F[\alpha,S_0]$ when using the representation of Prop. 6,
summing up to less than $\size{\D_{ss}[\alpha,S_0]}$ over all NLE fluents.
This leaves two occurrences of $\NWSC_F(\vec{x})$ and two occurrences of $\SNC_F(\vec{x})$, each of size $\leq w_F$, summing up to
\begin{align*}
S = 4 \sum_{F\in\NLEF(\alpha)} w_F.
\end{align*}
To determine the sum of all $w_F$, we use the intuition that the linear 
recurrence in \eqref{eq:SizeRecurrence} can be unrolled along all source-to-node
paths.
Intuitively, each node starts with a $w_F$ ``weight'' of $w$.
For each child, a node $F$ adds $w_F + 4k + 6a + 10$ to each of its children.
This contribution of $F$ is added multiple times, namely for each path starting in $F$.
Because the out-degree is bounded by 2, the number of distinct paths starting
in $F$ is $\leq 2^{d+1}$, where $d$ is the depth of the dependency graph.
If the number of NLE fluents is $l$, we hence get $S \leq 4 \cdot 2^{d+1} \cdot l \cdot (w + 4k + 6a + 10)$.
The claim follows from the facts that $a$ is a constant, $l \cdot w \in O(\size{\D_{S_0}})$, and $l \cdot k \in O(\size{\D_{ss}})$.
\end{proof}

\subsection*{Proof of Theorem~\ref{thm:fo2nrclosed}}

To prove Theorem~\ref{thm:fo2nrclosed}, we need the following lemma.

\begin{lemma}
\label{lemma:fotwobatnr}
    If $T$ is an $\FOtwo$ theory and semi-definitional wrt $P$, then $\forget(T,P)$ can be expressed in $\FOtwo$.
\end{lemma}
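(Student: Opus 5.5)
The plan is to apply Thm.~\ref{thm:semidefforget} directly and check that each conjunct it produces lies in $\FOtwo$. Split $T$ into $T'$, the sentences not mentioning $P$, and the semi-definitional sentences, each of the form $\forall\vec{x}.\,\psi(\vec{x}) \limp P(\vec{x})$ or $\forall\vec{x}.\,P(\vec{x}) \limp \phi(\vec{x})$. By Thm.~\ref{thm:semidefforget},
$\forget(T,P) \Leftrightarrow T' \land \band_{\psi\in\mathrm{WSC}_P,\phi\in\mathrm{SNC}_P} \forall\vec{x}.\,\psi(\vec{x}) \limp \phi(\vec{x})$.
Since $T'\subseteq T$, it is in $\FOtwo$, and the conjunction is finite because $T$ is finite. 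So it only remains to show that each $\forall\vec{x}.\,\psi(\vec{x})\limp\phi(\vec{x})$ is an $\FOtwo$ sentence.

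First, I fix the variable conventions. Because $P$ has arity at most two, the argument tuple $\vec{x}$ of $P$ in a semi-definitional sentence is either $x$ or $x,y$. Here I rely on the paper's convention that, in $\FOtwo$, $\vec{x}$ consists of $x$ or of $x$ and $y$. Since $\psi\limp P(\vec{x})$ is an $\FOtwo$ sentence, its subformula $\psi(\vec{x})$ uses only the variables $x,y$, and its free variables are among $\vec{x}$. The same holds for $\phi(\vec{x})$.

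Next, I handle possible mismatches between sentences. Different semi-definitional sentences might use different names for the arguments of $P$, for instance $P(x,y)$ in one and $P(y,x)$ in another. Because $\FOtwo$ is closed under renaming variables by a permutation of $\{x,y\}$, I first normalize every semi-definitional sentence so that $P$ always appears as $P(x)$ (unary case) or $P(x,y)$ (binary case). Theorem~\ref{thm:semidefforget} presupposes this common parameter tuple anyway.

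With the tuples aligned, $\psi(\vec{x})\limp\phi(\vec{x})$ is built by one implication from two formulas, each using only $x$ and $y$. The universal closure over $\vec{x}\subseteq\{x,y\}$ therefore again uses only $x$ and $y$, so it is in $\FOtwo$. Hence $\forget(T,P)$ is equivalent to a finite $\FOtwo$ theory.

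The main obstacle is the variable-alignment step, namely making sure the matched conditions $\psi$ and $\phi$ share the same free-variable tuple without introducing a third variable. Renaming within $\{x,y\}$ handles this. If some occurrence of $P$ instead had a constant or a repeated variable as an argument, e.g.\ $P(x,x)$, I would first rewrite it into the canonical form $P(x,y)$. The trick from the proof of Lemma~\ref{prop:fo2goodformproperties} does this: add equality guards such as $y=x$ or $x=A$, and use the requantification trick, e.g.\ $\exists x\lnot\psi(x)$, to stay within two variables.
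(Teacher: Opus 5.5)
Your proposal is correct and takes the same route as the paper: apply Thm.~\ref{thm:semidefforget}, then observe that $T'$ and every $\forall\vec{x}.\,\psi(\vec{x})\limp\phi(\vec{x})$ use only the variables $x,y$. Your renaming to a common argument tuple, and your handling of constants or repeated variables, is extra care the paper leaves implicit, since its definition of semi-definitional already fixes the form $P(\vec{x})$.
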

The proof is directly by Thm.~5. $\forget(T,P)$ $\Leftrightarrow$ $ T' \land \band_{\psi \in \mathrm{WSC}_P, \phi \in \mathrm{SNC}_P} \all
\vec{x}.\psi(\vec{x}) \supset \phi(\vec{x})$. Since $T'$, $\psi$, and $\phi$ can mention at most two variables $x,y$, $\forget(T,P) \in \FOtwo$.

\begin{theoremrestated}[Restatement of Theorem~\ref{thm:fo2nrclosed}]
Given a BAT $\D$ that is BAT-NR wrt ground action $\alpha$, if $\D$ is an \FOtwo-BAT wrt $\alpha$, then the progression of $\D_0$ wrt $\D,\alpha$ can be expressed in $\FOtwo$.
\end{theoremrestated}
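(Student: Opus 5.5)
The plan follows the two-stage progression method of Liu and Lakemeyer for BAT-NR: first forget the lifting predicates of all fluents in $\NLEF(\alpha)$, then forget the remaining atoms over fluents in $\LEF(\alpha)$ via Eq.~\eqref{eq:forgetlocal}. I would show that each stage preserves membership in $\FOtwo$. Throughout, the lifting operator $\uparrow S_0$ and the substitution $(S_0/S_\alpha)$ only replace atoms by atoms, or one constant by another, and so leave the variables of a formula unchanged. The initial theory $(\D_{S_0}\cup\D_{ss}[\alpha,S_0])\uparrow S_0$ is in $\FOtwo$ by the definition of an $\FOtwo$-BAT.

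\textbf{Stage 1.} For each $F\in\NLEF(\alpha)$, I rewrite $\D^F_{ss}[\alpha,S_0]$ into the four formulas of Prop.~\ref{prop:ssatrans}. These only recombine the subformulas $\gamma^\pm_F(\vec{x},\alpha,S_0)$ and the atoms $F(\vec{x},\cdot)$, so they stay in $\FOtwo$, with $\vec{x}$ among $x,y$. Formulas \eqref{eq:ssa1} and \eqref{eq:ssa2} are semi-definitional wrt $F(\vec{x},S_0)$, and \eqref{eq:ssa3}, \eqref{eq:ssa4} do not mention it. Since $\D_{S_0}$ is semi-definitional wrt the NLE fluents, the whole lifted theory is semi-definitional wrt the lifting predicate $P_F$.

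Lemma~\ref{lemma:fotwobatnr} then gives $\forget(T,P_F)\in\FOtwo$, and I iterate over the NLE fluents. The point to check is that after forgetting $P_F$ the theory remains $\FOtwo$ and semi-definitional wrt every other $P_{F'}$ with $F'\in\NLEF(\alpha)$. For this I use normality: the $\gamma^\pm_F$ mention only LE fluents, so the new implications $\psi\supset\phi$ do not introduce occurrences of other NLE lifting predicates beyond those already in semi-definitional position. Where a semi-definitional condition of one NLE fluent mentions another NLE fluent, I would handle it exactly as in the proof of Thm.~\ref{thm:SizeNormal}, where the graph bypassing keeps every remaining edge in the form condition $\supset$ condition. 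Each new edge is a single implication $\forall\vec{x}.\,\psi(\vec{x})\supset\phi(\vec{x})$ whose parts use only $x,y$, so it stays in $\FOtwo$.

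\textbf{Stage 2.} The result $\Phi$ of Stage 1 is an $\FOtwo$ theory over the remaining predicates, including the lifting predicates of LE fluents and the atoms $F(\vec{x},S_\alpha)$. Forgetting the finitely many ground atoms of $\Omega(S_0)$ follows Eq.~\eqref{eq:forgetlocal}. Substituting $[\theta]$ replaces each atom $P(\vec{t})$ by the formula \eqref{eq:LEsubstitute}. That formula uses only the terms already in the atom, together with ground terms from $\alpha$, which are constants simulated as in the preliminaries, and truth values. No new variables appear, so $\Phi[\theta]\in\FOtwo$. A finite disjunction over $\theta\in\M(\Omega(S_0))$ of $\FOtwo$ sentences is again an $\FOtwo$ sentence, and the final $(S_0/S_\alpha)$ is harmless.

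I expect the main obstacle to be the bookkeeping in Stage 1. Lemma~\ref{lemma:fotwobatnr} is stated for a single predicate, so I must confirm that iterating it over the NLE fluents is legitimate. That requires the intermediate theories to remain semi-definitional for the fluents not yet forgotten, and the variable vectors $\vec{x}$ used for different fluents to be renamable to $x,y$ consistently when $\psi$ and $\phi$ are combined. I would handle the renaming by noting that both conditions of a pair share the free variables of the atom $P(\vec{x})$, which are at most two.
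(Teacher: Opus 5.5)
Your proposal is correct and follows the paper's own proof. The paper likewise forgets the NLE lifting predicates one at a time via Thm.~\ref{thm:semidefforget}, observing (Lemma~\ref{lemma:fotwobatnr}) that the new implications $\forall\vec{x}.\,\psi(\vec{x})\supset\phi(\vec{x})$ introduce no new variables, and then handles the LE atoms through Eq.~\eqref{eq:forgetlocal} exactly as in the $\FOtwo$-BAT-LE case; your iteration and renaming bookkeeping just makes explicit what the paper leaves implicit. The one step you take on trust, as the paper's proof of Thm.~\ref{thm:SizeNormal} also does, is that the whole lifted theory is semi-definitional wrt $P_F$: this additionally requires that the effect conditions of LE fluents do not mention NLE fluents, a restriction the Liu--Lakemeyer method needs but the paper's stated definition of normal actions does not spell out.
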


\begin{proof}
 The result is a consequence of Lemma~\ref{lemma:fotwobatnr}. Since for normal actions, their FO progression is computed by applying Thm.~5 when forgetting non-local-effect fluents, the FO progression is also within $\FOtwo$ if we apply Lemma~\ref{lemma:fotwobatnr}. Besides, for local-effect fluents,  forgetting them via Eq.~(4) leads to a theory that is still in $\FOtwo$ by Thm.~17; this completes the proof.
\end{proof}

\subsection*{Proof of Theorem~\ref{thm:fo2acclosed}}

\begin{theoremrestated}[Restatement of Theorem~\ref{thm:fo2acclosed}]
Given a BAT $\D$ that is BAT-AC wrt ground action $\alpha$, if $\D$ is an $\FOtwo$-BAT wrt $\alpha$, then the progression of $\D_0$ wrt $\D,\alpha$ can be expressed in $\FOtwo$.
\end{theoremrestated}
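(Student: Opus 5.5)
The plan is to follow the recursive forgetting procedure of \cite{DBLP:conf/ijcai/0002C24} for BAT-AC, as recalled in the proof of Thm.~\ref{thm:acyclicSizeComplexity}, and to show by induction along the dependency DAG that every intermediate theory stays in $\FOtwo$. The progression is obtained in two phases. First, the lifting predicates of all $F\in\NLEF(\alpha)$ are forgotten, processing fluents so that all predecessors of $F$ are forgotten before $F$. Second, the remaining LE atoms in $\Omega(S_0)$ are forgotten via Eq.~\eqref{eq:forgetlocal}. The second phase preserves $\FOtwo$ by Thm.~\ref{thm:fo2leclosed}: the substitution \eqref{eq:LEsubstitute} only introduces equalities between existing terms and constants, and the disjunction of $\FOtwo$ theories is in $\FOtwo$. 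So the real work lies in the first phase.

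For the first phase I will use the following invariant. Before forgetting $F$, the current theory is an $\FOtwo$ theory. In it, all sentences mentioning $F(\cdot,S_0)$ have been combined into a single $\FOtwo$ axiom $\Delta_F$ that is semi-definitional wrt $F$, and no such sentence mentions any other unforgotten NLE fluent. The combination into one axiom uses Prop.~\ref{prop:CondSizeAnd}, and merging conditions by disjunction or conjunction does not add variables.

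Initially the invariant holds. $\D_{S_0}$ is separably semi-definitional, and the instantiated SSA $\D^F_{ss}[\alpha,S_0]$ is in $\FOtwo$ by assumption. Its rewriting into \eqref{eq:ssa1}--\eqref{eq:ssa4} of Prop.~\ref{prop:ssatrans} is purely propositional over the same subformulas, so it stays in $\FOtwo$.

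For the inductive step, forgetting $F$ applies Lemma~\ref{lemma:fotwobatnr} to $\Delta_F\land\D^F_{ss}[\alpha,S_0]$. This yields \eqref{eq:ProgAxC}--\eqref{eq:ProgAxE}, which are Boolean combinations of the $\FOtwo$ formulas $\NWSC_F,\SNC_F,\gamma^\pm_F,F(\vec{x},S_\alpha)$ over the variables $\vec{x}\subseteq\{x,y\}$. Hence they are in $\FOtwo$.

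Now let $F'_+$ be the NLE successor occurring in $\gamma^+_F$ and $F'_-$ the one occurring in $\gamma^-_F$. Since $\gamma^+_F$ is in good form wrt $F'_+$, Lemma~\ref{prop:fo2goodformproperties}(2) makes $\lnot\gamma^+_F$ a good-form $\FOtwo$ formula. Disjoining it with a formula not mentioning $F'_+$ preserves good form by Prop.~\ref{prop:SizeNotGF}(3), and that rewriting is only distribution, so it stays in $\FOtwo$. The same argument handles \eqref{eq:ProgAxD}, and symmetrically \eqref{eq:ProgAxB} and \eqref{eq:ProgAxE} for $F'_-$. Lemma~\ref{prop:fo2goodformproperties}(1) then rewrites each of these into an $\FOtwo$ formula semi-definitional wrt $F'_\pm$. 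Acyclicity and the ``at most two successors'' condition guarantee that each new formula mentions only one unforgotten NLE fluent. The invariant is therefore restored for $F'_\pm$.

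The main obstacle is the appeal to Lemma~\ref{prop:fo2goodformproperties}(1), because the argument tuple $\vec{t}$ of $F'$ may mix constants and the variables $x,y$. The generic rewriting \eqref{eq:goodformtranssemiuniversally} introduces fresh variables $\vec{y}$ and would leave $\FOtwo$. One has to use instead the case analysis given after Lemma~\ref{prop:fo2goodformproperties}, which reuses $x,y$ by requantifying inside the condition. I will check that this case analysis also covers argument orders such as $P(y,x)$ and $P(x,x)$, swapping variable names where needed; the case $P(x,x)$ can be handled as $x=y\land\psi(x)$ with no new variable. Constants and $S_\alpha$ are simulated by unary predicates as discussed at the start of this section.
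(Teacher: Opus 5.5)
Your proposal is correct and follows the paper's proof: the same DAG-ordered forgetting as in the proof of Theorem~\ref{thm:acyclicSizeComplexity}, with Lemma~\ref{lemma:fotwobatnr} at each forgetting step, Lemma~\ref{prop:fo2goodformproperties} to rewrite the regenerated formulas into $\FOtwo$ semi-definitional form wrt the successor fluent, and Theorem~\ref{thm:fo2leclosed} for the final local-effect phase. You additionally spell out the invariant, and you flag argument patterns such as $P(y,x)$ and $P(x,x)$ that the paper's case list leaves implicit; in the $P(x,x)$ case the sufficient condition should read $x=y\land\lnot\psi(x)$.
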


\begin{proof}
    The proof is pretty much like the proof of Thm.~13 except that we can now use Lemma 19 to show that the formulas generated when forgetting non-local-effect fluents can be rewritten to a $\FOtwo$ formula that is semi-definitional wrt another non-local-effect fluent. This process can be iterated over all non-local-effect fluents. For local-effect fluents, one can use Thm. 17 to ensure that forgetting them leads to an $\FOtwo$ theory as well.
\end{proof}

\subsection*{Proof of Theorem~\ref{thm:utcnrclosed}}

To prove Theorem~\ref{thm:utcnrclosed}, we need the following lemma.

\begin{lemma}
\label{lemma:utcbatnr}
    If $T$ is an $\UTC$ theory and semi-definitional wrt $P$, then $\forget(T,P)$ can be expressed in $\UTC$.
\end{lemma}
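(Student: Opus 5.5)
The plan mirrors the proof of Lemma~\ref{lemma:fotwobatnr}: apply Thm.~\ref{thm:semidefforget} and then check that every formula in the resulting characterization of $\forget(T,P)$ can be written in universal form with constants. By Thm.~\ref{thm:semidefforget},
\[
\forget(T,P) \Leftrightarrow T' \land \band_{\psi \in \mathrm{WSC}_P,\, \phi \in \mathrm{SNC}_P} \all \vec{x}.\,\psi(\vec{x}) \supset \phi(\vec{x}),
\]
where $T'$ is the set of sentences of $T$ not mentioning $P$. Since $T' \subseteq T$ and $T$ is a $\UTC$ theory, $T'$ is already in $\UTC$. What remains is to show that each combined implication $\all \vec{x}.\,\psi(\vec{x}) \supset \phi(\vec{x})$ is (equivalent to) a $\UTC$ sentence.

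Each semi-definitional occurrence of $P$ in $T$ comes from a sentence of $T$ of the form $\all \vec{x}\all\vec{z}.\,\psi(\vec{x},\vec{z}) \supset P(\vec{x})$ or $\all \vec{x}\all\vec{z}.\,P(\vec{x}) \supset \phi(\vec{x},\vec{z})$, with quantifier-free matrices mentioning only constants. The sufficient condition is therefore really $\exists \vec{z}.\,\psi(\vec{x},\vec{z})$, since a universal quantifier over variables not in $P$'s arguments turns into an existential once it is moved into the antecedent. Symmetrically, the necessary condition is $\all \vec{z}'.\,\phi(\vec{x},\vec{z}')$.

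The central step is that the combination
\[
\all\vec{x}.\,(\exists\vec{z}\,\psi(\vec{x},\vec{z})) \supset (\all\vec{z}'\,\phi(\vec{x},\vec{z}'))
\]
is equivalent to $\all\vec{x}\all\vec{z}\all\vec{z}'.\,\psi(\vec{x},\vec{z}) \supset \phi(\vec{x},\vec{z}')$. This holds after renaming $\vec{z}'$ apart from $\vec{z}$, which is exactly the prenexing used for disjunctions in the justification of Thm.~\ref{thm:utcleclosed}. The result is universal with a quantifier-free matrix and no new function symbols, so it lies in $\UTC$. Combining these pieces, $\forget(T,P)$ is a finite conjunction of $\UTC$ sentences, hence expressible in $\UTC$.

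The main obstacle is making sure the semi-definitional form is compatible with the $\UTC$ restriction on the argument of $P$. If $P$ occurs with constants or repeated variables, as in $P(A,x)$, it is not literally of the form $P(\vec{x})$. Here I would first normalize using the rewriting from Lemma~\ref{prop:utcgoodformproperties}, namely $\all\vec{x}\all\vec{y}.\,\vec{y}=\vec{t} \supset (\cdots \vee P(\vec{y}))$, so that $P$ has distinct variable arguments. The added equalities are quantifier-free and stay inside the matrix, so the argument above still goes through.
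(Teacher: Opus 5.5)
Your proposal is correct and takes the paper's route: apply Theorem~\ref{thm:semidefforget}, note that $T'\subseteq T$ is already universal, and observe that each combined implication $\forall\vec{x}.\,\psi(\vec{x})\supset\phi(\vec{x})$ is a universal sentence with only constants. The paper simply asserts that $\psi$ and $\phi$ are quantifier-free; you additionally handle extra matrix variables (which turn the sufficient condition into $\exists\vec{z}$ and the necessary condition into $\forall\vec{z}'$, removed by renaming apart and prenexing) and the normalization of $P$'s arguments (not strictly needed, given the definition of semi-definitional, but harmless), so yours is a more careful version of the same argument.
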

The proof is directly from Thm.~5. $\forget(T,P)$ $\Leftrightarrow$ $ T' \land \band_{\psi \in \mathrm{WSC}_P, \phi \in \mathrm{SNC}_P} \all \vec{x}.\psi(\vec{x}) \supset \phi(\vec{x})$. Since $T' \in \UTC$ and $\psi,\phi$ are quantifier-free, $\forget(T,P) \in \UTC$.

\begin{theoremrestated}[Restatement of Theorem~\ref{thm:utcnrclosed}]
Given a BAT $\D$ that is BAT-NR wrt ground action $\alpha$, if $\D$ is an $\UTC$-BAT wrt $\alpha$, then the progression of $\D_0$ wrt $\D,\alpha$ can be expressed in $\UTC$.
\end{theoremrestated}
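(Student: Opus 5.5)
The plan follows the two-stage structure of the progression for BAT-NR described after Theorem~\ref{thm:SizeNormal}, and shows that each stage stays inside $\UTC$. Start from the theory $(\D_{S_0} \cup \D_{ss}[\alpha,S_0])\uparrow S_0$. Since $\D$ is a $\UTC$-BAT wrt $\alpha$, this is a finite $\UTC$ theory, because lifting only renames atoms. The first stage forgets the lifting predicates of the fluents in $\NLEF(\alpha)$; the second forgets the remaining ground atoms of $\LEF(\alpha)$ fluents as in Eq.~\eqref{eq:forgetlocal}. At the end, the substitution $(S_0/S_\alpha)$ replaces one constant by another and clearly preserves $\UTC$.

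For the first stage, take the SSAs of NLE fluents in the form of Prop.~\ref{prop:ssatrans}. The instances \eqref{eq:ssa1}--\eqref{eq:ssa4} are universal, since they come from a $\UTC$ theory, and have quantifier-free conditions. Together with $\D_{S_0}$, which is semi-definitional wrt the NLE fluents, the whole theory is semi-definitional wrt each lifting predicate $P_F$ with $F\in\NLEF(\alpha)$. Forgetting one such $P_F$ then goes by Lemma~\ref{lemma:utcbatnr}. The result is $T'$ together with formulas $\forall\vec{x}.\psi(\vec{x})\limp\phi(\vec{x})$, where $\psi,\phi$ are quantifier-free, so it lies in $\UTC$.

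To iterate over all NLE fluents, I must check that the new theory is still semi-definitional wrt the remaining NLE lifting predicates. This is the main point to verify. By normality, the $\gamma^\pm_F$ mention only LE fluents. So the combined conditions $\psi\limp\phi$ produced when forgetting $P_F$ mention no other NLE lifting predicate, except those coming from conditions in $\D_{S_0}$. I expect those to be handled exactly as in \cite{DBLP:conf/ijcai/LiuL09}, since their NR progression already performs this iteration; here I only track that no quantifiers are introduced. The resulting edges are still universal with quantifier-free bodies, so after all NLE predicates are forgotten, the theory is a finite $\UTC$ theory that mentions only LE lifting predicates.

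For the second stage, Eq.~\eqref{eq:forgetlocal} applies the substitution $[\theta]$ of \eqref{eq:LEsubstitute} to this theory. That substitution replaces atoms by quantifier-free formulas over ground terms and equalities, so each disjunct remains in $\UTC$. By the argument for Theorem~\ref{thm:utcleclosed}, the finite disjunction over $\theta\in\M(\Omega(S_0))$ can be rewritten into $\UTC$ by renaming variables apart and pulling the universal quantifiers out. Conjoining $\D_{una}$ is irrelevant, since it is not part of the progression. This yields a $\UTC$ progression.
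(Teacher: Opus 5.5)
Your proposal is correct and follows essentially the same route as the paper: you forget the NLE lifting predicates via Theorem~\ref{thm:semidefforget}/Lemma~\ref{lemma:utcbatnr}, since combining the conditions keeps everything universal, and then forget the LE atoms via Eq.~\eqref{eq:forgetlocal}, using the variable-renaming argument of Theorem~\ref{thm:utcleclosed} to keep the disjunction in $\UTC$. For the step that the NLE forgetting can be iterated, you defer to Liu and Lakemeyer's normal-action progression, which is the same reliance the paper's own proof makes.
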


\begin{proof}
It is a consequence of Lemma~\ref{lemma:utcbatnr}. Since for normal actions, their FO progression is computed by applying Thm.~5 when forgetting non-local-effect fluents, the FO progression is also within $\UTC$ if we apply Lemma~\ref{lemma:utcbatnr}. Besides, for local-effect fluents, forgetting them by Eq.~(4) leads to a theory that is still in $\UTC$ by Thm.~22; this completes the proof.
\end{proof}
}

\section*{Acknowledgements}
Daxin is funded by the National Natural Science Foundation of China (NSFC No. 62506152) and the Fundamental and Interdisciplinary Disciplines Breakthrough Plan of the Ministry of Education of China (No. JYB2025XDXM118).

\bibliographystyle{named}
\bibliography{references}

\end{document}